\newcommand{\A}{\mathcal{A}}
\newcommand{\F}{\mathcal{F}}
\newcommand{\E}{\mathcal{E}}
\newcommand{\EE}{\mathbb{E}}
\newcommand{\T}{\mathcal{T}}
\newcommand{\Var}{\mathrm{Var}}
\newcommand{\X}{\mathcal{X}}
\newcommand{\V}{\mathcal{V}}
\newcommand{\W}{\mathcal{W}}
\newcommand{\G}{\mathcal{G}}
\newcommand{\regret}{\mathrm{Regret}}
\newcommand{\delud}{d_{\mathrm{eluder}}}
\renewcommand{\O}{\mathcal{O}}
\newtheorem{theorem}{Theorem}[section]
\newtheorem{corollary}[theorem]{Corollary}
\newtheorem{lemma}[theorem]{Lemma}
\newtheorem{proposition}[theorem]{Proposition}
\newtheorem{definition}{Definition}[section]
\newtheorem{assumption}{Assumption}[section]
\newcommand{\argmin}{\mathop{\mathrm{argmin}}}
\newcommand{\argmax}{\mathop{\mathrm{argmax}}}
\title{Second Order Bounds for Contextual Bandits with Function Approximation}
\author[1]{Aldo Pacchiano}
\affil[1]{Boston University and Broad Institute of MIT and Harvard}
\affil[ ]{\texttt{pacchian@bu.edu}}
\date{\today}
\begin{document}

\maketitle

\begin{abstract}
Many works have developed no-regret algorithms for contextual bandits with function approximation, where the mean rewards over context-action pairs belong to a function class $\F$. Although there are many approaches to this problem, algorithms based on the principle of optimism, such as optimistic least squares have gained in importance. The regret of optimistic least squares scales as $\widetilde{\O}\left(\sqrt{\delud(\F) \log(\F) T }\right)$ where $\delud(\F)$ is a statistical measure of the complexity of the function class $\F$ known as eluder dimension.  Unfortunately, even if the variance of the measurement noise of the rewards at time $t$ equals $\sigma_t^2$ and these are close to zero, the optimistic least squares algorithm’s regret scales with $\sqrt{T}$. In this work we are the first to develop algorithms that satisfy regret bounds for contextual bandits with function approximation of the form $\widetilde{\O}\left( \sigma \sqrt{\log(\F)\delud(\F) T } + \delud(\F) \cdot \log(|\F|)\right) $ when the variances are unknown and satisfy $\sigma_t^2 = \sigma$ for all $t$ and $\widetilde{\O}\left( \delud(\F)\sqrt{\log(\F)\sum_{t=1}^T \sigma_t^2  } + \delud(\F) \cdot \log(|\F|)\right) $  when the variances change at every time-step. These bounds generalize existing techniques for deriving second order bounds in contextual linear problems.

\end{abstract}

\section{Introduction}\label{section::introduction}

Modern decision-making algorithms have achieved impressive success in many important problem domains, including robotics ~\cite{kober2013reinforcement,lillicrap2015continuous}, games~\cite{mnih2015human,silver2016mastering}, dialogue systems~\cite{li2016deep}, and online personalization~\cite{agarwal2016making,tewari2017ads}. Problems in these domains are characterized by the interactive nature of the data collection process. For example, to train a robotic agent to perform a desired behavior in an unseen environment, the agent is required to interact with the environment in a way that empowers it to learn about the world, while at the same time learning how to best achieve its objectives. Many models of sequential interaction have been proposed in the literature to capture scenarios such as this. Perhaps the most basic one is the multi-armed bandit model~\cite{thompson1933likelihood,lai85asymptotically,auer02finitetime,lattimore_szepesvari_2020}, where it is assumed a learner has access to $K \in \mathbb{N}$ arms (actions), such that when playing any of these results in a random reward. Typically, the learner's objective is to select actions, and observe rewards in order to learn which arm produces the highest mean reward value. Algorithms for the multi-armed bandit model can be used to solve problems such as selecting a treatment that in expectation over the population achieves the best expected success. 

Deploying an algorithm designed for the multi-armed bandit setting may be suboptimal for applications where personalized policies are desirable, for example, when we would like to design a treatment regime that maximizes the expected success rate conditioned on an individual's characteristics. This situation arises in many different scenarios, from medical trials~\cite{villar2015multi,aziz2021multi}, to education~\cite{erraqabi2016rewards}, and recommendation systems~\cite{li2010contextual} .  In many of these decision-making scenarios, it is often advantageous to consider contextual information when making decisions. This recognition has sparked a growing interest in studying adaptive learning algorithms in the setting of contextual bandits~\cite{langford2007epoch,li2010contextual,agrawal2013thompson} and reinforcement learning (RL)~\cite{sutton1992introduction}. 

In the contextual bandit model, a learner interacts with the world in a sequential manner. At the start of round $t \in\mathbb{N}$ the learner receives a context $x_t \in \X$, for example in the form of user or patient features. The learner then selects an action to play $a_t \in \A$, representing for example a medical treatment, and then observes a reward $r_t \in \mathbb{R}$ that depends on the context $x_t$, the action $a_t$ and may be random. For example $r_t$ may be the random binary outcome of a medical treatment $a_t$ on a specific patient $x_t$. The study of contextual bandit scenarios has produced a rich literature. Many aspects of the contextual bandit model have been explored, such as regret bounds under adversarial rewards~\cite{auer2002nonstochastic,lattimore_szepesvari_2020,neu2020efficient}, learning with offline data~\cite{dudik2012sample}, the development of statistical complexity measures that characterize learnability in this model~\cite{russo2013eluder,foster2021statistical} and others.

The focus of many works, including this one is to flesh out the consequences of different modeling assumptions governing the relationship between the context, the action and the reward; for example by developing algorithms for scenarios where the reward is a linear function of an embedding of the context and action pair~\cite{auer2002using,rusmevichientong2010linearly,chu2011contextual,abbasi2011improved}. This has lead to algorithms such as OFUL that can be used to derive bounds for contextual bandit problems with linear rewards \cite{abbasi2011improved}. Other works have considered scenarios that go beyond the linear case, where it is assumed the reward function over context action pair $x_t, a_t$ is realized by an unknown function $f_\star$ belonging to a known function class $\mathcal{F}$ (which can be more complex than linear). Various adaptive learning procedures compatible with generic function approximation have been proposed for contextual bandit problems. Among these, we highlight two significant methods relevant to our discussion; the Optimistic Least Squares algorithm introduced by~\cite{russo2013eluder} and the SquareCB algorithm introduced by~\cite{foster2020beyond}. Both of these methods offer guarantees for cumulative regret. Specifically, the cumulative regret of Optimistic Least Squares scales with factors $\mathcal{O}(\sqrt{\delud(\F)  \log(|\mathcal{F}|) }) $, while the cumulative regret of Square CB scales as $\mathcal{O}\left(\sqrt{|\mathcal{A}| \log(|\mathcal{F}|)  } \right)$, where $\mathcal{A}$ is the set of actions. The eluder dimension\footnote{We formally introduce this quantity in Section~\ref{section::contextual_bandits}. Here we use a simpler notation to avoid confusion.} ($\delud$) is a statistical complexity measure introduced by~\cite{russo2013eluder}, that enables deriving guarantees for adaptive learning algorithms based on the principle of optimism in contextual bandits and reinforcement learning~\cite{li2022understanding,jin2021bellman,osband2014model,chan2021parallelizing}. 

The design of algorithms that can handle rich function approximation scenarios represents a great leap towards making the assumptions governing contextual bandit models more realistic and the algorithms more practical. The concerns addressed by this line of research are focused on the nature of the mean reward function. Nonetheless, they have left open the study of the dependence on the noise $\xi_t = r_t -f_\star(x_t, a_t) $. Intuitively, as the conditional variance of $\xi_t$ decreases, the value of $r_t$ contains more information about the reward function $f_\star$. Algorithms that leverage the scale of the variance to achieve sharp regret bounds are said to satisfy a variance aware or second order bound. 

Different works have considered this research direction and developed variance-dependent bounds for linear and contextual linear bandits ~\cite{kirschner2018information,zhou2021nearly,kim2022improved,zhao2023variance,xu2024noise}. In summary, the sharpest bounds for contextual linear bandits are achieved by the $\mathrm{SAVE}$ Algorithm in~\cite{zhao2023variance} and scale (up to logarithmic factors) as $\mathcal{O}\left( d\sqrt{ \sum_{t=1}^T \sigma_t^2   } \right)$ where $\sigma_t^2$ is the conditional variance of $\xi_t$, the time $t$ measurement noise. %

In the context of function approximation, second order bounds for contextual bandits have been developed in~\cite{zhao2022bandit} under the assumption that the value of the conditional variances $\sigma_t$ is observed. This restrictive assumption has been lifted in more recent work~\cite{wang2024more,wang2024central} under a stronger distributional realizability assumption. In~\cite{wang2024more,wang2024central}, the authors assume realizability of the noise distribution, that is, the existence of a function class that fits not only the mean rewards as a function of context-action pairs, but also the measurement noise. This is a somewhat restrictive assumption since it effectively reduces the set of problems that can be solved to parametric scenarios where the distributional class of the noise is known; something that in practical settings typically means simple scenarios such as gaussian or bernoulli noise.

A recent work~\cite{jia2024does}, published while our paper was under review, removes the assumptions made in~\cite{wang2024more,wang2024central}. Here a summary of their results. When the variances \(\sigma_t\) are revealed with the contexts, they show that for some function classes, any algorithm must incur a regret of \(\Omega\left(  \sqrt{\min(|\A|, \delud) \Lambda  } + \min(\delud, \sqrt{|\A|T}) \right) \), where \(\Lambda = \sum_{t=1}^T \sigma^2\). They also propose an algorithm with an upper bound of \(\mathcal{O}\left( \sqrt{|\A| \Lambda \log(|\F|)  } + \delud\log(|\F|) \right)\). In this setting, our techniques from Section~\ref{section::second_order_known_variance} yield a refined bound of \(\mathcal{O}\left( \sqrt{\delud \Lambda \log(|\F|)  } + \delud\log(|\F|) \right)\) (see Theorem~\ref{theorem::main_theorem_simplified_known_var} for the special case where all variances are equal).  For the setting where variances are not revealed with \(x_t\) and may depend on the action \(a_t\),~\cite{jia2024does} derives the lower bound \(\Omega\left( \min\left( \sqrt{ \delud \Lambda  } + \delud, \sqrt{|\A|T}\right) \right) \). For the unknown fixed-variance case, our bounds match their lower bound (see Theorem~\ref{theorem::main_unkonwn_var_known}). They also present an algorithm achieving an upper bound of \(\mathcal{O}\left( \delud\sqrt{ \Lambda \log(|\F|)} + \delud\log(|\F|) \right)\), which matches our result in Theorem~\ref{theorem::main_unknown_variance_theorem}.

\paragraph{Contributions.} In this work we present second order bounds for contextual bandit problems under a mean reward realizability assumption. The techniques we develop are inspired by previous works on variance aware linear bandits such as~\cite{zhao2023variance}, and rely on an uncertainty filtered multi-scale least squares procedure. We are able to make the connection to general function approximation by refining existing techniques to prove eluder dimension regret bounds such as those presented in~\cite{russo2013eluder,chan2021parallelizing,pacchiano2024experiment}. These techniques should be easily extended to the setting of reinforcement learning and beyond, thus unlocking an important area of research. The sharpest bounds we develop in this work (satisfied by the same algorithm) have the form $\mathcal{O}\left( \delud\sqrt{ \log(|\F|)\sum_{t=1}^T \sigma_t^2  } + \delud  \cdot \log(|\F|)\right)$ when we allow the conditional variances are different during all time-steps, and $\mathcal{O}\left( \sigma \sqrt{ \delud \log(|\F|) T }  + \delud  \cdot \log(|\F|)\right) $ when $\sigma_t = \sigma$ for all $t$. Although it is likely our bounds are not the sharpest in the case where the variances may be different through time, since eluder dimension bounds as in~\cite{russo2013eluder} suggest the dominating term in the optimal bound should scale as  $\mathcal{O}\left( \sqrt{\delud \log(|\F|) \sum_{t=1}^T \sigma_t^2  } \right)$, we believe a sharper analysis of our main algorithm (Algorithm~\ref{alg:contextual_unknown_variance}) might be sufficient to prove such a result.

\section{Problem Definition}\label{section::contextual_bandits}

In this section we consider the scenario of contextual bandits, where at time $t$ the learner receives a context $x_t \in \X$ belonging to a context set $\X$, decides to take an action $a_t \in \A$ and observes a reward of the form $r_t$ such that $\mathbb{E}_t[r_t ] = f_\star(x_t, a_t) $ where it is assumed that $f_\star  \in \F$ for $\F$ a known class of functions with domain $\X \times \A$ (see Assumption~\ref{assumption::realizability}). Throughout this section we use the notation $r_t = f_\star(x_t, a_t) + \xi_t$ so that the conditional expectation of $\xi_t$ satisfies $\mathbb{E}_t[\xi_t] = 0$. Throughout this work we will use the notation $\sigma_t^2 =\Var_t(\xi_t) $ to denote the time $t$ conditional variance of the noise. We'll assume the random variables $r_t$ are bounded by a known parameter $B  > 0$ with probability one. 

The objetive of this work is to design algorithms with sublinear regret. Regret is a measure of performance defined in the realizable contextual scenario studied in this work as the cumulative difference between the best expected reward the learner may have achieved at each of the contexts it interacted with and the expected reward of the actions played.
\begin{align*}
      \regret(T) &= \sum_{t=1}^T \max_{a \in \A} f_\star(x_t, a) - f_\star(x_t, a_t)
\end{align*}
The objective is to design algorithms with regret scaling sublinearly with the time horizon $T$.

\begin{assumption}[Realizability]\label{assumption::realizability}
There exists a (known) function class $\F : \X \times \A \rightarrow \mathbb{R}$ such that $\mathbb{E}_t[r_t] = f_\star(x_t,a_t) $ for all $t \in \mathbb{N}$.
\end{assumption}

\begin{assumption}[Boundedness]\label{assumption::boundedness}
There exists a (known) constant $B> 0$ such that $|r_\ell|, |\xi_\ell| \leq B$ and $\max_{x \in \X, a\in\A} |f(x,a) |\leq B$ and $\max_{x \in \X, a\in\A} |f(x,a) -f'(x,a) |\leq B$ for all $f ,f'\in \F$ and all $\ell \in \mathbb{N}$.
\end{assumption}

The sample complexity analysis of our algorithms will rely on a combinatorial notion of statistical complexity of a scalar function class known as Eluder Dimension~\cite{russo2013eluder}. We reproduce the necessary definitions here for completeness. 

\begin{definition}($\epsilon-$dependence) Let $\mathcal{G}$ be a scalar function class with domain $\mathcal{Z}$ and $\epsilon > 0$. An element $z \in \mathcal{Z}$ is $\epsilon-$dependent on $\{z_1, \cdots, z_n\} \subseteq \mathcal{Z}$ w.r.t. $\mathcal{G}$ if any pair of functions $g, g' \in \mathcal{G}$ satisfying $\sqrt{ \sum_{i=1}^n (g(z_i) - g'(z_i))^2 } \leq \epsilon$ also satisfies $g(z) - g'(z) \leq \epsilon$. Furthermore,  $z \in \mathcal{Z}$ is $\epsilon-$independent of $\{z_1, \cdots, z_n\}$ w.r.t. $\mathcal{G}$ if it is not $\epsilon-$dependent on $\{z_1, \cdots, z_n\}$.
\end{definition}

\begin{definition}($\epsilon$-eluder)\label{definition::eluder_dim}
The $\epsilon-$non monotone eluder dimension $\widebar{\delud}(\mathcal{G},\epsilon)$ of $\mathcal{G}$ is the length of the longest sequence of elements in $\mathcal{Z}$ such that every element is $\epsilon-$independent of its predecessors. Moreover, we define the $\epsilon-$eluder dimension $\delud(\mathcal{G},\epsilon)$ as $\delud(\mathcal{G},\epsilon) = \max_{\epsilon' \geq \epsilon} \widebar{\delud}(\mathcal{G},\epsilon)$. %
\end{definition}

In order to introduce our methods we require some notation. The uncertainty radius function is a mapping $\omega : \X \times \X \times \mathcal{P}(  \F ) \rightarrow \mathbb{R}$ is defined as,
\begin{equation*}
    \omega( x, a, \G ) = \max_{f, f' \in \G} f(x,a) - f'(x,a)
\end{equation*}
for $x \in \X, a \in \A, \G \subseteq \F$. The quantity $\omega(x,a,\G)$ equals the maximum fluctuations in value for the function class $\G$ when evaluated in context $x \in \X$ and action $a \in \A$.  Throughout this work we will use the notation $\Sigma(A, B, \cdots, C)$ to denote the sigma algebra generated by the random variables $ A, B, \cdots, C$.

In this work we design the first algorithm for contextual bandits with function approximation that satisfies a variance dependent regret bound. In this work we extend the optimistic least squares algorithm for contextual bandits with function approximation~\cite{russo2013eluder}. Our main result (simplified) states that,
\begin{theorem}[Simplified]\label{theorem::main_theorem_simplified}
Let $\delta \in (0,1)$. There exists an algorithm that achieves a regret rate of,
\begin{equation*}
\regret(T) \leq \widetilde{\mathcal{O}}\left(  \delud\left(\F, \frac{B}{T}\right)\sqrt{ \left(\sum_{t=1}^T \sigma_t^2 \right)\log\left(  |\F|/ \delta\right) } + B \delud\left(\F, \frac{B}{T}\right) \log(|\F|/\delta)  \right)
\end{equation*}
for all $T \in \mathbb{N}$ with probability at least $1-\delta$. Where $\widetilde{\mathcal{O}}(\cdot)$ hides logarithmic dependencies. 
\end{theorem}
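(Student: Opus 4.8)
The plan is to extend the optimistic least squares framework of~\cite{russo2013eluder} by replacing its fixed-scale confidence set with a variance-sensitive one, and then to push the regret bound through the eluder-dimension potential argument applied at multiple uncertainty scales. The guiding principle is that the conditional variances $\sigma_s^2$ should enter the analysis multiplicatively against squared prediction gaps, so that a Bernstein/Freedman-type concentration replaces the worst-case $\sqrt{T}$ scaling by $\sqrt{\sum_s \sigma_s^2}$.

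First I would construct, for each round $t$, a confidence set $\F_t \subseteq \F$ via a least-squares objective, and show that $f_\star \in \F_t$ for all $t$ simultaneously with probability at least $1-\delta$. The departure from the standard analysis is the concentration used: expanding the loss difference $\sum_{s<t}(f(x_s,a_s)-r_s)^2 - \sum_{s<t}(f_\star(x_s,a_s)-r_s)^2$, the only data-dependent stochastic term is the cross-martingale $\sum_{s<t}(f(x_s,a_s)-f_\star(x_s,a_s))\xi_s$. Applying Freedman's inequality together with a union bound over $\F$ (contributing the $\log(|\F|/\delta)$ factor) controls this term by $\sqrt{\log(|\F|/\delta)\sum_{s<t}(f(x_s,a_s)-f_\star(x_s,a_s))^2\sigma_s^2} + B\log(|\F|/\delta)$. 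Next I would use optimism: choosing $a_t \in \argmax_a \max_{f\in\F_t} f(x_t,a)$ and invoking $f_\star \in \F_t$ yields the standard per-round bound $\max_a f_\star(x_t,a)-f_\star(x_t,a_t) \le \omega(x_t,a_t,\F_t)$, reducing the problem to bounding $\sum_t \omega(x_t,a_t,\F_t)$.

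The bulk of the work lies in the third step. I would peel the rounds into $O(\log T)$ dyadic scales according to the magnitude of $\omega(x_t,a_t,\F_t)$, and on each scale invoke the eluder-dimension potential argument of~\cite{russo2013eluder,chan2021parallelizing}: the number of rounds whose width exceeds $\epsilon$ is at most $\delud(\F,\epsilon)$ times the prevailing confidence radius divided by $\epsilon^2$. In the variance-sensitive setting this radius is itself a variance-weighted quantity, so I would combine the per-scale counts with a Cauchy--Schwarz step over time and over scales to convert the variance-weighted sum $\sum_s(f-f_\star)^2\sigma_s^2$ into a clean $(\sum_t\sigma_t^2)$-type budget, producing the leading term $\delud\sqrt{(\sum_t\sigma_t^2)\log(|\F|/\delta)}$ and the additive $B\delud\log(|\F|/\delta)$ inherited from the lower-order Freedman term. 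Truncating the scales at $\epsilon = B/T$, below which the aggregate contribution is $O(B)$, accounts for the $\delud(\F,B/T)$ appearing in the statement.

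The main obstacle I anticipate is the self-referential coupling between the confidence radius and the widths: the radius that certifies $\F_t$ depends on the variance-weighted sum of squared prediction gaps, while the eluder argument requires those gaps expressed through the widths $\omega$. Decoupling these two quantities is precisely the role of the uncertainty-filtered multi-scale construction, and it demands care in ensuring that, at each scale, the filtering preserves the martingale (predictability) structure required by Freedman's inequality and that the effective sample budget is governed by $\sum_s\sigma_s^2$ rather than by the raw number of rounds. I expect this bookkeeping, rather than any individual inequality, to be the technical heart of the argument, and it is also the point at which the analysis likely loses a factor of $\sqrt{\delud}$ relative to the conjectured optimal rate discussed in the introduction.
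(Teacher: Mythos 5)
Your plan follows the same route as the paper's actual proof of this theorem (which is Theorem~\ref{theorem::main_unknown_variance_theorem}, instantiated by Algorithm~\ref{alg:contextual_unknown_variance}): Freedman-type concentration on the cross-martingale with the variance entering against squared prediction gaps, optimism reducing regret to $\sum_t\omega(x_t,a_t,\G_t')$, dyadic peeling of the rounds by width into bands $(\tau_i,2\tau_i]$ with a separate uncertainty-filtered least-squares fit per band, an eluder counting argument per band, and a Cauchy--Schwarz over the $O(\log T)$ scales to assemble the $\sqrt{\sum_t\sigma_t^2}$ budget. You also correctly identify the self-referential coupling between radius and widths as the crux and the band-filtering as its resolution, and you correctly anticipate the lost factor of $\sqrt{\delud}$.

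There is, however, one concrete missing ingredient. The theorem is for \emph{unknown} variances, yet the confidence radius you write down, $\sqrt{\log(|\F|/\delta)\sum_{s<t}(f(x_s,a_s)-f_\star(x_s,a_s))^2\sigma_s^2}+B\log(|\F|/\delta)$, depends on the $\sigma_s^2$, so the algorithm cannot construct the confidence set $\F_t$ you describe. The paper resolves this in Section~\ref{section::estimating_variance_contextual}: within each width band it forms the empirical surrogate $W_t^{\mathbf{b}_t^{\tau}}=\sum_{\ell}b_\ell^{\tau}\,(r_\ell-f_t^{\mathbf{b}_t^{\tau}}(x_\ell,a_\ell))^2$, the cumulative squared residuals of the band-filtered least-squares fit, and proves (Lemma~\ref{lemma::small_error_variance_estimator}, Corollary~\ref{corollary::variance_cumulative_estimation_coarse}) a two-sided sandwich $\tfrac{2}{3}W_t^{\mathbf{b}_t}-O(B^2\log(t|\F|/\delta))\le\sum_\ell b_\ell\sigma_\ell^2\le 2W_t^{\mathbf{b}_t}+O(B^2\log(t|\F|/\delta))$, which lets the algorithm define the radius in terms of the observable $W_t^{\mathbf{b}_t^{\tau_i}}$ while the analysis converts it back to $\sum_\ell b_\ell^{\tau_i}\sigma_\ell^2$ (Proposition~\ref{proposition::variance_aware_general_least_squares_proposition}). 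The additive $B^2\log$ slack from this estimation step is tolerable precisely because, after multiplying by the eluder count $d/\tau_i^2$ and by $\tau_i$ in the width sum, it only contributes to the lower-order $B\,\delud\log(|\F|/\delta)$ term. Your proposal needs this estimation layer (or an equivalent empirical-Bernstein device) to be executable; without it the first step of your plan is not implementable by any algorithm.
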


\section{Optimistic Least Squares}\label{section::optimistic_least_squares}

The algorithms we propose in this work are based on the optimism principle. This simple yet powerful algorithmic idea is the basis of a celebrated algorithm for contextual bandits with function approximation known as Optimistic Least Squares. Algorithm~\ref{alg:optimistic_least_squares} presents the pseudo-code of the Optimistic Least Squares algorithm.

\begin{algorithm}[H]
    \caption{Optimistic Least Squares }\label{alg:optimistic_least_squares}
    \begin{algorithmic}[1]
    \STATE \textbf{Input:} Function class $\F$, confidence radius functions $\{\beta_t : [0,1] \rightarrow \mathbb{R}\}_{t=1}^\infty$.          
        \FOR{ $t=1, 2, \cdots$}
            \STATE Compute least squares regression 
            \begin{equation}\label{equation::vanilla_least_squares_estimator}
            f_t = \argmin_{f \in \F} \sum_{\ell=1}^{t-1} \left( f(x_\ell, a_\ell) - r_\ell\right)^2.
            \end{equation}
            \STATE Compute confidence set\footnote{By definition we set $\G_1 = \F$.},
            \begin{equation*}
                \G_t = \left \{  f\in\F : \sum_{\ell=1}^{t-1} (f_t(x_\ell, a_\ell) - f(x_\ell, a_\ell))^2 \leq \beta_t(\delta)         \right \}
            \end{equation*}
            \STATE Receive context $x_t$.
            \STATE Compute $U_t(x_t, a) = \max_{f \in \G_t} f(x_t, a) $ for all $a \in \A$. 
            \STATE play $a_t = \argmax_{a \in \A} U_t(x_t, a)$ and receive $r_t = f_\star(x_t, a_t) + \xi_t $.
        \ENDFOR
    \end{algorithmic}
\end{algorithm}

To derive a bound for the optimistic least squares algorithm, we require guarantees for the confidence sets. This is captured by the following Lemma.

\begin{restatable}{lemma}{traditionallsguarantee}[LS guarantee]\label{lemma::new_least_squares_estimator}
Let $\delta \in (0,1)$, $\{ x_t, a_t \}_{t=1}^\infty$ be a sequence of context-action pairs and and $\{r_t\}_{t=1}^\infty$ be a sequence of reward values satisfying $r_t = f_\star(x_t, a_t) + \xi_t$ where $f_\star \in \F$ and $\xi_t$ is conditionally zero mean. Let $f_t = \arg\min_{f \in \mathcal{F}} \sum_{\ell=1}^{t-1}  \left(f(x_\ell, a_\ell) -r_\ell \right)^2$ be the least squares fit. If Assumption~\ref{assumption::boundedness} holds then there is a constant $C > 0$ such that,
\begin{equation*}
\sum_{\ell=1}^{t-1} \left( f_t(x_\ell, a_\ell) - f_\star(x_\ell, a_\ell) \right)^2 \leq \beta_t(\delta)
\end{equation*}
with probability at least $1-\delta$ for all $t \in \mathbb{N}$ where $\beta_t(\delta) = 4C B^2 \log(t\cdot |\F|/\delta)$. %
\end{restatable}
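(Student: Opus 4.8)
The plan is to compare the empirical squared loss of the least squares minimizer $f_t$ against that of the true function $f_\star$ and exploit the optimality of $f_t$. Writing $r_\ell = f_\star(x_\ell,a_\ell)+\xi_\ell$ and setting $\Delta_\ell(f) = f(x_\ell,a_\ell)-f_\star(x_\ell,a_\ell)$, expanding the squares yields the identity
\[
\sum_{\ell=1}^{t-1}\big(f(x_\ell,a_\ell)-r_\ell\big)^2 - \sum_{\ell=1}^{t-1}\big(f_\star(x_\ell,a_\ell)-r_\ell\big)^2 = \sum_{\ell=1}^{t-1}\Delta_\ell(f)^2 - 2\sum_{\ell=1}^{t-1}\xi_\ell\,\Delta_\ell(f).
\]
Because $f_t$ minimizes the empirical loss, the left-hand side is nonpositive when $f=f_t$, so that $\sum_{\ell=1}^{t-1}\Delta_\ell(f_t)^2 \le 2\sum_{\ell=1}^{t-1}\xi_\ell\,\Delta_\ell(f_t)$. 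The entire difficulty is thereby reduced to bounding the cross term $\sum_{\ell}\xi_\ell\,\Delta_\ell(f)$ by a small fraction of $\sum_\ell \Delta_\ell(f)^2$ plus a logarithmic term, \emph{uniformly} over $f\in\F$.

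For a fixed $f\in\F$, consider the natural filtration $\mathcal{H}_\ell = \Sigma\big(x_1,a_1,\xi_1,\ldots,x_{\ell-1},a_{\ell-1},\xi_{\ell-1},x_\ell,a_\ell\big)$. Since $\Delta_\ell(f)$ is $\mathcal{H}_\ell$-measurable and $\EE[\xi_\ell\mid\mathcal{H}_\ell]=0$, the sequence $\xi_\ell\,\Delta_\ell(f)$ is a martingale difference sequence with conditional second moment $\Delta_\ell(f)^2\,\EE[\xi_\ell^2\mid\mathcal{H}_\ell] = \Delta_\ell(f)^2\,\sigma_\ell^2 \le B^2\,\Delta_\ell(f)^2$ and increments bounded by $B^2$ in absolute value, using Assumption~\ref{assumption::boundedness}. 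I would then apply a Freedman/Bernstein-type martingale concentration inequality to obtain, with probability at least $1-\delta'$,
\[
\sum_{\ell=1}^{t-1}\xi_\ell\,\Delta_\ell(f) \le \sqrt{2B^2\Big(\sum_{\ell=1}^{t-1}\Delta_\ell(f)^2\Big)\log(1/\delta')} + \tfrac{2}{3}B^2\log(1/\delta').
\]

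To make this hold simultaneously for the data-dependent choice $f=f_t$ and for all $t$, I would take a union bound over the finite class $\F$ and over time, choosing $\delta' = \delta/\big(|\F|\,t(t+1)\big)$ so that $\log(1/\delta') = \mathcal{O}\big(\log(t|\F|/\delta)\big)$; this is exactly where the $\log|\F|$ factor enters. Abbreviating $S_t=\sum_{\ell=1}^{t-1}\Delta_\ell(f_t)^2$ and $L=\log(t|\F|/\delta)$, I combine $S_t\le 2\sum_\ell\xi_\ell\,\Delta_\ell(f_t)$ with the displayed bound and absorb the square-root term via the elementary inequality $\sqrt{2B^2 S_t L}\le \tfrac14 S_t + 2B^2 L$, giving $S_t \le \tfrac12 S_t + \mathrm{const}\cdot B^2 L$ and hence $S_t \le 4CB^2\log(t|\F|/\delta)$ for a suitable absolute constant $C$, which is the claimed $\beta_t(\delta)$.

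The main obstacle is that $f_t$ is chosen after observing the data, so the cross term cannot be controlled for a single fixed $f$; the uniform control over $\F$, supplied by its finiteness through a union bound, is what makes the argument go through. A secondary subtlety I expect to manage carefully is that the Bernstein variance proxy $\sum_\ell\Delta_\ell(f)^2$ is itself random and data-dependent, which precludes a naive fixed-variance Freedman bound. I would resolve this either by invoking a self-normalized martingale inequality or by a further union bound over a geometric grid of candidate variance levels, at the cost of only an additional logarithmic factor that is absorbed into $L$.
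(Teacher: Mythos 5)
Your proposal is correct and follows essentially the same route as the paper: the optimality-of-$f_t$ basic inequality, a martingale bound on the cross term made uniform over the finite class $\F$ (and over $t$) by a union bound, and absorption of the cross term into the squared error. The only real difference is which form of Freedman's inequality is invoked: the paper uses the linearized form $\sum_\ell Z_\ell \le \eta\sum_\ell \EE_\ell[Z_\ell^2] + C\log(t/\delta')/\eta$ with the fixed choice $\eta = 1/(4B^2)$, which sidesteps the random-variance subtlety you correctly flag, whereas your square-root Bernstein form genuinely requires the self-normalized or variance-peeling fix you mention.
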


The proof of Lemma~\ref{lemma::new_least_squares_estimator} can be found in Section~\ref{section::proofs_contextual}. This result allows provides us with the tools to justify the choice of confidence sets in Algorithm~\ref{alg:optimistic_least_squares}. A simple corollary is,
\begin{corollary}\label{corollary::optimism_basic_ls}
Let $\delta \in (0,1)$ and $\beta_t(\delta) = 4CB^2 \log(t\cdot |\F|/\delta)$ as defined in Lemma~\ref{lemma::new_least_squares_estimator}. The confidence sets $\G_t$ satisfy $f_\star \in\G_t$ for all $t \in \mathbb{N}$ simultaneously with probability at least $1-\delta$. Moreover, this property holds, $U_t(x,a_t) \geq \max_{a \in \A} f_\star(x_t, a)$ for all $t \in \mathbb{N}$. 
\end{corollary}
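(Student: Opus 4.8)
The plan is to observe that both conclusions follow almost immediately from Lemma~\ref{lemma::new_least_squares_estimator} together with the definitions in Algorithm~\ref{alg:optimistic_least_squares}, so the argument is essentially bookkeeping and requires no new statistical estimate.

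First I would invoke Lemma~\ref{lemma::new_least_squares_estimator} with the stated choice $\beta_t(\delta) = 4CB^2\log(t\cdot|\F|/\delta)$. The lemma guarantees that, on an event $\E$ of probability at least $1-\delta$, the inequality
\[
\sum_{\ell=1}^{t-1}\left(f_t(x_\ell,a_\ell) - f_\star(x_\ell,a_\ell)\right)^2 \leq \beta_t(\delta)
\]
holds simultaneously for all $t\in\N$. The key observation is that this is exactly the membership condition defining $\G_t$ evaluated at $f=f_\star$: comparing with
\[
\G_t = \left\{ f\in\F : \sum_{\ell=1}^{t-1}\left(f_t(x_\ell,a_\ell) - f(x_\ell,a_\ell)\right)^2 \leq \beta_t(\delta)\right\},
\]
we see that $\E$ coincides with the event $\{f_\star\in\G_t \text{ for all } t\in\N\}$. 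This establishes the first claim (and for $t=1$ the sum is empty, so $f_\star\in\F=\G_1$ trivially, consistent with the footnote convention).

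Next, I would condition on $\E$ and verify the optimism inequality deterministically. Fix $t$ and recall from the algorithm that $U_t(x_t,a) = \max_{f\in\G_t} f(x_t,a)$ and $a_t = \argmax_{a\in\A} U_t(x_t,a)$. Since $f_\star\in\G_t$ on $\E$, for every action $a\in\A$ the maximum defining $U_t$ dominates the value attained by $f_\star$, i.e. $U_t(x_t,a)\geq f_\star(x_t,a)$. Taking the maximum over $a$ and using that $a_t$ attains the maximum of $U_t(x_t,\cdot)$ yields
\[
U_t(x_t,a_t) = \max_{a\in\A} U_t(x_t,a) \geq \max_{a\in\A} f_\star(x_t,a),
\]
which is the desired property.

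I do not expect a genuine obstacle here: all the statistical difficulty is already absorbed into Lemma~\ref{lemma::new_least_squares_estimator}, and the remaining steps are purely definitional. The only point deserving a little care is the phrase \emph{simultaneously for all $t$}: this uniformity must be supplied by the lemma itself (it is, through the $\log(t)$ factor in $\beta_t$ and an implicit union bound over rounds) rather than reproved in the corollary, so I would be careful to cite the lemma in its all-$t$ form instead of a single-round version.
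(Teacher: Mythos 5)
Your proposal is correct and follows essentially the same route as the paper: apply Lemma~\ref{lemma::new_least_squares_estimator} to conclude $f_\star \in \G_t$ for all $t$ on a probability-$(1-\delta)$ event, then deduce optimism deterministically via $f_\star(x_t,a) \leq \max_{f\in\G_t} f(x_t,a) = U_t(x_t,a) \leq U_t(x_t,a_t)$. Your added remarks on the empty sum at $t=1$ and on citing the lemma in its uniform-in-$t$ form are sound but not needed beyond what the paper already does.
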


\begin{proof}
Lemma~\ref{lemma::new_least_squares_estimator} implies that $f_\star \in \G_t$ for all $t \in \mathbb{N}$ simultaneously with probability at least $1-\delta$. When this occurs, the following sequence of inequalities is satisfied,
\begin{equation*}
    f_\star(x_t,a) \leq \max_{f \in \G_t} f(x_t,a) = U_t(x_t, a) \leq U_t(x_t, a_t).  
\end{equation*}
for all $a \in \A$. Thus it holds that $\max_{a \in \A} f_\star(x_t, a) \leq U_t(x_t, a_t)$. 
\end{proof}

In order to relate the scale of these confidence sets with the algorithm's regret we need to tie these values to the statistical capacity of the function class $\F$. This can be captured by its eluder dimension (see Definition~\ref{definition::eluder_dim}). This is done via Lemma~\ref{lemma::standard_eluder_lemma}, a standard result that is crucial in showing an upper bound for the optimistic least squares algorithm regret. This result is a version of Lemma~3 from~\citep{chan2021parallelizing} presented as Lemma~4.3 in~\citep{pacchiano2024experiment} which we reproduce here for readability.

\begin{lemma}\label{lemma::standard_eluder_lemma}
    Let $\F$ be a function class satisfying Assumption~\ref{assumption::boundedness} and with $\epsilon$-eluder dimension $\widebar{\delud}(\F,\epsilon)$ . For all $T \in \mathbb{N}$ and any dataset sequence $\{ \bar{D}_t \}_{t=1}^\infty$ for $\bar{D}_1 = \emptyset$ and $\bar{D}_t = \{( \bar{x}_\ell, \bar{a}_\ell)\}_{\ell=1}^{t-1}$ of context-action pairs, the following inequality on the sum of the uncertainty radii holds, 
    \begin{equation*}
        \sum_{t=1}^T \omega(\bar{x}_t, \bar{a}_t, \bar{D}_t) \leq \mathcal{O}\left(  \min\left(  BT,  \sqrt{\beta_t(\delta)\cdot \widebar\delud(\F, B/T) \cdot T}    + B\widebar{\delud}(\F, B/T)  \right)  \right)
    \end{equation*}
\end{lemma}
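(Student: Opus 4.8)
The plan is to follow the classical counting argument for eluder dimension, adapting the version of~\citep{russo2013eluder,chan2021parallelizing} to the uncertainty-radius formulation used here. Throughout I read $\omega(\bar{x}_t, \bar{a}_t, \bar{D}_t)$ as the width of the confidence set determined by the dataset $\bar{D}_t$ at radius $\beta_t(\delta)$, so that whenever $\omega(\bar{x}_t, \bar{a}_t, \bar{D}_t) > \epsilon$ there are witnesses $f, f' \in \F$ with $f(\bar{x}_t, \bar{a}_t) - f'(\bar{x}_t, \bar{a}_t) > \epsilon$, both lying in the confidence set; being within radius $\beta_t(\delta)$ of the same least squares center, they satisfy $\sum_{(\bar{x}_\ell, \bar{a}_\ell) \in \bar{D}_t}\left(f(\bar{x}_\ell, \bar{a}_\ell) - f'(\bar{x}_\ell, \bar{a}_\ell)\right)^2 \le 4\beta_t(\delta)$. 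The trivial half of the $\min$ is then immediate: Assumption~\ref{assumption::boundedness} gives $\omega(\bar{x}_t, \bar{a}_t, \bar{D}_t) \le B$, hence $\sum_{t=1}^T \omega(\bar{x}_t, \bar{a}_t, \bar{D}_t) \le BT$. The work is to establish the other branch.

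First I would prove the key combinatorial claim: \emph{if $\omega(\bar{x}_t, \bar{a}_t, \bar{D}_t) > \epsilon$, then $(\bar{x}_t, \bar{a}_t)$ is $\epsilon$-dependent on fewer than $4\beta_t(\delta)/\epsilon^2$ disjoint subsequences of $\{(\bar{x}_\ell, \bar{a}_\ell)\}_{\ell < t}$}. Taking the witnesses $f, f'$ above, on any subsequence $S$ for which $(\bar{x}_t, \bar{a}_t)$ is $\epsilon$-dependent (Definition~\ref{definition::eluder_dim}), the gap $f(\bar{x}_t, \bar{a}_t) - f'(\bar{x}_t, \bar{a}_t) > \epsilon$ forces $\sum_{S}(f-f')^2 > \epsilon^2$; disjoint such subsequences each consume more than $\epsilon^2$ of the budget $\sum_{\ell < t}(f-f')^2 \le 4\beta_t(\delta)$, bounding their number. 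I would then convert this into a count of large-width rounds. Fixing $\epsilon$ and processing the rounds with width exceeding $\epsilon$ in increasing order, I maintain disjoint $\epsilon$-independent subsequences (each of length at most $\widebar{\delud}(\F, \epsilon)$ by Definition~\ref{definition::eluder_dim}), adding each point to the first subsequence of which it is $\epsilon$-independent and opening a new one only when it is $\epsilon$-dependent on all current subsequences. Since by the claim a point is $\epsilon$-dependent on fewer than $4\beta_T(\delta)/\epsilon^2$ of them (using monotonicity of $\beta$), the number of subsequences never exceeds $\lceil 4\beta_T(\delta)/\epsilon^2\rceil$, giving
\begin{equation*}
\left|\{ t \le T : \omega(\bar{x}_t, \bar{a}_t, \bar{D}_t) > \epsilon\}\right| \le \left(\frac{4\beta_T(\delta)}{\epsilon^2} + 1\right)\widebar{\delud}(\F, \epsilon).
\end{equation*}

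Finally I would sort the widths as $w_{(1)} \ge \cdots \ge w_{(T)}$ and invert. For $\epsilon \ge B/T$, monotonicity of the eluder dimension gives $\widebar{\delud}(\F,\epsilon) \le d := \widebar{\delud}(\F, B/T)$, so $w_{(k)} > \epsilon$ forces $k \le (4\beta_T(\delta)/\epsilon^2 + 1)d$; letting $\epsilon \uparrow w_{(k)}$ yields $w_{(k)} \le \sqrt{4\beta_T(\delta)\, d/(k-d)}$ for $k > d$. Splitting $\sum_t \omega$ into the widths below $B/T$ (at most $T\cdot B/T = B$), the top $d$ ranks (at most $Bd$ by boundedness), and the tail $\sum_{k>d}\sqrt{4\beta_T(\delta)\, d/(k-d)} \le 2\sqrt{4\beta_T(\delta)\, d\, T}$ via $\sum_{k=1}^T k^{-1/2} \le 2\sqrt{T}$, I obtain $\sum_t \omega(\bar{x}_t, \bar{a}_t, \bar{D}_t) \le \mathcal{O}\big(\sqrt{\beta_T(\delta)\, d\, T} + Bd\big)$. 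Combining with the trivial $BT$ bound produces the claimed $\min$.

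I expect the main obstacle to be the bookkeeping in the counting step: verifying that the greedy construction keeps every subsequence $\epsilon$-independent so that Definition~\ref{definition::eluder_dim} caps its length, and correctly translating the ``dependent on fewer than $4\beta_T(\delta)/\epsilon^2$ subsequences'' claim into a bound on the number of subsequences \emph{opened} rather than present overall. A secondary subtlety is the clean passage from $\widebar{\delud}(\F,\epsilon)$ to $\widebar{\delud}(\F,B/T)$, which is exactly what the definition $\delud(\F,\cdot) = \max_{\epsilon'\ge\epsilon}\widebar{\delud}(\F,\epsilon')$ is designed to handle.
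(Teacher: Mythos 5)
Your proof is correct and is essentially the standard Russo--Van Roy counting argument; the paper does not actually prove Lemma~\ref{lemma::standard_eluder_lemma} (it imports it from \citep{chan2021parallelizing} and \citep{pacchiano2024experiment}), but your two-step strategy --- bounding the number of rounds with width exceeding $\epsilon$ via disjoint $\epsilon$-independent subsequences, then sorting the widths and inverting --- is exactly the template the paper itself follows when proving its sharpened variants, Lemmas~\ref{lemma::variance_aware_fundamental_eluder} and~\ref{lemma::bounding_sum_widths}. The one point you rightly flag, that the statement's $\widebar{\delud}(\F,B/T)$ should really be the monotonized $\delud(\F,B/T)$ for the inversion step to apply at every scale $\epsilon \geq B/T$, is an imprecision in the lemma as stated rather than a gap in your argument.
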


Lemmas~\ref{lemma::new_least_squares_estimator} and~\ref{lemma::standard_eluder_lemma} can be used to prove Algorithm~\ref{alg:optimistic_least_squares} satisfies the following regret guarantee,

\begin{theorem}\label{theorem::main_theorem_optimistic_least_squares}
The regret of the Optimistic Least Squares (Algorithm~\ref{alg:optimistic_least_squares}) with input values $\delta \in (0,1)$ and $\beta_t(\delta) = 4CB^2\log(t|\F|/\delta)$ satisfies, 
\begin{equation*}
    \regret(T) \leq \mathcal{O}\left(   B\sqrt{ \delud(\F, B/T)  \cdot T \cdot \log(T|\F|/\delta)} + B \delud(\F, B/T)    \right) 
\end{equation*}
with probability at least $1-\delta$ for all $T \in \mathbb{N}$ simultaneously.
\end{theorem}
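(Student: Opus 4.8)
The plan is to combine the three ingredients the excerpt has already assembled: the optimism guarantee (Corollary~\ref{corollary::optimism_basic_ls}), the least-squares confidence-set width (Lemma~\ref{lemma::new_least_squares_estimator}), and the eluder-dimension sum bound (Lemma~\ref{lemma::standard_eluder_lemma}). First I would condition on the $1-\delta$ event of Corollary~\ref{corollary::optimism_basic_ls}, on which $f_\star \in \G_t$ and $U_t(x_t,a_t) \geq \max_{a\in\A} f_\star(x_t,a)$ for all $t$. This lets me bound the per-step regret: writing $a_t^\star = \argmax_{a\in\A} f_\star(x_t,a)$, I have
\begin{align*}
f_\star(x_t,a_t^\star) - f_\star(x_t,a_t) \;\leq\; U_t(x_t,a_t) - f_\star(x_t,a_t).
\end{align*}
Since $f_\star \in \G_t$ and $U_t(x_t,a_t) = \max_{f\in\G_t} f(x_t,a_t)$, the right-hand side is at most $\max_{f,f'\in\G_t}\big(f(x_t,a_t)-f'(x_t,a_t)\big) = \omega(x_t,a_t,\G_t)$, the uncertainty radius evaluated on the chosen context-action pair.

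Next I would sum over $t$ to get $\regret(T) \leq \sum_{t=1}^T \omega(x_t,a_t,\G_t)$ and feed this into Lemma~\ref{lemma::standard_eluder_lemma}. The one point requiring care is matching the lemma's hypotheses: Lemma~\ref{lemma::standard_eluder_lemma} is stated for uncertainty radii $\omega(\bar x_t,\bar a_t,\bar D_t)$ defined over a growing \emph{dataset} $\bar D_t$ of past pairs, whereas here the uncertainty is measured against the confidence set $\G_t$. I would reconcile these by noting that $\G_t$ is precisely the set of functions within squared-prediction-error $\beta_t(\delta)$ of $f_t$ on the data $\{(x_\ell,a_\ell)\}_{\ell=1}^{t-1}$, so the width $\omega(x_t,a_t,\G_t)$ is controlled by the width induced by that dataset at confidence level $\beta_t(\delta)$; this is exactly the quantity Lemma~\ref{lemma::standard_eluder_lemma} bounds with $\bar D_t = \{(x_\ell,a_\ell)\}_{\ell=1}^{t-1}$. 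Substituting the lemma's conclusion gives
\begin{align*}
\sum_{t=1}^T \omega(x_t,a_t,\G_t) \;\leq\; \mathcal{O}\!\left(\sqrt{\beta_T(\delta)\cdot\widebar{\delud}(\F,B/T)\cdot T} + B\,\widebar{\delud}(\F,B/T)\right),
\end{align*}
and plugging in $\beta_T(\delta) = 4CB^2\log(T|\F|/\delta)$ produces the leading term $B\sqrt{\delud(\F,B/T)\,T\,\log(T|\F|/\delta)}$ together with the lower-order $B\,\delud(\F,B/T)$ term, which is the claimed bound.

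The main obstacle I anticipate is bookkeeping rather than any deep step: reconciling the monotone versus non-monotone eluder dimension and the $\epsilon$-threshold (the lemma uses $\widebar{\delud}(\F,B/T)$ while the theorem states $\delud(\F,B/T)$, and by Definition~\ref{definition::eluder_dim} the latter dominates the former), and making sure the union-bound probabilities and the $\log t$ versus $\log T$ factors inside $\beta_t$ are handled so that the final bound holds for all $T$ simultaneously. I would also be slightly careful that Lemma~\ref{lemma::standard_eluder_lemma} as written has a $\beta_t$ inside the square root; I would replace it by its monotone-in-$t$ upper bound $\beta_T(\delta)$ to pull it out of the sum cleanly. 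None of these steps is conceptually hard, but they are where an incautious argument could drop a logarithmic factor or mismatch the complexity measure.
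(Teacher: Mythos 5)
Your proposal is correct and follows essentially the same route as the paper's proof: condition on the optimism event from Corollary~\ref{corollary::optimism_basic_ls}, bound the per-step regret by the uncertainty width $\omega(x_t,a_t,\G_t)$, and invoke Lemma~\ref{lemma::standard_eluder_lemma} with $\beta_T(\delta)=4CB^2\log(T|\F|/\delta)$. The bookkeeping issues you flag (dataset-indexed versus confidence-set-indexed widths, $\widebar{\delud}$ versus $\delud$, and pulling $\beta_t$ out of the sum via monotonicity) are real and are glossed over in the paper's own proof, so your treatment is if anything slightly more careful.
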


\begin{proof}

Lemma~\ref{alg:optimistic_least_squares} implies the event $\E$ where $f_\star \in \G_t$ for all $t \in \mathbb{N}$ occurs with probability at least $1-\delta$. The analysis of the regret of Algorithm~\ref{alg:optimistic_least_squares} follows the typical analysis for optimistic algorithms,
\begin{align*}
    \regret(T) &= \sum_{t=1}^T \max_{a \in \A} f_\star(x_t, a) - f_\star(x_t, a_t)\\
    &\leq \sum_{t=1}^T U_t(x_t, a_t) - f_\star(x_t, a_t)\\
    &= \sum_{t=1}^T f_t(x_t, a_t) - f_\star(x_t, a_t) \\
    &\stackrel{(i)}{\leq} \sum_{t=1}^T \max_{f,f' \in \G_t} f(x_t, a_t) - f'(x_t, a_t)\\
    &= \sum_{t=1}^T \omega(x_t, a_t, \G_t) \\
    &\stackrel{(ii)}{\leq} \mathcal{O}\left(  B \delud(\F, B/T)   +  \sqrt{ \beta_T(\delta) \cdot \delud(\F, B/T)  \cdot T } \right)
\end{align*}
where $f_t$ is the function that achieves the $\argmax$ in the definition of $U_t$ over input context $x_t$. Inequality $(i)$ holds because when $\E$ holds, $f_\star \in \G_t$  and $f_t \in \G_t$ for all $t \in \mathbb{N}$. Inequality $(ii)$ is a variation of Lemma~3 in~\citep{chan2021parallelizing} (see a simplified version in Lemma~\ref{lemma::standard_eluder_lemma} from Appendix~\ref{section::eluder_lemmas}). Substituting $\beta_T(\delta) = 4CB^2 \log(T|\F|/\delta)$ finalizes the result. 

\end{proof}
Thus the dominating term of the regret bound (the term growing at a $\sqrt{T}$ rate) for optimistic least squares scales with the square root of the uncertainty radius, in this case given by the function $\beta_T(\delta) = 4CB^2 \log(T|\F|/\delta)$ defined in Lemma~\ref{lemma::new_least_squares_estimator}.  Unfortunately, this introduces an unavoidable dependence on $B$. Thus, the dominating term of our regret bound has a scale controlled by $B$ instead of the variances $\{\sigma^2_\ell \}_{\ell=1}^T$. This dependence comes up because the proof of Lemma~\ref{lemma::new_least_squares_estimator} relies on Freedman's inequality (Lemma~\ref{lemma:super_simplified_freedman} in Appendix~\ref{section::supeporting_results})  that exhibits an unavoidable dependence on the scale of the random variables in the low order terms. In the following section we show a way to bypass this issue by introducing a multi-bucket regression approach that has a vanishing dependence on the low order terms.

\section{Second Order Optimistic Least Squares with Known Variance}\label{section::second_order_known_variance}

In this section we introduce an algorithm that satisfies second order regret bounds. Algorithm~\ref{alg:contextual_known_variance} takes as input a variance upper bound $\sigma^2$ such that $\sigma_t^2 \leq \sigma^2$, and achieves a regret bound of order $$\regret(T) \leq \mathcal{O}\left(  \sigma\sqrt{ \delud(\F, B/T)T\log(T|\F|/\delta)  } +  \delud  \cdot \log(T|\F|/\delta) \right).$$ This algorithm is based on an uncertainty filtered least squares procedure that satisfies sharper bounds than the unfiltered ordinary least squares guarantees from Lemma~\ref{lemma::new_least_squares_estimator}.  Since $\sigma^2 \leq B$ this bound could be much smaller than the regret bound for Algorithm~\ref{alg:optimistic_least_squares} described in Theorem~\ref{theorem::main_theorem_optimistic_least_squares}. In this section we work under the following assumption, 

\begin{assumption}[Known Variance Upper Bound]\label{assumption::known_variance_upper}
There exists a (known) constant $\sigma > 0$ such that $\sigma_t\leq \sigma^2$ for all $t \in \mathbb{N}$.
\end{assumption}

Given a data stream $\{ (x_\ell, a_\ell, r_\ell)\}_{\ell \in \mathbb{N}}$ where $r_\ell = f_\star(x_\ell, a_\ell) + \xi_\ell$ for $f_\star \in \F$ such that $\xi_\ell$ is conditionally zero mean, a sequence of subsets of $\G_t \subseteq \cdots \G_2 \subseteq \G_1 = \F$ such that $\G_t$ is a function of $\{ (x_\ell, a_\ell, r_\ell) \}_{\ell=1}^{t-1}$), and $f_\star \in \G_t$ for all $t \in \mathbb{N}$. Given $\tau > 0$  we define an uncertainty filtered least squares objective that takes a filtering parameter $\tau > 0$ and defines a least squares regression function computed only over datapoints whose uncertainty radius is smaller than $\tau$,

\begin{equation}\label{equation::definition_f_t_tau}
f_t^\tau = \argmin_{f \in \G_{t-1}} \sum_{\ell=1}^{t-1} (f(x_\ell, a_\ell) - r_\ell)^2 \mathbf{1}(  \omega(x_\ell, a_\ell, \G_\ell) \leq \tau    ) 
\end{equation}

The uncertainty filtering procedure will allow us to prove a least squares guarantee with dependence on the variance and also on a vanishing low order term that scales with $\tau B$. We'll use the notation
\begin{equation*}
\beta_t( \tau, \tilde\delta, \widetilde\sigma^2 ) =       (4\min(\tau B, B^2) + 16\widetilde\sigma^2)  \log(t | \F|/\tilde\delta)
\end{equation*}
to denote the confidence radius function, in this case a function of $\tau, \widetilde{\delta}$ and $\widetilde{\sigma}^2$. Algorithm~\ref{alg:contextual_known_variance} shows the pseudo-code for our Second Order Optimistic Algorithm.  

\begin{algorithm}[ht]
    \caption{Second Order Optimistic Least Squares }\label{alg:contextual_known_variance}
    \begin{algorithmic}[1]
    \STATE \textbf{Input:} function class $\F$, variance upper bound $\sigma^2$.
    \STATE Set the initial confidence set $\G_0 = \F$ .
        \FOR{ $t=1, 2, \cdots$}
            \STATE Compute regression function for each threshold level $\tau_i = \frac{B}{2^i}$ for $i \in \{0\} \cup [q_t]$ where $q_t = \lceil \log(t) \rceil$
            \begin{equation*}
                f_{t}^{\tau_i} = \argmin_{f \in \G_{t-1}} \sum_{\ell=1}^{t-1} (f(x_\ell, a_\ell) - r_\ell)^2 \mathbf{1}(  \omega(x_\ell, a_\ell, \G_\ell) \leq \tau_i    ) 
            \end{equation*}
        \STATE Compute threshold confidence sets for all $i \in \{0\} \cup [q_t]$, %
        \begin{align}
            \G_t(\tau_i) = \qquad \qquad \qquad \qquad \qquad \qquad \qquad \qquad \qquad \qquad \qquad \qquad \qquad \qquad \qquad \qquad \qquad \qquad \qquad \qquad \label{equation::conf_definition_var}\\
            \left\{ f \in \F :  
    \sum_{\ell=1}^{t-1} \left( f_t^\tau(x_\ell, a_\ell) - f(x_\ell, a_\ell)  \right)^2 \mathbf{1}( \omega(x_\ell, a_\ell, \G_\ell) \leq \tau_i ) \leq \beta_t\left( \tau_i, \frac{\delta}{2(i+1)^2} , \sigma^2\right) \right \} \cap \G_{t-1}(\tau_i) \qquad \notag
        \end{align}
        \STATE Compute $\G_t =\G_{t-1} \cap \left(\cap_{ i=0}^q \G_t(\tau_i)\right)$ %
            \STATE Receive context $x_t$.
            \STATE Compute $U_t(x_t, a) = \max_{f \in \G_t} f(x_t, a) $ for all $a \in \A$. 
            \STATE play $a_t = \argmax_{a \in \A} U_t(x_t, a)$ and receive $r_t = f_\star(x_t, a_t) + \xi_t $.
        \ENDFOR
    \end{algorithmic}
\end{algorithm}

Notice that by definition in Algorithm~\ref{alg:contextual_known_variance} the confidence sets satisfy $\G_{\ell} \subseteq \G_{\ell'}$ for all $\ell \geq \ell'$. In order to state our results we'll define a sequence of events $\{\E_\ell\}_{\ell=1}^\infty$ such that $\E_\ell$ corresponds to the event that $f_\star \in \G_{\ell-1}$ and therefore $f_\star \in \G_{\ell'}$ for all $\ell' \leq \ell-1$. The following proposition characterizes the error of the filtered least squares estimator $f_t^\tau$ when $\E_t$ holds.

\begin{restatable}{proposition}{varianceawareleastsquares}\label{proposition::variance_aware_least_squares_proposition}[Variance Dependent Least Squares]
    Let $t \in \mathbb{N}, \tau \geq 0$ and $\tilde\delta > 0$. If $\sigma_\ell^2 \leq \widetilde\sigma^2$ for all $\ell \leq t-1$ and $\E_{t}$ holds then 
\begin{equation}\label{equation::least_squares_error}
   \mathbb{P}\left( \sum_{\ell=1}^{t-1} \left( f_t^\tau(x_\ell, a_\ell) - f_\star(x_\ell, a_\ell)  \right)^2 \mathbf{1}( \omega(x_\ell, a_\ell, \G_\ell) \leq \tau ) \leq \beta_t(\tau, \widetilde \delta, \widetilde\sigma^2),~ \E_t \right) \geq \mathbb{P}( \E_t) - \tilde \delta .
   \end{equation}
\end{restatable}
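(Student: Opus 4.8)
The plan is to reduce the claim to a single martingale concentration inequality applied to the noise, exploiting the optimality of the filtered least squares fit. Write $w_\ell = \mathbf{1}(\omega(x_\ell, a_\ell, \G_\ell) \le \tau)$ for the filtering weights and $S(f) = \sum_{\ell=1}^{t-1}(f(x_\ell,a_\ell) - f_\star(x_\ell,a_\ell))^2 w_\ell$ for the filtered squared error of a function $f$; the quantity to control is $S(f_t^\tau)$. On the event $\E_t$ we have $f_\star \in \G_{t-1}$, so $f_\star$ is feasible in the minimization~\eqref{equation::definition_f_t_tau} defining $f_t^\tau$. Comparing the filtered objective at $f_t^\tau$ and at $f_\star$ and expanding $r_\ell = f_\star(x_\ell,a_\ell) + \xi_\ell$, the $\xi_\ell^2 w_\ell$ terms cancel and only the cross terms survive, yielding the basic inequality
\[
S(f_t^\tau) \le 2\sum_{\ell=1}^{t-1} \xi_\ell \big(f_t^\tau(x_\ell,a_\ell) - f_\star(x_\ell,a_\ell)\big) w_\ell .
\]
It then suffices to bound the right-hand side by a multiple of $\sqrt{\widetilde\sigma^2 S(f_t^\tau)\log(\cdot)}$ plus a low-order term and solve the resulting self-bounding inequality.

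For a fixed $f \in \F$, set $Z_\ell(f) = \xi_\ell(f(x_\ell,a_\ell) - f_\star(x_\ell,a_\ell)) w_\ell$. Since $\G_\ell$, $x_\ell$, $a_\ell$ and hence $w_\ell$ are all measurable before $\xi_\ell$ is revealed and $\EE_\ell[\xi_\ell]=0$, the sequence $\{Z_\ell(f)\}_\ell$ is a martingale difference sequence. Its predictable quadratic variation obeys $\EE_\ell[Z_\ell(f)^2] = \sigma_\ell^2 (f(x_\ell,a_\ell)-f_\star(x_\ell,a_\ell))^2 w_\ell \le \widetilde\sigma^2 (f(x_\ell,a_\ell)-f_\star(x_\ell,a_\ell))^2 w_\ell$, so the total conditional variance is at most $\widetilde\sigma^2 S(f)$. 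For the increment bound, whenever $w_\ell=1$ one has $\omega(x_\ell,a_\ell,\G_\ell)\le\tau$; if moreover both $f$ and $f_\star$ lie in $\G_\ell$, then $|f(x_\ell,a_\ell)-f_\star(x_\ell,a_\ell)|\le \omega(x_\ell,a_\ell,\G_\ell)\le\tau$, while Assumption~\ref{assumption::boundedness} always gives $|f(x_\ell,a_\ell)-f_\star(x_\ell,a_\ell)|\le B$; together with $|\xi_\ell|\le B$ this yields $|Z_\ell(f)|\le \min(\tau B, B^2)$, which is exactly where the $\min(\tau B,B^2)$ in $\beta_t$ comes from. Applying Freedman's inequality (Lemma~\ref{lemma:super_simplified_freedman}) with these variance and increment bounds and union bounding over $f\in\F$ gives, with probability at least $1-\tilde\delta$, simultaneously for all $f\in\F$,
\[
\sum_{\ell=1}^{t-1} Z_\ell(f) \le \sqrt{2\widetilde\sigma^2 S(f)\log(t|\F|/\tilde\delta)} + \tfrac{1}{3}\min(\tau B, B^2)\log(t|\F|/\tilde\delta).
\]

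The main obstacle is that the $\tau$-part of the increment bound needs $f_\star\in\G_\ell$, which holds only on $\E_t$, whereas Freedman must be applied on the full probability space. I would resolve this by running the concentration on the truncated martingale $\widetilde Z_\ell(f) = Z_\ell(f)\,\mathbf{1}(f\in\G_\ell,\ f_\star\in\G_\ell)$, whose defining indicator is predictable, so that $|\widetilde Z_\ell(f)|\le \min(\tau B,B^2)$ holds unconditionally. On $\E_t$ one has $f_\star\in\G_\ell$ and $f_t^\tau\in\G_{t-1}\subseteq\G_\ell$ for every $\ell\le t-1$, so $\widetilde Z_\ell(f_t^\tau)=Z_\ell(f_t^\tau)$ and the bound transfers verbatim to $f=f_t^\tau$. (The extra $\log t$ inside $\beta_t$ arises from making the variance-dependent bound hold uniformly across the dyadic scales of the random empirical variance $S(f)\in[0,(t-1)B^2]$, i.e. a standard peeling argument over $O(\log t)$ buckets.)

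Finally I would combine the two displays. Substituting $f=f_t^\tau$ into the basic inequality gives a quadratic inequality in $u:=\sqrt{S(f_t^\tau)}$ of the form $u^2 \le 2\sqrt{2\widetilde\sigma^2\log(t|\F|/\tilde\delta)}\,u + \tfrac{2}{3}\min(\tau B,B^2)\log(t|\F|/\tilde\delta)$; solving it and using $(a+b)^2\le 2a^2+2b^2$ bounds $S(f_t^\tau)$ by $\big(16\widetilde\sigma^2 + \tfrac{4}{3}\min(\tau B,B^2)\big)\log(t|\F|/\tilde\delta)$, which the stated constants $16$ and $4$ in $\beta_t(\tau,\tilde\delta,\widetilde\sigma^2)$ comfortably absorb. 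Since the single Freedman failure event has probability at most $\tilde\delta$, intersecting with $\E_t$ gives $\{S(f_t^\tau)\le\beta_t\}\cap\E_t \supseteq G\cap\E_t$ where $\mathbb{P}(G)\ge 1-\tilde\delta$, whence $\mathbb{P}(S(f_t^\tau)\le\beta_t,\ \E_t)\ge \mathbb{P}(\E_t)-\tilde\delta$, the claimed bound. Everything apart from the predictable-truncation step is the standard self-bounding least squares argument.
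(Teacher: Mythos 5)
Your proposal is correct and follows the same skeleton as the paper's proof: the basic inequality $S(f_t^\tau)\le 2\sum_\ell \xi_\ell\,(f_t^\tau-f_\star)\,w_\ell$ obtained from feasibility of $f_\star$ on $\E_t$, a per-$f$ truncated martingale with predictable indicators, Freedman plus a union bound over $\F$, and the probability bookkeeping $\mathbb{P}(\mathcal{B}\cap\E_t)\ge\mathbb{P}(\E_t)-\tilde\delta$. The one genuine divergence is the concentration step. The paper applies the $\eta$-weighted form of Freedman (Lemma~\ref{lemma:super_simplified_freedman}) with the fixed choice $\eta=\bigl(\min(\tau B,B^2)+4\widetilde\sigma^2\bigr)^{-1}$, which bounds the cross term by $\tfrac14 S(f)$ plus $(\min(\tau B,B^2)+4\widetilde\sigma^2)\log(|\F|/\tilde\delta)$; the $\tfrac14 S$ is absorbed directly into the left-hand side, so no peeling over the random empirical variance and no quadratic in $\sqrt{S}$ are needed. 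You instead use the square-root (empirical-Bernstein) form, which requires making the bound uniform over the random scale of $S(f)$ -- you gesture at a dyadic peeling but do not carry it out; note the paper already supplies exactly this tool as Lemma~\ref{lemma:super_simplified_freedman_variance}, and indeed uses your route verbatim in the unknown-variance analogue (Proposition~\ref{proposition::variance_aware_general_least_squares_proposition}), so the gap is one of citation rather than substance. Your resulting constants $\bigl(16\widetilde\sigma^2+\tfrac43\min(\tau B,B^2)\bigr)$ sit inside $\beta_t$. One small point in your favor: you truncate with $\mathbf{1}(f\in\G_\ell,\ f_\star\in\G_\ell)$, whereas the paper uses only $\mathbf{1}(f\in\G_\ell)$ even though its increment bound $|f(x_\ell,a_\ell)-f_\star(x_\ell,a_\ell)|\le\omega(x_\ell,a_\ell,\G_\ell)\le\tau$ implicitly needs $f_\star\in\G_\ell$ as well; your version makes the unconditional bound $|Z_\ell|\le\min(\tau B,B^2)$ airtight, and the extra indicator still evaluates to one on $\E_t$, so nothing downstream changes.
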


The proof of Proposition~\ref{proposition::variance_aware_least_squares_proposition} can be found in Appendix~\ref{section::proofs_contextual}. It follows the structure of the least squares result from Proposition~\ref{proposition::variance_aware_least_squares_proposition}. For a given $\tau > 0$, estimator $f_t^\tau$ achieves a sharper bound than the ordinary least squares estimator because the low order term in the portion of the analysis that requires the use of Freedman's inequality (see Lemma~\ref{lemma:super_simplified_freedman}) that has a magnitude scaling with the error of $f_t^\tau$ on historial points can be upper bounded by $\tau$ instead of scaling with $B$. This results in a second order term scaling with $\min(\tau B, B^2)$ instead of $B^2$ as is reflected by the definition of $\beta_t(\tau, \widetilde{\delta}, \widetilde{\sigma}^2)$.

In contrast with the results of Lemma~\ref{lemma::new_least_squares_estimator} the confidence radius of the $\tau$-uncertainty filtered least squares estimator depends on a variance upper bound whereas the uncertainty radius in Lemma~\ref{lemma::new_least_squares_estimator} doesn't. Proposition~\ref{proposition::variance_aware_least_squares_proposition} provides us with a variance aware least squares guarantee. If the uncertainty threshold $\tau$ is small, the historical least squares error captured by equation~\ref{equation::least_squares_error} scales with $\sigma^2 \log(t|\F|/\tilde \delta)$ and does not depend on the scale of $B$. Algorithm~\ref{alg:contextual_known_variance} leverages these confidence sets to design a variance aware second order optimistic least squares algorithm. The basis of the regret analysis for Algorithm~\ref{alg:contextual_known_variance} is the validity of the confidence sets $\G_t$ and therefore the estimators $U_t(x_t, a_t)$ being optimistic. 

\begin{restatable}{lemma}{confidencesetsoptimism}\label{lemma::confidence_set_optimism}
The confidence intervals are valid so that $f_\star \in \G_t$ for all $t \in \mathbb{N}$ and optimism holds, $\max_{a \in \A} f_\star(x_t, a) \leq U_t(x_t,a_t) $
with probability at least $1-\delta$ for all $t\in \mathbb{N}$.
\end{restatable}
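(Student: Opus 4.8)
The plan is to reduce the claim to the single statement that $f_\star \in \G_t$ for every $t$, since once this holds the optimism inequality follows verbatim from the argument in Corollary~\ref{corollary::optimism_basic_ls}: on $\{f_\star \in \G_t\}$ we have $f_\star(x_t,a) \le \max_{f\in\G_t} f(x_t,a) = U_t(x_t,a) \le U_t(x_t,a_t)$ for every $a \in \A$, where the last inequality is the definition of $a_t = \argmax_a U_t(x_t,a)$, and taking the maximum over $a$ yields $\max_{a\in\A} f_\star(x_t,a) \le U_t(x_t,a_t)$. Thus all the probabilistic content lies in showing $\P(\forall t:\, f_\star \in \G_t) \ge 1-\delta$, which I would establish by induction on $t$.

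For the concentration ingredient I would fix a threshold index $i \ge 0$ and apply Proposition~\ref{proposition::variance_aware_least_squares_proposition} with filtering level $\tau_i = B/2^i$, confidence parameter $\tilde\delta = \delta/(2(i+1)^2)$, and variance proxy $\tilde\sigma^2 = \sigma^2$; the choice $\tilde\sigma^2 = \sigma^2$ is legitimate because Assumption~\ref{assumption::known_variance_upper} guarantees $\sigma_\ell^2 \le \sigma^2$ for all $\ell$. This produces precisely the inequality that defines membership of $f_\star$ in the first factor of $\G_t(\tau_i)$ in \eqref{equation::conf_definition_var}. Since the failure event for index $i$ has probability at most $\delta/(2(i+1)^2)$ and $\sum_{i\ge 0} \delta/(2(i+1)^2) = \tfrac{\pi^2}{12}\delta < \delta$, a union bound over $i$ shows that, with probability at least $1-\delta$, for every level $i$ and every $t$ the filtered least squares error of $f_\star$ stays below its threshold whenever $\E_t$ holds. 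Call this good event $\mathcal{A}$.

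On $\mathcal{A}$ I would run the induction. The base case is immediate since $\G_0 = \F$. For the step, suppose $f_\star \in \G_{t-1}$, which is exactly the event $\E_t$. Then the conclusion of Proposition~\ref{proposition::variance_aware_least_squares_proposition} applies at every level $i \in \{0\}\cup[q_t]$, so $f_\star$ satisfies the defining inequality of the first factor of $\G_t(\tau_i)$; moreover the nesting of the confidence sets gives $\G_{t-1} \subseteq \G_{t-1}(\tau_i)$, so $f_\star \in \G_{t-1}(\tau_i)$ as well, and hence $f_\star \in \G_t(\tau_i)$. Since this holds for all $i$ and $f_\star \in \G_{t-1}$, the definition $\G_t = \G_{t-1}\cap(\cap_i \G_t(\tau_i))$ yields $f_\star \in \G_t$, completing the induction and hence the proof.

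The main obstacle is the time index. The literal per-$t$ form of Proposition~\ref{proposition::variance_aware_least_squares_proposition} cannot simply be summed over $t$, since each term contributes a constant of order $\delta$ and the sum diverges; a first-failure decomposition along the nested events $\E_1 \supseteq \E_2 \supseteq \cdots$ runs into the same divergence. The resolution is that the guarantee must be invoked in its time-uniform form: for each fixed level $i$ the bound should hold for all $t$ simultaneously, which is exactly what the $\log t$ appearing inside $\beta_t(\tau_i,\tilde\delta,\sigma^2)$ buys, in direct analogy with the all-$t$ statement of Lemma~\ref{lemma::new_least_squares_estimator}. The conditioning on $\E_t$ is what makes the filtered objective well-defined: the indicators $\mathbf{1}(\omega(x_\ell,a_\ell,\G_\ell)\le\tau_i)$ depend on the random earlier confidence sets $\G_\ell$, and $f_\star\in\G_\ell$ for $\ell \le t-1$ (guaranteed on $\E_t$) is precisely what lets the Freedman step in the proof of the Proposition control the per-step error by $\tau_i$ rather than by $B$. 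Keeping these two points aligned — time-uniformity per level, union over levels only — is the crux of the argument.
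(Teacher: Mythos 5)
Your proof is correct and follows essentially the same route as the paper: reduce optimism to the statement $f_\star \in \G_t$, apply Proposition~\ref{proposition::variance_aware_least_squares_proposition} at each threshold level $\tau_i$ with a confidence budget proportional to $(i+1)^{-2}$, union bound over levels, and propagate $f_\star \in \G_t$ through time using the nested definition of the confidence sets. The only difference is bookkeeping: the paper splits the budget over both $i$ and $t$ (taking $\tilde\delta = \delta/(4(i+1)^2 t^2)$) and runs the recursion $\P(\E_{t+1}) \geq \P(\E_t) - \delta/(2t^2)$ via its intersection proposition, whereas you split over $i$ only and rely on the $\log t$ slack inside $\beta_t$ to make each level's guarantee time-uniform before running a deterministic induction on a single good event --- the two resolutions coincide up to the same constant factors the paper is already loose about (its own proof establishes membership with radius $\beta_t(\tau_i, \delta/(4(i+1)^2t^2), \sigma^2)$, which exceeds the radius $\beta_t(\tau_i, \delta/(2(i+1)^2), \sigma^2)$ actually used in Algorithm~\ref{alg:contextual_known_variance}).
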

The proof of Proposition~\ref{lemma::confidence_set_optimism} can be found in Appendix~\ref{section::proofs_contextual}. From now on we denote by $\mathcal{E}$ the event described in Lemma~\ref{lemma::confidence_set_optimism} where all the confidence intervals are valid. In order to relate the regret to the eluder dimension of $\F$, we develop a sharpened version of Lemma~\ref{lemma::standard_eluder_lemma} to bound the sum of the uncertainty widths over the context-action pairs played by Algorithm~\ref{alg:contextual_known_variance}. Lemma~\ref{lemma::standard_eluder_lemma}'s guarantees are insufficient to yield the desired result because this result is unable to leverage any dependence on the scale of the widths in the definition of the confidence sets. This is sufficient to show a regret bound as it is evident by following the same logic as in the analysis of Theorem~\ref{theorem::main_theorem_optimistic_least_squares}. In order to prove this result, we need to first bound the number of context-action pairs with large uncertainty radius. 

\begin{restatable}{lemma}{varianceknownawarehelpereluder}\label{lemma::variance_aware_fundamental_eluder}
 If Algorithm~\ref{alg:contextual_known_variance} is run with input variance upper bound $\sigma > 0$, $\E$ is satisfied and $\{\G_t\}_{t=1}^\infty$ is the sequence of confidence sets produced by Algorithm~\ref{alg:contextual_known_variance} then for all $T \in \mathbb{N}$ and $\tau \geq \tau_{q_T}$ , 
\begin{equation*}
    \sum_{t=1}^T \mathbf{1}( \omega(x_t, a_t, \G_t) > \tau ) \leq   3\cdot \delud(\F, \tau)\left(  \frac{64 B\log(T|\F|/\delta)}{\tau} +  \frac{64 \sigma^2\log(T|\F|/\delta)}{\tau^2} + 1  \right) 
\end{equation*}

\end{restatable}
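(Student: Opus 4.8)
The plan is to reduce the statement to the standard eluder‑dimension counting argument -- the same combinatorial core that underlies Lemma~\ref{lemma::standard_eluder_lemma} -- but to apply it in a way that is compatible with the uncertainty filtering built into the confidence sets of Algorithm~\ref{alg:contextual_known_variance}. The defining inequality of $\G_t(\tau_i)$ only controls the squared deviations $\sum_{\ell<t}(f(x_\ell,a_\ell)-f'(x_\ell,a_\ell))^2\mathbf{1}(\omega(x_\ell,a_\ell,\G_\ell)\le\tau_i)$ of members $f,f'\in\G_t\subseteq\G_t(\tau_i)$ over the \emph{low uncertainty} points, with tolerance $4\beta_t(\tau_i,\cdot,\sigma^2)$ after a triangle inequality. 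The points we must count, however, are the \emph{high uncertainty} ones, $\omega(x_t,a_t,\G_t)>\tau$, and no single filter level retains all of them. The key device is therefore to decompose the counted points by the dyadic scale of their width: for each threshold $\tau_i=B/2^i\ge\tau$ set $P_i=\{t\le T:\tau_i<\omega(x_t,a_t,\G_t)\le\tau_{i-1}\}$, together with a truncated band at the level of $\tau$ itself. Every point of $P_i$ has width at most $\tau_{i-1}$, hence is retained by the filter at level $\tau_{i-1}$ and is thus governed by $\G_t(\tau_{i-1})$.

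For a fixed band I would run the disjoint‑subsequence argument at independence scale $\tau_i$. Processing the members of $P_i$ in time order, I greedily place each point into an existing bucket on which it is $\tau_i$‑independent, opening a new bucket only when the point is $\tau_i$‑dependent on the contents of every current bucket. By construction each bucket is a $\tau_i$‑independent sequence, so its length is at most $\widebar{\delud}(\F,\tau_i)\le\delud(\F,\tau_i)\le\delud(\F,\tau)$, the last step using monotonicity of $\delud(\F,\cdot)$ together with $\tau_i\ge\tau$. To bound the number $M$ of buckets, consider the instant the last bucket is opened: the incoming point $z=(x_t,a_t)$ is $\tau_i$‑dependent on each of the $M-1$ earlier buckets, so the pair $\bar f,\underline f\in\G_t$ witnessing $\omega(z,\G_t)>\tau_i$ must satisfy $\sum_{z'\in B_m}(\bar f-\underline f)(z')^2>\tau_i^2$ on every bucket $B_m$. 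Since the buckets are disjoint subsets of $P_i$ and every element of $P_i$ is retained by the $\tau_{i-1}$‑filter, summing over buckets and invoking the defining inequality of $\G_t(\tau_{i-1})$ yields $(M-1)\tau_i^2<4\beta_T(\tau_{i-1},\cdot,\sigma^2)$. With $\tau_{i-1}=2\tau_i$ and $\beta_T(\tau_{i-1},\cdot,\sigma^2)=(4\tau_{i-1}B+16\sigma^2)\log(T|\F|/\delta')$ this gives $|P_i|\le M\cdot\delud(\F,\tau)\le\delud(\F,\tau)\,\O\!\left(\tfrac{B}{\tau_i}+\tfrac{\sigma^2}{\tau_i^2}\right)\log(T|\F|/\delta)$.

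Finally I would sum over bands. Because the scales $\tau_i\ge\tau$ form a dyadic sequence, the sums $\sum_i B/\tau_i$ and $\sum_i \sigma^2/\tau_i^2$ telescope into $\O(B/\tau)$ and $\O(\sigma^2/\tau^2)$ respectively, producing the single clean term $\delud(\F,\tau)\big(\tfrac{64B}{\tau}+\tfrac{64\sigma^2}{\tau^2}\big)\log(T|\F|/\delta)$. The additive constant per band can be absorbed into the main term since $\beta_T(\tau_{i-1},\cdot,\sigma^2)/\tau_i^2\ge 8$ for every band, so no spurious $\log T$ factor appears and the lone $+1$ in the statement is pure slack. The hypothesis $\tau\ge\tau_{q_T}$ guarantees that $\tau$ is no finer than the finest filter level actually used by the algorithm, so the dyadic decomposition covers all counted points and each band is governed by a genuine filter; the event $\E$ only enters to keep the widths and confidence sets meaningful, the counting core being deterministic given the sequence $\{\G_t\}$.

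I expect the main obstacle to be exactly the mismatch between the quantity being counted (high‑uncertainty points) and the quantity the confidence sets control (deviations over low‑uncertainty points). The dyadic band decomposition -- matching each width scale to the filter level that just retains it -- is what resolves this, and the delicate points are (i) checking that the disjoint buckets used in the dependence step lie inside a single filter's support, so that the confidence inequality can be applied with its variance‑aware radius $\beta_T(\tau_{i-1},\cdot,\sigma^2)$ rather than the crude $B^2$‑scale radius, and (ii) verifying that the geometric summation collapses the per‑band radii to the advertised constants without accumulating an extra logarithmic factor.
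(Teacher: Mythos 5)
Your proposal is correct and follows essentially the same route as the paper's proof: decompose the high-uncertainty points into dyadic width bands, match each band $(\tau_i,\tau_{i-1}]$ to the filter level that retains it so that the variance-aware radius $\beta_T(\tau_{i-1},\cdot,\sigma^2)=\O\bigl((\tau_i B+\sigma^2)\log(T|\F|/\delta)\bigr)$ controls the witnessing pair, run the disjoint-bucket eluder pigeonhole at scale $\tau_i$ within each band, and collapse the bands by geometric summation. The only differences are cosmetic (you count buckets directly rather than deriving a contradiction from a pre-fixed bucket number $N$, and you are in fact slightly more careful than the paper in absorbing the per-band $+1$ into the main term).
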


Lemma~\ref{lemma::variance_aware_fundamental_eluder} can be used to show the following sharpened version of Lemma~\ref{lemma::standard_eluder_lemma}. 

\begin{restatable}{lemma}{boundingsumwidthsknownvar}\label{lemma::bounding_sum_widths}
If $\E$ holds, then for all $T \in \mathbb{N}$ the uncertainty widths of context-action pairs from Algorithm~\ref{alg:contextual_known_variance} satisfy,
\begin{align*}
    \sum_{t=1}^T \omega(x_t, a_t, \G_t) \leq     \O\left( \sigma \sqrt{    \delud(\F, B/T) \log(T|\F|/\delta) T } + B \delud(\F, B/T) \log(T)\log(T|\F|/\delta)   \right).
\end{align*}
\end{restatable}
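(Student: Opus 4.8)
The plan is to upgrade the pointwise counting estimate of Lemma~\ref{lemma::variance_aware_fundamental_eluder} into a bound on $\sum_{t=1}^T \omega(x_t,a_t,\G_t)$ through a rank/layer-cake argument, being careful to peel off small widths so the variance term scales like $\sqrt{T}$ rather than $T$. Throughout I condition on $\E$ (so that Lemma~\ref{lemma::variance_aware_fundamental_eluder} is in force), abbreviate $w_t := \omega(x_t,a_t,\G_t)$, $d := \delud(\F, B/T)$ and $L := \log(T|\F|/\delta)$, and sort the widths in nonincreasing order $w_{(1)} \ge w_{(2)} \ge \cdots \ge w_{(T)}$. First I would dispose of tiny widths: since $\tau_{q_T} \le B/T$, the indices with $w_{(k)} < B/T$ number at most $T$ and contribute at most $T \cdot (B/T) = B$ to the sum, a lower-order term. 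For every remaining rank, $w_{(k)} \ge B/T \ge \tau_{q_T}$, so Lemma~\ref{lemma::variance_aware_fundamental_eluder} applies at every threshold $\tau \in [B/T, w_{(k)})$, and monotonicity of the eluder dimension gives $\delud(\F,\tau) \le \delud(\F, B/T) = d$ there.

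Next I would invert the counting bound rank by rank. Because at least $k$ of the played pairs have width at least $w_{(k)}$, letting $\tau \nearrow w_{(k)}$ in Lemma~\ref{lemma::variance_aware_fundamental_eluder} yields
\begin{equation*}
k \le 3d\left( \frac{64 B L}{w_{(k)}} + \frac{64\sigma^2 L}{w_{(k)}^2} + 1 \right).
\end{equation*}
For ranks $k \ge 6d$ the additive $3d$ term is at most $k/2$, so $k/2$ is bounded by the two fractional terms; hence at least one of them is at least $k/4$, which forces
\begin{equation*}
w_{(k)} \le \frac{768\, d B L}{k} + \sqrt{768}\,\sigma\sqrt{\frac{d L}{k}}.
\end{equation*}
For the top ranks $k < 6d$ I would simply use the boundedness estimate $w_{(k)} \le B$.

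Finally I would sum over $k$. The top $6d$ ranks contribute at most $6dB$; the harmonic sum $\sum_{k=1}^T 1/k = O(\log T)$ converts the first term into $O(d B L \log T)$; and $\sum_{k=1}^T 1/\sqrt{k} \le 2\sqrt{T}$ converts the second into $O(\sigma\sqrt{dLT})$. Adding the $O(B)$ tiny-width contribution and recalling $d = \delud(\F, B/T)$, $L = \log(T|\F|/\delta)$ reproduces the claimed
\begin{equation*}
\sum_{t=1}^T \omega(x_t,a_t,\G_t) \le \O\left( \sigma\sqrt{\delud(\F, B/T) \log(T|\F|/\delta)\, T} + B\,\delud(\F, B/T)\log(T)\log(T|\F|/\delta) \right).
\end{equation*}

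The step I expect to be delicate is precisely this inversion, where one must resist running a naive dyadic peeling all the way down to scale $B/T$: integrating the $\sigma^2 L/\tau^2$ counting term over $[B/T, B]$ produces a spurious $d\sigma^2 L T / B$ contribution, far larger than the target $\sigma\sqrt{dLT}$. The rank-wise inversion sidesteps this because the variance term contributes only $\sigma\sqrt{dL/k}$ per rank, and the square-root-harmonic sum $\sum_k k^{-1/2} \asymp \sqrt{T}$ supplies exactly the right $\sqrt{T}$ scaling; equivalently, one must invoke the trivial bound $N(\tau)\le T$ for $\tau$ below the crossover $\tau^\star \asymp \sigma\sqrt{dL/T}$ at which the eluder count saturates at $T$. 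The remaining work is bookkeeping: handling ties when passing $\tau \nearrow w_{(k)}$ and tracking the (nonincreasing) direction of $\delud(\F,\cdot)$ so that all the $\delud(\F,\tau)$ factors are correctly bounded by $\delud(\F, B/T)$.
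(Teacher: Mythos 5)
Your proposal is correct and follows essentially the same route as the paper: sort the widths in decreasing order, discard those below $B/T$ at a cost of $O(B)$, invert the counting bound of Lemma~\ref{lemma::variance_aware_fundamental_eluder} rank by rank to get $w_{(k)} \lesssim dBL/k + \sigma\sqrt{dL/k}$, and sum via the harmonic and square-root-harmonic series. The only (immaterial) differences are bookkeeping: you absorb the additive $+3d$ by splitting off the top $O(d)$ ranks with the trivial bound $w_{(k)}\le B$, while the paper absorbs the $+1$ using $\omega_{i_t}\le B$; and your care with $\tau \nearrow w_{(k)}$ to handle the strict inequality in the indicator is in fact slightly more rigorous than the paper's choice $\tau = \omega_{i_t}$.
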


The proof of this result is based on an integration argument that leverages the inequality in Lemma~\ref{lemma::variance_aware_fundamental_eluder}.

Algorithm~\ref{alg:contextual_known_variance} satisfies the following regret bound,

\begin{restatable}{theorem}{maintheoremsimplifiedknownvar}\label{theorem::main_theorem_simplified_known_var}
 If $\delta \in (0,1)$ is the input to Algorithm~\ref{alg:contextual_known_variance} satisfies, 
\begin{equation*}
    \regret(T) \leq \mathcal{O}\left(  \sigma \sqrt{   \delud(\F, B/T) \log(T|\F|/\delta) T } + B \cdot \delud(\F, B/T) \log(T)\log(T|\F|/\delta)  \right).
\end{equation*}
for all $ T \in \mathbb{N}$ with probability at least $1-\delta$. 
\end{restatable}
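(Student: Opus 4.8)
The plan is to reduce the regret of Algorithm~\ref{alg:contextual_known_variance} to a sum of uncertainty widths, exactly as in the analysis of Theorem~\ref{theorem::main_theorem_optimistic_least_squares}, and then to invoke the sharpened width bound of Lemma~\ref{lemma::bounding_sum_widths} in place of the coarse Lemma~\ref{lemma::standard_eluder_lemma}. First I would condition on the event $\E$ of Lemma~\ref{lemma::confidence_set_optimism}, which holds with probability at least $1-\delta$ and simultaneously guarantees validity of the confidence sets ($f_\star \in \G_t$ for all $t$) and optimism of the played estimates ($\max_{a \in \A} f_\star(x_t, a) \leq U_t(x_t, a_t)$). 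All remaining steps are deterministic on $\E$.

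On $\E$, the standard optimistic decomposition gives, for each round $t$,
\begin{equation*}
\max_{a \in \A} f_\star(x_t, a) - f_\star(x_t, a_t) \leq U_t(x_t, a_t) - f_\star(x_t, a_t) \leq \omega(x_t, a_t, \G_t),
\end{equation*}
where the first inequality is optimism and the second uses that $U_t(x_t, a_t) = \max_{f \in \G_t} f(x_t, a_t)$ while $f_\star \in \G_t$, so the gap is at most the maximal fluctuation $\max_{f,f' \in \G_t} f(x_t,a_t) - f'(x_t,a_t) = \omega(x_t, a_t, \G_t)$. Summing over $t \in [T]$ bounds $\regret(T)$ by $\sum_{t=1}^T \omega(x_t, a_t, \G_t)$, and Lemma~\ref{lemma::bounding_sum_widths} then yields the claimed rate directly.

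Given the two supporting lemmas, the theorem is essentially a corollary, so the real difficulty has been pushed into Lemma~\ref{lemma::bounding_sum_widths} and its prerequisite Lemma~\ref{lemma::variance_aware_fundamental_eluder}. The crux---and where the second-order improvement from $B\sqrt{T}$ to $\sigma\sqrt{T}$ actually arises---is the variance-aware eluder count: bounding the number of rounds whose width exceeds a threshold $\tau$ by a quantity scaling like $\delud(\F,\tau)(B/\tau + \sigma^2/\tau^2)$ rather than the width-insensitive count of the ordinary argument. One then sums the widths by a peeling/integration argument across the dyadic thresholds $\tau_i = B/2^i$, where the $\sigma^2/\tau^2$ term integrates to the leading $\sigma\sqrt{\delud(\F,B/T) \log(|\F|/\delta) T}$ contribution and the $B/\tau$ term produces the lower-order $B\,\delud(\F,B/T)\log(T)\log(|\F|/\delta)$ additive cost. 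The only subtlety at the level of the theorem itself is bookkeeping the eluder dimension at resolution $B/T$---via the truncation $\min(BT,\cdot)$ implicit in the width bound---to convert the per-scale counts into the stated $\delud(\F, B/T)$ dependence.
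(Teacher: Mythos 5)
Your proposal is correct and follows essentially the same route as the paper: condition on the event $\E$ of Lemma~\ref{lemma::confidence_set_optimism}, use optimism and $f_\star \in \G_t$ to bound the per-round regret by $\omega(x_t, a_t, \G_t)$, and then invoke Lemma~\ref{lemma::bounding_sum_widths} to bound the sum of widths. Your closing remarks correctly identify that all the substance lives in Lemmas~\ref{lemma::variance_aware_fundamental_eluder} and~\ref{lemma::bounding_sum_widths}, which is exactly how the paper structures the argument.
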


\begin{proof}

The analysis of the regret of Algorithm~\ref{alg:contextual_known_variance} follows the typical analysis for optimistic algorithms. When $\E$ holds, 
\begin{align*}
    \regret(T) &\stackrel{(i)}{\leq} \sum_{t=1}^T \omega(x_t, a_t, \G_t) \\
    &\stackrel{(ii)}{\leq} \mathcal{O}\left(    \sigma \sqrt{   \delud(\F, B/T) \log(T|\F|/\delta) T }  + B \cdot \delud(\F, B/T) \log(T)\log(T|\F|/\delta) \right) 
\end{align*}
Where inequality $(i)$ is a consequence of conditioning on $\E$ and Lemma~\ref{lemma::confidence_set_optimism} where optimism holds and follows the same logic as in the proof of Theorem~\ref{theorem::main_theorem_optimistic_least_squares}. The last inequality~$(ii)$ follows from Lemma~\ref{lemma::bounding_sum_widths}. We finish the proof by noting that $\mathbb{P}(\E) \geq 1-\delta$. 
\end{proof}

\section{Contextual Bandits with Unknown Variance}\label{section::unknown_variance_contextual}

In the case where the variance is not known our contextual bandit algorithms work by estimating the cumulative variance up to constant multiplicative accuracy and use this estimator to build confidence sets as in Algorithms~\ref{alg:optimistic_least_squares} and~\ref{alg:contextual_known_variance}. In section~\ref{section::estimating_variance_contextual}  we describe how to successfully estimate the cumulative variance in contextual bandit problems, in section~\ref{section::unknownvarguaranteesalgoknownvar} we show how to adapt a version of Algorithm~\ref{alg:contextual_known_variance} to the case of a single unknown variance and finally in section~\ref{section::unknown_variance_bounds} we introduce Algorithm~\ref{alg:contextual_unknown_variance} that satisfies a regret guarantee whose dominating term scales with $\delud\sqrt{\log(|\F|)\sum_{t=1}^T \sigma_t^2}$, and the low order term with $ \delud  \cdot \log(|\F|)$. 

\subsection{Variance Estimation in Contextual Bandit Problems}\label{section::estimating_variance_contextual}\label{section::estimating_variance}

In this section we discuss methods for estimating the variance in contextual bandit problems. Our estimator is the cumulative least squares error of a sequence of (biased) estimators. The motivation behind this choice is the following observation,
\begin{equation*}
\Var(X) = \min_u \mathbb{E}\left[  (X-u)^2   \right]
\end{equation*}
where the minimizer $\argmin_u \mathbb{E}\left[ (X-u)^2\right] = \mathbb{E}[X]$. Inspired by this we propose the following estimation procedure. Given context-action pairs and reward information $\{ (x_\ell, a_\ell, r_\ell)\}_{\ell=1}^{t-1}$ and a filtering process $\mathbf{b}_t = \{b_\ell\}_{\ell=1}^{t-1}$ of bernoulli random variables $b_\ell \in \{0,1\}$ such that $b_\ell$ is $\Sigma(x_1, a_1, b_1, r_1, \cdots, x_{\ell-1}, a_{\ell-1}, b_{\ell-1}, r_{\ell-1}, x_{\ell}, a_{\ell})$-measurable. Let $f_t^{\mathbf{b}_t}$ be the ``filtered'' least squares estimator: 
\begin{equation*}
     f_t^{\mathbf{b}_t} = \argmin_{f \in \F} \sum_{\ell=1}^{t-1} b_\ell \cdot \left( f(x_\ell, a_\ell) - r_\ell\right)^2.
\end{equation*}
A filtered least squares estimator satisfies a least squares bound similar to Lemma~\ref{lemma::new_least_squares_estimator},

\begin{restatable}{lemma}{filteredleastsquares}\label{lemma::filtered_least_squares_guarantee}
Let $\tilde \delta \in (0,1)$, $t \in \mathbb{N}$, $\{ x_\ell, a_\ell \}_{\ell=1}^{t-1}$ be a sequence of context-action pairs and and $\{r_t\}_{t=1}^{t-1}$ be a sequence of values satisfying $r_\ell = f_\star(x_\ell, a_\ell) + \xi_\ell$ where $f_\star \in \F$ and the $\xi_\ell$ are conditionally zero mean.
Let $\{b_\ell\}_{\ell=1}^{t-1}$be a filtering process of Bernoulli random variables $b_\ell \in \{0,1\}$ such that $b_\ell$ is $\Sigma(x_1, a_1, b_1, r_1, \cdots,$ $ x_{\ell-1}, a_{\ell-1}, b_{\ell-1}, r_{\ell-1}, x_{\ell}, a_{\ell})$-measurable. Let $f_t^{\mathbf{b}_t} = \argmin_{f \in \F} \sum_{\ell=1}^{t-1} b_\ell \cdot \left( f(x_\ell, a_\ell) - r_\ell\right)^2$ be the ``filtered'' least squares estimator. If Assumption~\ref{assumption::boundedness} holds then,
\begin{equation*}
\left| \sum_{\ell=1}^{t-1} \xi_\ell \cdot b_\ell\cdot \left(  f_\star(x_\ell, a_\ell) - f_t^{\mathbf{b}_t}(x_\ell, a_\ell) \right) \right|\leq 6B^2 \log( 2|\F|/\tilde{\delta}).
\end{equation*}
and
\begin{equation*}
\sum_{\ell=1}^{t-1} b_\ell \cdot \left( f^{\mathbf{b}_t}_t(x_\ell, a_\ell) - f_\star(x_\ell, a_\ell) \right)^2 \leq 8 B^2 \log( 2 |\F|/\tilde \delta)
\end{equation*}
with probability at least $1-\tilde \delta$.  %

\end{restatable}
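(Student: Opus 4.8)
The plan is to mirror the proof of the unfiltered least squares guarantee (Lemma~\ref{lemma::new_least_squares_estimator}), carrying the filter weights $b_\ell$ through every step and checking that the martingale structure survives. First I would exploit the optimality of the filtered estimator: since $f_t^{\mathbf{b}_t}$ minimizes $\sum_\ell b_\ell (f(x_\ell,a_\ell) - r_\ell)^2$ over $f \in \F$ and $f_\star \in \F$, comparing the objective at $f_t^{\mathbf{b}_t}$ with its value at $f_\star$ and substituting $r_\ell = f_\star(x_\ell,a_\ell) + \xi_\ell$ yields, after expanding the square, the self-bounding inequality
\[
\sum_{\ell=1}^{t-1} b_\ell \left(f_t^{\mathbf{b}_t}(x_\ell,a_\ell) - f_\star(x_\ell,a_\ell)\right)^2 \le 2\sum_{\ell=1}^{t-1} b_\ell\, \xi_\ell \left(f_t^{\mathbf{b}_t}(x_\ell,a_\ell) - f_\star(x_\ell,a_\ell)\right).
\]
Both quantities to be bounded are governed by the cross term on the right, so the entire argument reduces to controlling $\sum_\ell b_\ell \xi_\ell (f(x_\ell,a_\ell) - f_\star(x_\ell,a_\ell))$ uniformly over $f \in \F$.

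The crux is the martingale concentration, and this is where I expect the one genuinely delicate point to lie. For a fixed $f \in \F$, set $D_\ell(f) = b_\ell \xi_\ell (f(x_\ell,a_\ell) - f_\star(x_\ell,a_\ell))$. The measurability hypothesis---that $b_\ell$ is $\Sigma(x_1,a_1,b_1,r_1,\dots,x_\ell,a_\ell)$-measurable---is exactly what guarantees that $b_\ell$, $f(x_\ell,a_\ell)$ and $f_\star(x_\ell,a_\ell)$ are all determined before $\xi_\ell$ is revealed; combined with the conditional mean-zero property $\EE_\ell[\xi_\ell]=0$, this makes $\{D_\ell(f)\}_\ell$ a martingale difference sequence. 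The whole lemma hinges on the filter depending only on the context-action pair and the past, never on the current reward, so verifying this is the main obstacle (everything downstream is routine). By Assumption~\ref{assumption::boundedness} each increment satisfies $|D_\ell(f)| \le B^2$, and the predictable quadratic variation is $\sum_\ell \EE_\ell[D_\ell(f)^2] = \sum_\ell b_\ell (f(x_\ell,a_\ell) - f_\star(x_\ell,a_\ell))^2 \sigma_\ell^2 \le B^2 \sum_\ell b_\ell (f(x_\ell,a_\ell)-f_\star(x_\ell,a_\ell))^2$, i.e.\ it is controlled by the same filtered squared error that appears on the left of the self-bounding inequality.

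I would then apply the simplified Freedman inequality (Lemma~\ref{lemma:super_simplified_freedman}) to $\{D_\ell(f)\}$, obtaining a bound of the form $\sqrt{c_1 B^2 S_t(f)\log(1/\tilde\delta)} + c_2 B^2 \log(1/\tilde\delta)$, where $S_t(f) = \sum_\ell b_\ell (f(x_\ell,a_\ell)-f_\star(x_\ell,a_\ell))^2$, and take a union bound over the finite class $\F$, replacing $\log(1/\tilde\delta)$ by $\log(2|\F|/\tilde\delta)$. Substituting $f = f_t^{\mathbf{b}_t}$ into the self-bounding inequality gives $S_t(f_t^{\mathbf{b}_t}) \le 2\sqrt{c_1 B^2 S_t(f_t^{\mathbf{b}_t})\log(2|\F|/\tilde\delta)} + 2c_2 B^2 \log(2|\F|/\tilde\delta)$, a quadratic inequality in $\sqrt{S_t(f_t^{\mathbf{b}_t})}$; solving it (via the elementary implication $x^2 \le ax + b \Rightarrow x^2 \le 2a^2 + 2b$) produces the second claimed bound $S_t(f_t^{\mathbf{b}_t}) \le 8B^2\log(2|\F|/\tilde\delta)$. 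Feeding this estimate back into the uniform concentration bound for the cross term then yields the first claimed inequality, $\big|\sum_\ell b_\ell \xi_\ell (f_\star(x_\ell,a_\ell) - f_t^{\mathbf{b}_t}(x_\ell,a_\ell))\big| \le 6B^2 \log(2|\F|/\tilde\delta)$. The remaining work is purely bookkeeping of the Freedman constants, which I would not grind through here; the only new ingredient relative to Lemma~\ref{lemma::new_least_squares_estimator} is the verification that the weights $b_\ell$ preserve the martingale property.
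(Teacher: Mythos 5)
Your proposal matches the paper's proof: the same self-bounding inequality obtained by comparing the filtered objective at $f_t^{\mathbf{b}_t}$ and at $f_\star$, the same martingale difference sequence $b_\ell\,\xi_\ell\,(f(x_\ell,a_\ell)-f_\star(x_\ell,a_\ell))$ whose validity rests exactly on the measurability of $b_\ell$ that you single out, and the same Freedman-plus-union-bound over $\F$. The only cosmetic difference is that the paper applies Lemma~\ref{lemma:super_simplified_freedman} with the fixed choice $\eta = 1/(4B^2)$, which gives the additive bound $\tfrac{1}{4}S_t(f) + 4B^2\log(2|\F|/\tilde\delta)$ and a direct rearrangement; this sidesteps the $\sqrt{\cdot}$-form bound and the quadratic inequality in $\sqrt{S_t}$ that you propose (the latter would need $\eta$ tuned to the random $S_t(f)$, so the additive form is the one actually available from the cited lemma, and it delivers the stated constants with less bookkeeping).
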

Based on the definitions above we will consider the following cumulative variance estimator for a filtered context, action, reward process: 
\begin{equation}\label{equation::empirical_cumulative_var_estimator}
W_t^{\mathbf{b}_t} = \sum_{\ell=1}^{t-1} b_\ell \cdot (r_\ell - f^{\mathbf{b}_t}_t(x_\ell, a_\ell))^2.
\end{equation}
We now prove this estimator achieves a small error.
\begin{restatable}{lemma}{smallerrorvarianceestimator}\label{lemma::small_error_variance_estimator}
Let $\tilde \delta \in (0,1)$ be a probability parameter. If Assumption~\ref{assumption::boundedness} holds,
\begin{equation*}
\frac{2}{3} \cdot W_t^{\mathbf{b}_t} - 11B^2 \log( 4|\F|/\tilde{\delta}) \leq  \widebar{W}_t^{\mathbf{b}_t}\leq 2W_t^{\mathbf{b}_t} + 48 B^2 \log( 4|\F|/\tilde{\delta}) 
\end{equation*}
with probability at least $1-\tilde \delta$ where $\widebar{W}_t^{\mathbf{b}_t} = \sum_{\ell=1}^{t-1} b_\ell \cdot \sigma_\ell^2 $.

\end{restatable}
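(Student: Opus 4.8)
The plan is to expand the squared residuals defining $W_t^{\mathbf{b}_t}$ around the noise and the estimation bias, reducing the empirical sum to a sum of squared noise terms plus two lower-order corrections that are already controlled by Lemma~\ref{lemma::filtered_least_squares_guarantee}. Writing $r_\ell - f_t^{\mathbf{b}_t}(x_\ell,a_\ell) = \xi_\ell + \Delta_\ell$ with $\Delta_\ell = f_\star(x_\ell,a_\ell) - f_t^{\mathbf{b}_t}(x_\ell,a_\ell)$, I would first record the identity
\begin{equation*}
W_t^{\mathbf{b}_t} = \sum_{\ell=1}^{t-1} b_\ell \xi_\ell^2 + 2\sum_{\ell=1}^{t-1} b_\ell \xi_\ell \Delta_\ell + \sum_{\ell=1}^{t-1} b_\ell \Delta_\ell^2 .
\end{equation*}
The last two summands are exactly the objects bounded in Lemma~\ref{lemma::filtered_least_squares_guarantee}: on the event of that lemma (invoked with failure probability $\tilde\delta/2$), the cross term has absolute value at most $12B^2\log(4|\F|/\tilde\delta)$ and the bias term lies in $[0,\,8B^2\log(4|\F|/\tilde\delta)]$. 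Consequently $\sum_\ell b_\ell\xi_\ell^2$ and $W_t^{\mathbf{b}_t}$ agree up to additive $\mathcal{O}(B^2\log(|\F|/\tilde\delta))$ terms, and the task reduces to comparing $\sum_\ell b_\ell\xi_\ell^2$ with $\widebar{W}_t^{\mathbf{b}_t} = \sum_\ell b_\ell\sigma_\ell^2$.

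For this comparison I would set $M_\ell = b_\ell\xi_\ell^2 - b_\ell\sigma_\ell^2$ and observe it is a martingale difference sequence: because $b_\ell$ is measurable before $\xi_\ell$ is revealed and $\mathbb{E}_\ell[\xi_\ell^2]=\sigma_\ell^2$, we have $\mathbb{E}_\ell[M_\ell]=0$. Boundedness yields $|M_\ell|\le B^2$, and the key predictable-variance estimate uses $\xi_\ell^2\le B^2$ to get $\mathbb{E}_\ell[M_\ell^2]\le b_\ell\,\mathbb{E}_\ell[\xi_\ell^4]\le B^2 b_\ell\sigma_\ell^2$, so the total quadratic variation is at most $B^2\,\widebar{W}_t^{\mathbf{b}_t}$. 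Applying Freedman's inequality (Lemma~\ref{lemma:super_simplified_freedman}) to $M_\ell$ and to $-M_\ell$ with failure probability $\tilde\delta/2$ then gives, up to the constants of that lemma,
\begin{equation*}
\left| \sum_{\ell=1}^{t-1} b_\ell\xi_\ell^2 - \widebar{W}_t^{\mathbf{b}_t} \right| \le \sqrt{\, c_1 B^2\,\widebar{W}_t^{\mathbf{b}_t}\,\log(4/\tilde\delta)\,} + c_3 B^2\log(4/\tilde\delta) .
\end{equation*}

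The main obstacle is that the Freedman deviation is itself expressed through $\widebar{W}_t^{\mathbf{b}_t}$, the very quantity to be estimated; this self-reference is what produces the multiplicative (rather than additive) form of the conclusion. I would resolve it with the arithmetic–geometric split $\sqrt{c_1 B^2\,\widebar{W}_t^{\mathbf{b}_t}\log(4/\tilde\delta)} \le \tfrac{1}{2}\widebar{W}_t^{\mathbf{b}_t} + \tfrac{c_1}{2}B^2\log(4/\tilde\delta)$, which peels off exactly half of $\widebar{W}_t^{\mathbf{b}_t}$ into a term that can be absorbed, leaving only an additive logarithmic remainder. Rearranging the two one-sided inequalities then turns $\widebar{W}_t^{\mathbf{b}_t} - S \le \tfrac12\widebar{W}_t^{\mathbf{b}_t} + (\cdots)$ and $S - \widebar{W}_t^{\mathbf{b}_t} \le \tfrac12\widebar{W}_t^{\mathbf{b}_t} + (\cdots)$ into $\widebar{W}_t^{\mathbf{b}_t}\le 2S + (\cdots)$ and $\widebar{W}_t^{\mathbf{b}_t}\ge \tfrac23 S - (\cdots)$, since $\tfrac{1}{1-1/2}=2$ and $\tfrac{1}{1+1/2}=\tfrac23$ — precisely the multiplicative constants in the statement. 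Substituting the relation between $S=\sum_\ell b_\ell\xi_\ell^2$ and $W_t^{\mathbf{b}_t}$ from the first paragraph and taking a union bound that allocates $\tilde\delta/2$ to Lemma~\ref{lemma::filtered_least_squares_guarantee} and $\tilde\delta/2$ to Freedman (so that both log factors are dominated by the single $\log(4|\F|/\tilde\delta)$) completes the argument; collecting all the additive constants from the two lemmas and the split yields the stated $11B^2\log(4|\F|/\tilde\delta)$ and $48B^2\log(4|\F|/\tilde\delta)$.
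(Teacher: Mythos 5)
Your proposal is correct and follows essentially the same route as the paper: the same decomposition of $W_t^{\mathbf{b}_t}$ into the squared-noise, cross, and bias terms controlled by Lemma~\ref{lemma::filtered_least_squares_guarantee}, followed by a Freedman-type comparison of $\sum_\ell b_\ell\xi_\ell^2$ with $\widebar{W}_t^{\mathbf{b}_t}$ using the variance proxy $B^2 b_\ell\sigma_\ell^2$ and absorbing half of $\widebar{W}_t^{\mathbf{b}_t}$ (the paper packages this last step as Proposition~\ref{proposition::consequence_freedman_positive_rvs_notanytime} with $\beta=1/2$, which is the $\eta$-form of Freedman rather than your square-root-plus-AM-GM phrasing, but the computation is identical). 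The union bound allocation and the resulting constants $2/3$, $2$, $11$, $48$ all match.
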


Using the union bound (by setting $\tilde \delta  = \frac{\delta'}{2\cdot t^2}$ in Lemma~\ref{lemma::filtered_least_squares_guarantee}) we can write an anytime guarantee for the variance estimators $W_t^{\mathbf{b}_t} $).
\begin{corollary}\label{corollary::variance_cumulative_estimation_coarse}
    Let $ \delta' \in (0,1)$, $\{ x_\ell, a_\ell, r_\ell \}_{\ell=1}^{\infty}$ be a sequence of context-action and rewards triplets such that $r_\ell = f_\star(x_\ell, a_\ell) + \xi_\ell$ where $f_\star \in \F$ and the $\xi_\ell$ are conditionally zero mean. Let $\{b_\ell\}_{\ell=1}^{t-1}$be a filtering process of Bernoulli random variables $b_\ell \in \{0,1\}$ such that $b_\ell$ is $\Sigma(x_1, a_1, b_1, r_1, \cdots,$ $ x_{\ell-1}, a_{\ell-1}, b_{\ell-1}, r_{\ell-1}, x_{\ell}, a_{\ell})$-measurable and $f_t^{\mathbf{b}_t} = \argmin_{f \in \F} \sum_{\ell=1}^{t-1} b_\ell \cdot \left( f(x_\ell, a_\ell) - r_\ell\right)^2$ be the ``filtered'' least squares estimator. If Assumption~\ref{assumption::boundedness} holds there exists a universal constant $C > 0$ such that the cumulative variance estimator $W_t^{\mathbf{b}_t} = \sum_{\ell=1}^{t-1} b_\ell \cdot (r_\ell - f^{\mathbf{b}_t}_t(x_\ell, a_\ell))^2$ satisfies,
    \begin{equation*}
        \frac{2}{3} \cdot W_t^{\mathbf{b}_t} - C \cdot B^2 \log( t|\F|/\delta') \leq  \widebar{W}_t^{\mathbf{b}_t}\leq 2W_t^{\mathbf{b}_t} + C \cdot B^2 \log( t|\F|/\delta') 
    \end{equation*}

with probability at least $1-\delta'$ for all $t \in \mathbb{N}$.

\end{corollary}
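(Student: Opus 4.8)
The plan is to obtain the anytime guarantee from the fixed-time guarantee of Lemma~\ref{lemma::small_error_variance_estimator} by a standard union bound over $t$, with a summable choice of per-time failure probabilities. First I would invoke Lemma~\ref{lemma::small_error_variance_estimator} separately at each time index $t \in \mathbb{N}$, but with the confidence parameter rescaled to $\tilde\delta_t = \frac{\delta'}{2t^2}$. Since that lemma is valid for every $\tilde\delta \in (0,1)$, this substitution is legitimate and produces, for each fixed $t$, an event of probability at least $1 - \frac{\delta'}{2t^2}$ on which the two-sided bound
\[
\frac{2}{3} W_t^{\mathbf{b}_t} - 11 B^2 \log(8t^2|\F|/\delta') \leq \widebar{W}_t^{\mathbf{b}_t} \leq 2 W_t^{\mathbf{b}_t} + 48 B^2 \log(8t^2|\F|/\delta')
\]
holds, where I have used $4|\F|/\tilde\delta_t = 8t^2|\F|/\delta'$.

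Next I would take a union bound over all $t \in \mathbb{N}$. The total failure probability is at most $\sum_{t=1}^\infty \frac{\delta'}{2t^2} = \frac{\delta'}{2}\cdot\frac{\pi^2}{6} < \delta'$, so on the intersection of the per-time events — an event of probability at least $1-\delta'$ — the displayed two-sided inequality holds for every $t$ simultaneously. The specific choice $\tilde\delta_t = \delta'/(2t^2)$ is made precisely so that the tail $\sum_t 1/t^2$ converges to a constant below $1$ after the factor $\frac{1}{2}$, keeping the cumulative failure probability under $\delta'$.

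Finally I would collapse the logarithmic factor and the two numerical constants into the single universal constant $C$ appearing in the statement. Using $\log(8t^2|\F|/\delta') = \log 8 + 2\log t + \log(|\F|/\delta')$ and the fact that $t, |\F|, 1/\delta' \geq 1$ (so all three logarithms are nonnegative), one gets $\log(8t^2|\F|/\delta') \leq C'\log(t|\F|/\delta')$ for a universal $C'$; folding $C'$ together with the constants $11$ and $48$ yields a single $C$ so that both sides take the form $\pm C\cdot B^2 \log(t|\F|/\delta')$, matching the claim.

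The argument is essentially routine: the only points requiring care are (i) selecting a summable sequence $\tilde\delta_t$ whose total is controlled by $\delta'$ — the $1/t^2$ choice with the $\pi^2/6$ bound does this cleanly — and (ii) the bookkeeping that absorbs the $8t^2$ inside the logarithm and the distinct constants $11,48$ into one universal $C$ and the single term $\log(t|\F|/\delta')$. I do not expect a genuine obstacle here; the substance of the result already lives in Lemma~\ref{lemma::small_error_variance_estimator} (and, beneath it, Lemma~\ref{lemma::filtered_least_squares_guarantee}), and this corollary merely upgrades it to hold uniformly in time.
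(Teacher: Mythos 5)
Your proposal is correct and follows essentially the same route as the paper: apply the fixed-time sandwich bound with confidence parameter $\delta'/(2t^2)$, union bound over $t\in\mathbb{N}$, and absorb the resulting $\log(8t^2|\F|/\delta')$ and the constants $11,48$ into a single $C\cdot B^2\log(t|\F|/\delta')$. (If anything, you are slightly more careful than the paper, which nominally cites Lemma~\ref{lemma::filtered_least_squares_guarantee} where it is really the conclusion of Lemma~\ref{lemma::small_error_variance_estimator} that is being rescaled.)
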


\begin{proof}
Applying Lemma~\ref{lemma::filtered_least_squares_guarantee} with $\tilde \delta  = \frac{\delta'}{2t^2}$ and applying a union bound over all $t \in \mathbb{N}$ yields the inequality,
\begin{equation*}
\frac{2}{3} \cdot W_t^{\mathbf{b}_t} - 11B^2 \log( 8t^2|\F|/\delta') \leq  \widebar{W}_t^{\mathbf{b}_t}\leq 2W_t^{\mathbf{b}_t} + 48 B^2 \log( 8t^2|\F|/\tilde{\delta}) 
\end{equation*}
finally, $48\log(8t^2|\F|/\delta') = \Theta\left(  t|\F|/\delta'  \right)$ yields the desired result.
\end{proof}

\subsection{Unknown-Variance Guarantees for Algorithm~\ref{alg:contextual_known_variance}}\label{section::unknownvarguaranteesalgoknownvar}

Although Algorithm~\ref{alg:contextual_known_variance}
was formulated under the assumption of a known variance upper bound $\sigma$, in this section we show it is possible to combine the variance estimation procedure we propose here with Algorithm~\ref{alg:contextual_known_variance}. A simple and immediate consequence of Corollary~\ref{corollary::variance_cumulative_estimation_coarse} is,
\begin{corollary}\label{corollary::fixed_variance}
Let $\delta'\in (0,1)$. Under the assumptions of Corollary~\ref{corollary::variance_cumulative_estimation_coarse}. If $\sigma_t = \sigma$ for all $t \in \mathbb{N}$ and we define $N_t^{\mathbf{b}_t} = \sum_{\ell=1}^{t-1} b_\ell$ then,
\begin{equation*}
    \frac{2W_t^{\mathbf{b}_t}}{3N_t^{\mathbf{b}_t}}  - \frac{C \cdot B^2 \log( t|\F|/\delta')}{N_t^{\mathbf{b}_t}} \leq \sigma^2 \leq \frac{2W_t^{\mathbf{b}_t}}{N_t^{\mathbf{b}_t}} + \frac{C \cdot B^2 \log( t|\F|/\delta')}{N_t^{\mathbf{b}_t}} 
\end{equation*}
with probability at least $1-\delta'$ for all $t \in \mathbb{N}$. Where $C > 0$ is the same universal constant as in Corollary~\ref{corollary::variance_cumulative_estimation_coarse}.
\end{corollary}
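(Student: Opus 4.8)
The plan is to derive this as an immediate algebraic consequence of Corollary~\ref{corollary::variance_cumulative_estimation_coarse}. The key observation is that the fixed-variance assumption collapses the weighted variance sum $\widebar{W}_t^{\mathbf{b}_t}$ into a clean multiple of the effective sample size. First I would invoke Corollary~\ref{corollary::variance_cumulative_estimation_coarse}, which guarantees that with probability at least $1-\delta'$, simultaneously for all $t \in \mathbb{N}$,
\begin{equation*}
\frac{2}{3} \cdot W_t^{\mathbf{b}_t} - C \cdot B^2 \log( t|\F|/\delta') \leq  \widebar{W}_t^{\mathbf{b}_t}\leq 2W_t^{\mathbf{b}_t} + C \cdot B^2 \log( t|\F|/\delta'),
\end{equation*}
where by definition $\widebar{W}_t^{\mathbf{b}_t} = \sum_{\ell=1}^{t-1} b_\ell \cdot \sigma_\ell^2$. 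I would condition on this high-probability event for the remainder of the argument.

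The central step is to exploit the hypothesis $\sigma_t = \sigma$ for all $t$. Under this assumption every summand carries the same variance $\sigma_\ell^2 = \sigma^2$, so the weighted sum factors as
\begin{equation*}
\widebar{W}_t^{\mathbf{b}_t} = \sum_{\ell=1}^{t-1} b_\ell \cdot \sigma^2 = \sigma^2 \sum_{\ell=1}^{t-1} b_\ell = \sigma^2 N_t^{\mathbf{b}_t},
\end{equation*}
using the definition $N_t^{\mathbf{b}_t} = \sum_{\ell=1}^{t-1} b_\ell$. Substituting this identity into the bound from Corollary~\ref{corollary::variance_cumulative_estimation_coarse} replaces $\widebar{W}_t^{\mathbf{b}_t}$ everywhere by $\sigma^2 N_t^{\mathbf{b}_t}$, yielding a sandwich inequality on $\sigma^2 N_t^{\mathbf{b}_t}$ directly.

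To conclude, I would divide the resulting two-sided inequality through by $N_t^{\mathbf{b}_t}$, which isolates $\sigma^2$ in the middle and produces exactly the claimed bounds with the same universal constant $C$; the probability $1-\delta'$ and the simultaneity over all $t$ carry over unchanged since we never left the conditioning event. The only point requiring a caveat is that this division presumes $N_t^{\mathbf{b}_t} > 0$, i.e.\ at least one filter variable $b_\ell$ has fired by round $t$; when $N_t^{\mathbf{b}_t} = 0$ the statement is vacuous (no estimator has been formed), so this is a harmless degeneracy rather than a genuine obstacle. Indeed, there is no real difficulty here: the entire content of the corollary is the observation that constant variance lets the bound of Corollary~\ref{corollary::variance_cumulative_estimation_coarse} be normalized into a per-sample estimate of $\sigma^2$, and the proof is a substitution followed by a rescaling.
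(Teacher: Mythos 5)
Your proof is correct and matches the paper's intended argument exactly: the paper presents this corollary as an ``immediate consequence'' of Corollary~\ref{corollary::variance_cumulative_estimation_coarse} without writing out a proof, and your substitution $\widebar{W}_t^{\mathbf{b}_t} = \sigma^2 N_t^{\mathbf{b}_t}$ followed by division by $N_t^{\mathbf{b}_t}$ is precisely the calculation being elided. Your remark about the degenerate case $N_t^{\mathbf{b}_t} = 0$ is a fair caveat that the paper itself does not address.
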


Let $\{\mathbf{b}_t\}_{t\in \mathbb{N}}$ be the trivial filtering process defined by setting $b_\ell=1$ for $ \ell \leq t-1$ so that $N_t^{\mathbf{b}_t} = t-1$ and define the variance upper-bound estimator sequence $\widehat{\sigma}_1^2 = B^2$ and $\widehat{\sigma}^2_t = \min\left( \widehat{\sigma}_{t-1}^2,   \frac{2W_t^{\mathbf{b}_t}}{t-1} + \frac{C \cdot B^2 \log( t|\F|/\delta')}{t-1} \right) $ for all $t \geq 2$. Corollary~\ref{corollary::fixed_variance} implies that 
\begin{equation}\label{equation::sandwich_sigma_hat_t}
\sigma^2\leq \widehat{\sigma}_t \leq \frac{2W_t^{\mathbf{b}_t}}{t-1} + \frac{C \cdot B^2 \log( t|\F|/\delta')}{t-1} \leq 3\sigma^2 + \frac{3C \cdot B^2 \log( t|\F|/\delta')}{t-1} 
\end{equation}
 for all $t \in \mathbb{N}$ with probability at least $1-\delta'$. 

We'll analyze a version of Algorithm~\ref{alg:contextual_known_variance} where the confidence sets $\G_t(\tau_i)$ are computed using confidence radii equal to $\beta_t\left( \tau_i, \frac{\delta}{2(i+1)^2} , \widehat{\sigma}_t^2\right)$ in equation~\ref{equation::conf_definition_var}. With these parameter choices, we can show a result equivalent to Lemma~\ref{lemma::confidence_set_optimism},

\begin{corollary}\label{corollary::confidence_bounds_optimism_unknownsinglevar}
Let $\delta \in (0,1)$. If $\delta/2$ is the input to Algorithm~\ref{alg:contextual_known_variance} and $\widehat{\sigma}_t^2$ estimators are computed by setting $\delta' = \delta/2$, then
the confidence intervals are valid so that $f_\star \in \G_t$ for all $t \in \mathbb{N}$ and optimism holds, $\max_{a \in \A} f_\star(x_t, a) \leq U_t(x_t,a_t) $
with probability at least $1-\delta$ for all $t\in \mathbb{N}$.
\end{corollary}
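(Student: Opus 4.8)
The plan is to reduce this to the known-variance analysis of Lemma~\ref{lemma::confidence_set_optimism} through two ingredients: the monotonicity of the confidence radius in its variance argument, and the one-sided validity of the plug-in variance estimator recorded in~\eqref{equation::sandwich_sigma_hat_t}. First I would isolate the variance-estimation event $\mathcal{E}_{\mathrm{var}}$ on which the sandwich bound~\eqref{equation::sandwich_sigma_hat_t} holds for every $t$; with $\delta' = \delta/2$, Corollary~\ref{corollary::fixed_variance} guarantees $\P(\mathcal{E}_{\mathrm{var}}) \geq 1 - \delta/2$, and I would retain from it only the lower half $\sigma^2 \leq \widehat{\sigma}_t^2$ for all $t \in \mathbb{N}$ (the upper half is needed only later, for the regret rate).

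The key deterministic observation is that $\beta_t(\tau, \tilde\delta, \widetilde\sigma^2) = (4\min(\tau B, B^2) + 16\widetilde\sigma^2)\log(t|\F|/\tilde\delta)$ is nondecreasing in $\widetilde\sigma^2$, so on $\mathcal{E}_{\mathrm{var}}$,
\[
\beta_t(\tau_i, \tilde\delta, \sigma^2) \leq \beta_t(\tau_i, \tilde\delta, \widehat{\sigma}_t^2) \qquad \text{for all } t,\, i,\, \tilde\delta > 0 .
\]
Thus the data-dependent radius used to form the confidence sets in~\eqref{equation::conf_definition_var} dominates the radius that would be used with perfect knowledge of $\sigma^2$. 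I would then mirror the induction in the proof of Lemma~\ref{lemma::confidence_set_optimism} (run with algorithm input $\delta/2$), but invoke Proposition~\ref{proposition::variance_aware_least_squares_proposition} with the \emph{deterministic} bound $\widetilde\sigma^2 = \sigma^2$, which is legitimate in the fixed-variance regime since $\sigma_\ell^2 = \sigma^2$ for every $\ell$. On the event $\E_t$ that $f_\star \in \G_{t-1}$, the proposition certifies $\sum_\ell (f_t^{\tau_i}(x_\ell,a_\ell) - f_\star(x_\ell,a_\ell))^2 \mathbf{1}(\omega(x_\ell,a_\ell,\G_\ell) \leq \tau_i) \leq \beta_t(\tau_i, \tilde\delta_i, \sigma^2)$ (with $\tilde\delta_i$ the parameter appearing in~\eqref{equation::conf_definition_var} for input $\delta/2$) with the stated conditional probability; chaining the displayed monotonicity inequality, this quantity is also at most $\beta_t(\tau_i, \tilde\delta_i, \widehat{\sigma}_t^2)$ on $\mathcal{E}_{\mathrm{var}}$, which is exactly the membership test of~\eqref{equation::conf_definition_var}. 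Hence $f_\star \in \G_t(\tau_i)$ for every $i$, and since $\G_t = \G_{t-1} \cap (\cap_i \G_t(\tau_i))$ together with the inductive hypothesis $f_\star \in \G_{t-1}$, we obtain $f_\star \in \G_t$, i.e.\ $\E_{t+1}$ holds. Summing the per-index failure probabilities over $i \geq 0$ (using $\sum_{i\geq 0}(i+1)^{-2} = \pi^2/6$) and the anytime union bound over $t$, exactly as in Lemma~\ref{lemma::confidence_set_optimism} with its parameter set to $\delta/2$, accounts for at most $\delta/2$ of failure for the confidence-set part. A final union bound with $\mathcal{E}_{\mathrm{var}}$ yields probability at least $1-\delta$ that $f_\star \in \G_t$ for all $t$, on which optimism follows as in Corollary~\ref{corollary::optimism_basic_ls}: $f_\star(x_t,a) \leq \max_{f\in\G_t} f(x_t,a) = U_t(x_t,a) \leq U_t(x_t,a_t)$ for every $a \in \A$.

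The main obstacle is that $\widehat{\sigma}_t^2$ is random and measurable with respect to the very data stream the algorithm consumes, so Proposition~\ref{proposition::variance_aware_least_squares_proposition}, which requires a deterministic variance upper bound $\widetilde\sigma^2$, cannot be applied with $\widetilde\sigma^2 = \widehat{\sigma}_t^2$ directly. The monotonicity swap is precisely what decouples the two sources of randomness: membership is certified against the fixed radius driven by the true $\sigma^2$, and then inflated for free to the larger data-dependent radius on $\mathcal{E}_{\mathrm{var}}$. The only points warranting care are that a one-sided estimator guarantee $\sigma^2 \leq \widehat{\sigma}_t^2$ suffices for validity (no upper bound is used here), and that the two halves of the probability budget, $\delta/2$ for the estimator and $\delta/2$ for the confidence-set induction, are allocated consistently so that they sum to $\delta$.
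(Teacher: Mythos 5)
Your proposal is correct and follows essentially the same route the paper intends: the paper justifies this corollary in one line as ``a simple union bound between the result of Lemma~\ref{lemma::confidence_set_optimism} and the inequality $\sigma^2 \leq \widehat{\sigma}_t^2$,'' and your argument is exactly that union bound, made precise via the monotonicity of $\beta_t(\tau,\tilde\delta,\cdot)$ in its variance argument and a rerun of the Lemma~\ref{lemma::confidence_set_optimism} induction against the deterministic radius $\beta_t(\tau_i,\tilde\delta_i,\sigma^2)$. Your explicit handling of the measurability issue (certifying membership against the fixed $\sigma^2$-radius and only then inflating to the data-dependent $\widehat{\sigma}_t^2$-radius on the estimation event) is a correct and welcome elaboration of a step the paper leaves implicit.
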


The proof of this result follows by a simple union bound between the result of Lemma~\ref{lemma::confidence_set_optimism} and the inequality $\sigma^2 \leq \widehat{\sigma}_t^2$. Let $\bar{\E}$ denote the event described in Corollary~\ref{corollary::confidence_bounds_optimism_unknownsinglevar}. This version of Algorithm~\ref{alg:contextual_known_variance} satisfies the following guarantees,

\begin{restatable}{theorem}{theoremunknownvarknownvar}\label{theorem::main_unkonwn_var_known}
Let $\delta \in (0,1)$. If $\delta/2$ is the input to Algorithm~\ref{alg:contextual_known_variance} and $\widehat{\sigma}_t^2$ estimators are computed by setting $\delta' = \delta/2$. If $\sigma_t = \sigma$ for all $t \in \mathbb{N}$ the regret of Algorithm~\ref{alg:contextual_known_variance} with modified confidence set sizes satisfies,
\begin{equation*}
    \regret(T) \leq \mathcal{O}\left(    \sigma \sqrt{  \delud(\F, B/T) \log(T|\F|/\delta) T} + B \delud(\F, B/T) \log^2(T)\log(T|\F|/\delta)     \right) .
\end{equation*}
for all $ T \in \mathbb{N}$ with probability at least $1-\delta$. 
\end{restatable}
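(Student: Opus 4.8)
The plan is to mirror the proof of the known-variance bound in Theorem~\ref{theorem::main_theorem_simplified_known_var}, substituting the estimated variance $\widehat{\sigma}_t^2$ for the known upper bound $\sigma^2$ throughout, and then to control the discrepancy between the two using the two–sided sandwich~\eqref{equation::sandwich_sigma_hat_t}. First I would condition on the event $\bar{\E}$ of Corollary~\ref{corollary::confidence_bounds_optimism_unknownsinglevar}, which holds with probability at least $1-\delta$ and, by its construction via the union bound together with the variance–estimation guarantee~\eqref{equation::sandwich_sigma_hat_t}, simultaneously yields (i) validity of the confidence sets, so that $f_\star \in \G_t$ and $\max_{a \in \A} f_\star(x_t,a) \le U_t(x_t,a_t)$ for all $t$, and (ii) the sandwich $\sigma^2 \le \widehat{\sigma}_t^2 \le 3\sigma^2 + \tfrac{3C B^2 \log(t|\F|/\delta')}{t-1}$. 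Exactly as in the proofs of Theorems~\ref{theorem::main_theorem_optimistic_least_squares} and~\ref{theorem::main_theorem_simplified_known_var}, optimism reduces the regret to a sum of uncertainty widths, $\regret(T) \le \sum_{t=1}^T \omega(x_t,a_t,\G_t)$, so it remains to bound this sum for the confidence sets built from the estimated variances.

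The left inequality $\sigma^2 \le \widehat{\sigma}_t^2$ is precisely what Corollary~\ref{corollary::confidence_bounds_optimism_unknownsinglevar} requires for optimism, and it plays no further role. The right inequality is what I would feed into the eluder machinery. Concretely, I would re-derive the counting bound of Lemma~\ref{lemma::variance_aware_fundamental_eluder} and the integration bound of Lemma~\ref{lemma::bounding_sum_widths} with $\beta_t(\tau_i,\tfrac{\delta}{2(i+1)^2},\widehat{\sigma}_t^2)$ in place of $\beta_t(\tau_i,\tfrac{\delta}{2(i+1)^2},\sigma^2)$. Since $\widehat{\sigma}_t^2$ enters the confidence radius only through the summand $16\widehat{\sigma}_t^2\log(t|\F|/\delta_i)$, the bound $\widehat{\sigma}_t^2 \le 3\sigma^2 + \tfrac{3C B^2 \log(t|\F|/\delta')}{t-1}$ splits the effective variance into a constant–factor inflation of the true variance, $3\sigma^2$, plus a decaying overestimate of order $\tfrac{B^2\log(t|\F|/\delta')}{t-1}$. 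The $3\sigma^2$ piece reproduces the known-variance analysis up to an absolute constant, and hence contributes the dominating term $\sigma\sqrt{\delud(\F,B/T)\log(T|\F|/\delta)\,T}$ together with the lower-order term $B\,\delud(\F,B/T)\log(T)\log(T|\F|/\delta)$ already present in Theorem~\ref{theorem::main_theorem_simplified_known_var}.

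The new contribution comes entirely from the decaying overestimate. When this term is carried through the counting argument and then summed over the rounds (equivalently, integrated over the thresholds $\tau_i = B/2^i$ as in the proof of Lemma~\ref{lemma::bounding_sum_widths}), the factor $\tfrac{1}{t-1}$ accumulates, $\sum_{t\le T}\tfrac{1}{t-1} = \O(\log T)$, while the accompanying $\log(t|\F|/\delta')$ supplies a second logarithmic factor. This is exactly the mechanism that upgrades the lower-order term from $\log(T)$ in Theorem~\ref{theorem::main_theorem_simplified_known_var} to $\log^2(T)$, producing the claimed $B\,\delud(\F,B/T)\log^2(T)\log(T|\F|/\delta)$. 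Collecting the two contributions and recalling $\P(\bar{\E}) \ge 1-\delta$ then gives the stated bound for all $T$ simultaneously.

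The step I expect to be the main obstacle is incorporating the time-varying estimate $\widehat{\sigma}_t^2$ into the eluder counting lemma, since the proof of Lemma~\ref{lemma::variance_aware_fundamental_eluder} is phrased with a single confidence-radius budget, whereas here the radius changes from round to round. The monotonicity of $\widehat{\sigma}_t^2$ (defined as a running minimum) and the nesting $\G_t \subseteq \G_{t-1}$ should allow me to use a fixed worst-case budget for the constant-variance piece while treating the decaying piece separately. Care is needed to verify that this separation does not secretly inflate the leading $\sqrt{T}$ term, and in particular that the crude early-round regime, where $\widehat{\sigma}_t^2$ may be as large as $B^2$, is absorbed into the $\delud(\F,\tau)$ count rather than producing a spurious $\sigma^{-2}$ dependence through a naive per-round bound.
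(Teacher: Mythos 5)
Your setup is right (condition on $\bar{\E}$, use optimism to reduce to $\sum_t \omega(x_t,a_t,\G_t)$, feed the sandwich~\eqref{equation::sandwich_sigma_hat_t} into the eluder machinery), and you correctly locate the main obstacle: Lemma~\ref{lemma::variance_aware_fundamental_eluder} assumes a single confidence-radius budget while here the radius shrinks with $t$. But the proposal stops exactly at that obstacle without supplying the idea that resolves it. ``A fixed worst-case budget'' for the decaying piece taken over all of $[1,T]$ is $\Theta\bigl(B^2\log(T|\F|/\delta)\bigr)$ (attained at the early rounds where $\widehat{\sigma}_t^2$ can be of order $B^2$), and plugging that into the counting lemma collapses the bound back to the non-variance-aware $B\sqrt{\delud\, T\log}$ rate --- the very failure mode you flag at the end but do not avert. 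The paper's resolution is a dyadic peeling of the horizon: split $[1,T]$ into epochs $\T_j=(2^{j-1},2^j]$, use monotonicity of the running-minimum estimator to get the \emph{uniform-within-epoch} budget $\widehat{\sigma}^2_{\min(\T_j)}=\O\bigl(\sigma^2+B^2\log(T|\F|/\delta)/2^{j}\bigr)$, and rerun the eluder counting and integration arguments \emph{restricted to each epoch}, where the $B^2\log^2/(\tau^2\max(\T_j))$ term is absorbed into the $B/\omega$ branch of the resulting maximum. That decomposition is the missing step, not a routine verification.

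Relatedly, your account of where the $\log^2(T)$ comes from is not the one that works. It does not arise from ``$\sum_{t\le T}\tfrac{1}{t-1}=\O(\log T)$ applied to the decaying overestimate'': in the paper's argument the decaying overestimate is ultimately dominated by the other terms in the per-epoch count and contributes nothing extra by itself. The second logarithm appears because the integration step must be restarted inside each of the $\O(\log T)$ epochs, so each epoch pays its own harmonic-sum factor $B\,\delud(\F,B/T)\log(|\T_j|)\log(T|\F|/\delta)$, and $\sum_{j=1}^{\lceil\log T\rceil}\log(|\T_j|)=\O(\log^2 T)$ while $\sum_j\sqrt{|\T_j|}=\O(\sqrt{T})$ keeps the leading term intact. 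Without the epoch decomposition you have neither a valid counting bound nor the claimed $\log^2(T)$ low-order term, so as written the proof does not go through.
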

The proof of Theorem~\ref{theorem::main_unkonwn_var_known} can be found in Appendix~\ref{section::proofs_uknown_var_version_known}.

\subsection{Unknown-Variance Dependent Least Squares Regression}\label{section::unknown_variance_bounds}

We borrow the setting of Section~\ref{section::second_order_known_variance} with a few modifications. Given a data stream $\{ (x_\ell, a_\ell, r_\ell)\}_{\ell \in \mathbb{N}}$ where $r_\ell = f_\star(x_\ell, a_\ell) + \xi_\ell$ for $f_\star \in \F$ such that $\xi_\ell$ is conditionally zero mean, a sequence of subsets of $\G_t' \subseteq \cdots \G'_2 \subseteq \G'_1 = \F$ such that $\G'_t$ is a function of $\{ (x_\ell, a_\ell, r_\ell) \}_{\ell=1}^{t-1}$), and $f_\star \in \G'_t$ for all $t \in \mathbb{N}$. Given $\tau > 0$  we define an uncertainty-filtered least squares objective, 

\begin{equation}\label{equation::definition_f_t_tau_twotau}
f_t^{(\tau, 2\tau]} = \argmin_{f \in \G_{t-1}} \sum_{\ell=1}^{t-1} (f(x_\ell, a_\ell) - r_\ell)^2 \mathbf{1}\left(  \omega(x_\ell, a_\ell, \G'_\ell) \in (\tau, 2\tau]   \right) .
\end{equation}
in the following, for any $\tau$ we'll use the notation $b_\ell^\tau$ to denote the filtering random variables $b_\ell^\tau = \mathbf{1}(  \omega(x_\ell, a_\ell, \G_\ell') \in (\tau, 2\tau]  )$. Similarly, we denote by $\mathbf{b}_t^\tau = (b_1^\tau, \cdots, b_{t-1}^\tau ) $.

\begin{algorithm}[ht]
    \caption{Unknown-Variance Second Order Optimistic Least Squares }\label{alg:contextual_unknown_variance}
    \begin{algorithmic}[1]
    \STATE \textbf{Input:} probability parameter $\delta \in (0,1)$, function class $\F$.
    \STATE Set the initial confidence set $\G'_0 = \F$.
        \FOR{ $t=1, 2, \cdots$}
            \STATE Compute regression function for each threshold level $\tau_i = \frac{B}{2^i}$ for $i \in  [q_t]$ where $q_t = \lceil \log(t) \rceil$
            \begin{equation*}
                f_{t}^{(\tau_i, \tau_{i-1}]} = \argmin_{f \in \G_{t-1}'} \sum_{\ell=1}^{t-1} (f(x_\ell, a_\ell) - r_\ell)^2 \mathbf{1}\left(  \omega(x_\ell, a_\ell, \G_\ell') \in ( \tau_i, \tau_{i-1} ]   \right) 
            \end{equation*}
                
        \STATE Estimate the sum of the filtered variances for all threshold levels $\tau_i$ for $i \in [q_t]$.
    \begin{equation*}
                W_t^{\mathbf{b}^{\tau_i}_t} = \sum_{\ell=1}^{t-1} \mathbf{1}\left(  \omega(x_\ell, a_\ell, \G_\ell') \in ( \tau_i, \tau_{i-1} ]   \right) \cdot (r_\ell - f_{t}^{(\tau_i, \tau_{i-1}]}(x_\ell, a_\ell))^2.
    \end{equation*}
        \STATE Compute threshold confidence sets for all $i \in  [q_t]$,
        \begin{small}
        \begin{align*}
            \G_t'(\tau_i) = 
            \Bigg\{ f \in \F :  
    \sum_{\ell=1}^{t-1} \left( f_t^\tau(x_\ell, a_\ell) - f(x_\ell, a_\ell)  \right)^2 \mathbf{1}( \omega(x_\ell, a_\ell, \G_\ell) \leq \tau_i ) \leq    C'\tau_i \sqrt{ W_t^{\mathbf{b}^{\tau_i}_t} \log\left( 2i^2t|\F|/ \delta\right) } \qquad \qquad \qquad \qquad \qquad   \\
  + C' \tau_i B \log\left( 2i^2 t|\F|/ \delta\right)   \Bigg \}  \cap \G'_{t-1}(\tau_i) \qquad \qquad\qquad\qquad \qquad
        \end{align*}
        \end{small}
        
    where $C' > 0$ is the constant from Proposition~\ref{proposition::variance_aware_general_least_squares_proposition}. 
        \STATE Compute $\G_t' =\G'_{t-1} \cap \left(\cap_{ i=1}^q \G'_t(\tau_i)\right)$ %
            \STATE Receive context $x_t$.
            \STATE Compute $U_t(x_t, a) = \max_{f \in \G_t} f(x_t, a) $ for all $a \in \A$. 
            \STATE play $a_t = \argmax_{a \in \A} U_t(x_t, a)$ and receive $r_t = f_\star(x_t, a_t) + \xi_t $.
        \ENDFOR
    \end{algorithmic}
\end{algorithm}

We develop a result equivalent to Lemma~\ref{lemma::new_least_squares_estimator} and Proposition~\ref{proposition::variance_aware_least_squares_proposition} to characterize the confidence sets. Just like in Section~\ref{section::second_order_known_variance} we write our result in terms of a sequence of events $\{\E_\ell'\}_{\ell=1}^\infty$ such that $\E_\ell'$ corresponds to the event that $f_\star \in \G'_{\ell-1}$ and therefore $f_\star \in \G'_{\ell'}$ for all $\ell' \leq \ell-1$ so that $\E_\ell \subseteq \E_\ell'$ for all $\ell \geq \ell'$.

\begin{restatable}{proposition}{varianceawaregeneralleastsquares}\label{proposition::variance_aware_general_least_squares_proposition}
    Let $\tilde \delta \in (0,1)$ and $\tau > 0$. Let $\{\widetilde \E_\ell'\}_{\ell=1}^\infty$ be a sequence of events such that $\widetilde{\E}_1' \supseteq \widetilde{\E}_2' \cdots$ and $\widetilde{\E}'_t \subseteq \E'_t$.  Let $f_t^{(\tau, 2\tau]}$ be result of solving the uncertainty-filtered least-squares objective from equation~\ref{equation::definition_f_t_tau_twotau}. Additionally let $W_t^{\mathbf{b}^\tau_t}$ be the filtered estimator of the cumulative variances defined by equation~\ref{equation::empirical_cumulative_var_estimator} when setting $b^\tau_\ell = \mathbf{1}\left( \omega(x_\ell, a_\ell, \G'_\ell) \in (\tau, 2\tau]\right)$ and $\widebar{W}_t^{\mathbf{b}^\tau_t} := \sum_{\ell=1}^{t-1} \sigma_\ell^2 \cdot \mathbf{1}\left( \omega(x_\ell, a_\ell, \G'_\ell) \in (\tau, 2\tau]\right)$. There exist universal constants $C, C' > 0$ such that the events $\mathcal{W}_t(\tau)$ defined for any $t$ as 
\begin{align*}
   \sum_{\ell=1}^{t-1} \left( f_t^{(\tau, 2\tau]}(x_\ell, a_\ell) - f_\star(x_\ell, a_\ell)  \right)^2 \mathbf{1}( \omega(x_\ell, a_\ell, \G_\ell) \in (\tau, 2\tau] ) \leq  C'\tau \sqrt{ W_t^{\mathbf{b}^\tau_t} \log\left( t|\F|/\tilde \delta\right) }  +  
    C' \tau B \log\left( t|\F|/\tilde \delta\right) 
   \end{align*}
   and
 \begin{equation*}
    C'\tau \sqrt{ W_t^{\mathbf{b}^\tau_t} \log\left( t|\F|/\tilde \delta\right) }  + C' \tau B \log\left( t|\F|/\tilde \delta\right) \leq  C^{''}\tau \sqrt{ \widebar{W}_t^{\mathbf{b}^\tau_t} \log\left( t|\F|/\tilde \delta\right) }  + C^{''} \tau B \log\left( t|\F|/\tilde \delta\right)
\end{equation*}
    satisfy the bound $\mathbb{P}(\widetilde{\E}_t' \cap \left( \mathcal{W}_t(\tau) \right)^c) \leq \frac{\tilde \delta}{2t^2}$.
\end{restatable}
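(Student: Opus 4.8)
The two displayed inequalities defining $\mathcal{W}_t(\tau)$ play complementary roles: the first certifies that the filtered least-squares error is controlled by a quantity the algorithm can actually compute (the empirical filtered variance $W_t^{\mathbf{b}^\tau_t}$), while the second re-expresses that computable quantity in terms of the population filtered variance $\widebar{W}_t^{\mathbf{b}^\tau_t}$ that ultimately feeds the regret bound. The plan is to first prove a single intermediate estimate phrased in terms of $\widebar{W}_t^{\mathbf{b}^\tau_t}$, namely that on $\widetilde{\E}_t'$, with high probability,
\[
\sum_{\ell=1}^{t-1}\left(f_t^{(\tau,2\tau]}(x_\ell,a_\ell)-f_\star(x_\ell,a_\ell)\right)^2 b_\ell^\tau \;\le\; \mathcal{O}\!\left(\tau\sqrt{\widebar{W}_t^{\mathbf{b}^\tau_t}\log(t|\F|/\tilde\delta)}+\tau B\log(t|\F|/\tilde\delta)\right),
\]
and then to transfer between $W_t^{\mathbf{b}^\tau_t}$ and $\widebar{W}_t^{\mathbf{b}^\tau_t}$ in both directions using the variance-estimation guarantee, obtaining the two stated inequalities after elementary algebra.

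\textbf{The filtered least-squares step.} Since $b_\ell^\tau=\mathbf{1}(\omega(x_\ell,a_\ell,\G'_\ell)\in(\tau,2\tau])$ is $\Sigma(\ldots,x_\ell,a_\ell)$-measurable, the increments below are predictable, and on $\widetilde{\E}_t'\subseteq\E_t'$ we have $f_\star\in\G'_{t-1}$, so the basic inequality for the constrained minimizer $f_t^{(\tau,2\tau]}$ gives, writing $\Delta_\ell=f_t^{(\tau,2\tau]}(x_\ell,a_\ell)-f_\star(x_\ell,a_\ell)$,
\[
\sum_{\ell=1}^{t-1}b_\ell^\tau\Delta_\ell^2\;\le\;2\sum_{\ell=1}^{t-1}b_\ell^\tau\Delta_\ell\,\xi_\ell .
\]
The crucial observation is that on the filtered coordinates ($b_\ell^\tau=1$) both $f_\star$ and $f_t^{(\tau,2\tau]}$ belong to $\G'_\ell$ (because $\G'_{t-1}\subseteq\G'_\ell$), so $|\Delta_\ell|\le\omega(x_\ell,a_\ell,\G'_\ell)\le 2\tau$. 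To exploit this while union bounding over the data-dependent estimator, I would apply a Freedman-type bound (Lemma~\ref{lemma:super_simplified_freedman}) to each fixed $f\in\F$ with the increment $f(x_\ell,a_\ell)-f_\star(x_\ell,a_\ell)$ \emph{clipped} to $[-2\tau,2\tau]$: clipping is inert at $f=f_t^{(\tau,2\tau]}$ on filtered points, so it does not alter the sum of interest, yet it forces a uniform per-term bound $2\tau B$ and a predictable quadratic variation $\sum_\ell b_\ell^\tau\,\mathrm{clip}(\cdot)^2\sigma_\ell^2\le 4\tau^2\widebar{W}_t^{\mathbf{b}^\tau_t}$. A union bound over $\F$ together with a standard peeling argument over the dyadic scale of $\widebar{W}_t^{\mathbf{b}^\tau_t}\in[0,B^2(t-1)]$ (absorbed into the logarithmic factor) then yields $|\sum_\ell b_\ell^\tau\Delta_\ell\xi_\ell|\le\mathcal{O}(\tau\sqrt{\widebar{W}_t^{\mathbf{b}^\tau_t}\log(t|\F|/\tilde\delta)}+\tau B\log(t|\F|/\tilde\delta))$, hence the intermediate estimate above (cf.\ the known-variance analysis of Proposition~\ref{proposition::variance_aware_least_squares_proposition}).

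\textbf{Transfer and assembly.} Applying the variance-estimation guarantee (Corollary~\ref{corollary::variance_cumulative_estimation_coarse}, equivalently Lemma~\ref{lemma::small_error_variance_estimator}) to the predictable filtering process $\mathbf{b}^\tau$ gives, with high probability, the two-sided relation $W_t^{\mathbf{b}^\tau_t}\le \tfrac32\widebar{W}_t^{\mathbf{b}^\tau_t}+\mathcal{O}(B^2\log(t|\F|/\tilde\delta))$ and $\widebar{W}_t^{\mathbf{b}^\tau_t}\le 2W_t^{\mathbf{b}^\tau_t}+\mathcal{O}(B^2\log(t|\F|/\tilde\delta))$. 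Substituting the second bound into the intermediate estimate and using $\sqrt{a+b}\le\sqrt a+\sqrt b$ produces the first inequality of $\mathcal{W}_t(\tau)$ (with $\widebar{W}$ replaced by $W$), while substituting the first bound into the $W$-form expression produces the second inequality (with $W$ replaced by $\widebar{W}$); in both cases the stray $\sqrt{B^2\log^2(t|\F|/\tilde\delta)}$ terms collapse into the $\tau B\log(t|\F|/\tilde\delta)$ term, which fixes the constants $C',C''$. Setting the failure budget of the concentration step (after the union over $\F$ and the peeling levels) and of the variance-estimation step each to $\tilde\delta/(4t^2)$ and taking a union bound then gives $\mathbb{P}(\widetilde{\E}_t'\cap\mathcal{W}_t(\tau)^c)\le \tilde\delta/(2t^2)$.

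\textbf{Main obstacle.} The delicate part is the filtered least-squares step: one must preserve the $\tau$-scaling in both the per-term bound and the quadratic variation even though the estimator $f_t^{(\tau,2\tau]}$ is itself data dependent. The clipping device — harmless precisely because $f_t^{(\tau,2\tau]}\in\G'_{t-1}\subseteq\G'_\ell$ forces $|\Delta_\ell|\le 2\tau$ on filtered points — is what reconciles the uniform union bound over $\F$ with the local $2\tau$ control, and making the \emph{random} predictable quadratic variation $\widebar{W}_t^{\mathbf{b}^\tau_t}$ appear inside the square root (rather than a crude $B^2(t-1)$ bound) is exactly why the peeling argument is needed. Finally, one must phrase everything conditionally on $\widetilde{\E}_t'$, which references only the past event $f_\star\in\G'_{t-1}$, so that the predictability of $b_\ell^\tau$ — and hence the martingale structure — is never compromised, thereby avoiding circularity with the optimism/validity argument that consumes this proposition.
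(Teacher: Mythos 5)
Your proposal is correct and follows essentially the same route as the paper's proof: the basic inequality for the constrained minimizer, a Freedman-type bound at per-term scale $2\tau$ made uniform over $\F$, the two-sided sandwich between $W_t^{\mathbf{b}^\tau_t}$ and $\widebar{W}_t^{\mathbf{b}^\tau_t}$ from Corollary~\ref{corollary::variance_cumulative_estimation_coarse}, and the same $\tilde\delta/(4t^2)+\tilde\delta/(4t^2)$ failure budget. The only differences are implementation details: the paper enforces the $2\tau$ scale for a fixed $f$ by multiplying the martingale increment by $\mathbf{1}(f\in\G'_\ell)$ rather than by clipping (both devices are inert at $f_t^{(\tau,2\tau]}$ on the event $\widetilde{\E}_t'$ for exactly the reason you give), and it places the random predictable quadratic variation inside the square root by invoking the anytime empirical Bernstein bound (Lemma~\ref{lemma:super_simplified_freedman_variance}) rather than by peeling over dyadic scales.
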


The proof of Proposition~\ref{proposition::variance_aware_general_least_squares_proposition} can be found in Appendix~\ref{section::proofs_unknown_variance_bounds}. Similar to Corollary~\ref{corollary::optimism_basic_ls} and Proposition~\ref{proposition::variance_aware_general_least_squares_proposition} we derive the following anytime guarantee for the confidence sets, and show that optimism holds

\begin{restatable}{lemma}{optimismgeneralvarianceupperboundvar}
\label{lemma::optimism_general_variance_upper_bound_variance}
Let $\delta \in (0,1)$. When the confidence sets $\G'_t \subseteq \F$ are defined as in Algorithm~\ref{alg:contextual_unknown_variance}, then $f_\star \in \G'_t$ so that $\max_{a \in \A} f_\star(x_t, a) \leq U_t(x_t,a_t) $ (optimism holds), and for all $i \in [q_t]$, 
\begin{align}
\G_t'(\tau_i)
\subseteq \Big\{ f \in \F \text{ s.t. }   \sum_{\ell=1}^{t-1} \left( f_t^{(\tau_i, 2\tau_i]}(x_\ell, a_\ell) - f(x_\ell, a_\ell)  \right)^2 \mathbf{1}( \omega(x_\ell, a_\ell, \G_\ell) \in (\tau_i, 2\tau_i] ) \leq \qquad \qquad \qquad \qquad \qquad \qquad \qquad \qquad \notag \\    C^{''}\tau_i \sqrt{ \widebar{W}_t^{\mathbf{b}^{\tau_i}_t} \log\left( 2(i+1)^2 t|\F|/ \delta\right) }  + C^{''} \tau_i B \log\left( 2i^2t|\F|/ \delta\right) \Big\}.\qquad \qquad\qquad \label{lemma::containment_g_i_cdoubleprime_bar_tau_i}
   \end{align}
with probability at least $1-\delta$ for all $t\in \mathbb{N}$. Where $ C^{''}>0$ is the same universal constant as in Proposition~\ref{proposition::variance_aware_general_least_squares_proposition}. 
\end{restatable}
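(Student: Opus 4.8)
The plan is to derive both conclusions from Proposition~\ref{proposition::variance_aware_general_least_squares_proposition}, invoked once for every scale $\tau_i = B/2^i$ and every round $t$, and then to stitch the per-scale guarantees together with a union bound that respects the nested structure of the confidence sets. I would instantiate the auxiliary event sequence required by the Proposition as $\widetilde{\E}_\ell' = \E_\ell'$; since $\G_\ell' \subseteq \G_{\ell-1}'$, the events $\E_\ell' = \{f_\star \in \G_{\ell-1}'\}$ are nested decreasing and satisfy the Proposition's hypotheses. Running the Proposition at $\tau = \tau_i$ with $\widetilde\delta = \delta/(2i^2)$ --- chosen so that $\log(t|\F|/\widetilde\delta) = \log(2 i^2 t|\F|/\delta)$, matching the radii hard-coded in Algorithm~\ref{alg:contextual_unknown_variance} --- produces, for each $(t,i)$, an event $\mathcal{W}_t(\tau_i)$ whose two inequalities say, respectively, that $f_\star$ satisfies the defining inequality of $\G_t'(\tau_i)$ with the \emph{empirical} radius built from $W_t^{\mathbf{b}^{\tau_i}_t}$, and that this empirical radius is dominated by the \emph{population} radius built from $\widebar{W}_t^{\mathbf{b}^{\tau_i}_t}$. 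The Proposition then gives $\P(\E_t' \cap \mathcal{W}_t(\tau_i)^c) \le \widetilde\delta/(2t^2) = \delta/(4 i^2 t^2)$.

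The step I expect to be the crux is breaking the circular dependence between validity ($f_\star \in \G_{t-1}'$) and the concentration event $\mathcal{W}_t(\tau_i)$: the tail bound above is only available on $\E_t' = \{f_\star \in \G_{t-1}'\}$, which is precisely what I want to establish. I would resolve this by introducing the single bad event
\begin{equation*}
\mathcal{B} = \bigcup_{t=1}^\infty \bigcup_{i=1}^{q_t} \left( \E_t' \cap \mathcal{W}_t(\tau_i)^c \right),
\end{equation*}
whose probability is at most $\sum_{t\ge 1}\sum_{i\ge 1} \delta/(4 i^2 t^2) = (\delta/4)(\pi^2/6)^2 < \delta$, and then arguing on $\mathcal{B}^c$ by induction on $t$ that $f_\star \in \G_t'$. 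The base case $f_\star \in \G_0' = \F$ is immediate. For the inductive step, if $f_\star \in \G_{t-1}'$ then $\E_t'$ holds, so on $\mathcal{B}^c$ every $\mathcal{W}_t(\tau_i)$ with $i \in [q_t]$ must hold (otherwise $\E_t' \cap \mathcal{W}_t(\tau_i)^c \subseteq \mathcal{B}$ would be contradicted); the first inequality of each $\mathcal{W}_t(\tau_i)$ then certifies $f_\star \in \G_t'(\tau_i)$, and intersecting over $i$ together with $f_\star \in \G_{t-1}'$ yields $f_\star \in \G_t' = \G_{t-1}' \cap (\cap_{i=1}^{q_t}\G_t'(\tau_i))$. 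This closes the induction and, as a byproduct, shows that all events $\mathcal{W}_t(\tau_i)$ hold on $\mathcal{B}^c$.

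It then remains to read off the containment and optimism on $\mathcal{B}^c$. For the inclusion~(\ref{lemma::containment_g_i_cdoubleprime_bar_tau_i}), the second inequality of $\mathcal{W}_t(\tau_i)$ shows the empirical radius defining $\G_t'(\tau_i)$ is at most the population radius; since $\G_t'(\tau_i)$ collects exactly those functions whose band-filtered squared deviation from $f_t^{(\tau_i,2\tau_i]}$ lies below the smaller (empirical) radius, every such function lies a fortiori below the larger (population) radius, which is precisely the claimed superset --- and the cosmetic enlargement of the logarithmic argument from $i^2$ to $(i+1)^2$ only inflates the right-hand radius, preserving the inclusion. Optimism then follows from validity exactly as in Corollary~\ref{corollary::optimism_basic_ls}: with $f_\star \in \G_t'$, for every $a \in \A$ we have $f_\star(x_t,a)\le \max_{f\in\G_t'} f(x_t,a) = U_t(x_t,a) \le U_t(x_t,a_t)$, the last step using $a_t = \argmax_a U_t(x_t,a)$, so taking $\max_{a}$ gives $\max_{a\in\A} f_\star(x_t,a) \le U_t(x_t,a_t)$. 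All assertions hold simultaneously on $\mathcal{B}^c$, an event of probability at least $1-\delta$.
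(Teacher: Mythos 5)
Your proposal is correct and follows essentially the same route as the paper: invoke Proposition~\ref{proposition::variance_aware_general_least_squares_proposition} at each scale $\tau_i$ with $\tilde\delta = \delta/(2i^2)$, break the circularity between validity of $\G_{t-1}'$ and the concentration events by induction over $t$, union bound over scales and rounds, and then read off the containment from the second inequality of $\mathcal{W}_t(\tau_i)$ and optimism from $f_\star \in \G_t'$. The only difference is presentational: the paper packages the induction as a telescoping lower bound on the probabilities of the nested events $\V_t = \cap_{\ell \le t}\cap_i \W_\ell(\tau_i)$ (invoking the Proposition a second time with $\widetilde{\E}_t' = \V_{t-1}$), whereas you collect all failures into one global bad event and argue pathwise on its complement, which is equivalent and arguably cleaner.
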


The proof of Lemma~\ref{lemma::optimism_general_variance_upper_bound_variance} can be found in Appendix~\ref{section::proofs_unknown_variance_bounds}. Define $\E'$ as the event outlined in Lemma~\ref{lemma::optimism_general_variance_upper_bound_variance} such that $f_\star \in \G_t'$, optimism holds, and inequality~\ref{lemma::containment_g_i_cdoubleprime_bar_tau_i} holds for all $i  \in [q_t]$ and all $t \in \mathbb{N}$. This event satisfies $\mathbb{P}\left( \E'\right) \geq 1-\delta$.  The proof of the regret guarantees of Algorithm~\ref{alg:contextual_unknown_variance} will follow a similar template as in the previous sections; establishing optimism and then bounding the sum of the uncertainty widths over the context-action pairs played by the algorithm. In order to execute this proof strategy we need a way to relate the sum of the uncertainty widths to the definition of the confidence sets and by doing so with the true cumulative sum of variances. We do this via the following Lemma. 

\begin{restatable}{lemma}{mainlemmavarianceeludertwidthssum}\label{lemma::variance_aware_fundamental_eluder_unknown_var}
If $\{\G_t'\}_{t=1}^\infty$ is the sequence of confidence sets produced by Algorithm~\ref{alg:contextual_known_variance}, there exists a universal constant $\widetilde{C} > 0$ such that when $\E'$ is satisfied,
\begin{align*}
    \sum_{t=1}^T \mathbf{1}( \omega(x_t, a_t, \G_t') \in (\tau_i, 2\tau_i] ) \leq   \qquad \qquad \qquad \qquad \qquad \qquad \qquad \qquad \qquad \qquad \qquad \qquad \qquad \qquad \qquad  \qquad  \\
    \frac{\widetilde{C} \cdot \delud(\F, \tau_i)}{\tau_i} \sqrt{ \widebar{W}_{T}^{\mathbf{b}^{\tau_i}_{T}} \log\left(  i T|\F|/ \delta\right) }  + \frac{\widetilde{C}\cdot B\cdot \delud(\F, \tau_i) }{\tau_i}  \log\left( iT|\F|/ \delta\right)   + \widetilde{C}\cdot \delud(\F, \tau_i)   
\end{align*}
for all $T \in \mathbb{N}$ and $i \in [q_T]$.
\end{restatable}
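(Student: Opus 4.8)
The plan is to run the classical eluder-dimension bucketing argument on the subsequence of context-action pairs whose width lies in the band $(\tau_i, 2\tau_i]$, using the data-dependent confidence radius furnished by Lemma~\ref{lemma::optimism_general_variance_upper_bound_variance} in place of the fixed radius that appears in the known-variance count of Lemma~\ref{lemma::variance_aware_fundamental_eluder}. Fix $T \in \mathbb{N}$ and $i \in [q_T]$, and work throughout on the event $\E'$, so that the containment of Lemma~\ref{lemma::optimism_general_variance_upper_bound_variance} holds at every round; given $\E'$ the argument is deterministic. Let $t_1 < \cdots < t_N$ enumerate the rounds $t \leq T$ with $\omega(x_t, a_t, \G_t') \in (\tau_i, 2\tau_i]$, write $z_k = (x_{t_k}, a_{t_k})$, and note that $N$ is exactly the quantity to be bounded. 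Since $\omega(z_k, \G'_{t_k}) > \tau_i$ and $\F$ is finite, there exist witnesses $g_k, g_k' \in \G'_{t_k}$ with $g_k(z_k) - g_k'(z_k) > \tau_i$.

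First I would convert the confidence-set containment into an energy bound on the witness differences over band predecessors. Because $g_k, g_k' \in \G'_{t_k} \subseteq \G'_{t_k}(\tau_i)$, the containment in Lemma~\ref{lemma::optimism_general_variance_upper_bound_variance} bounds the filtered squared distance of each of $g_k, g_k'$ to the anchor $f_{t_k}^{(\tau_i, 2\tau_i]}$ by $R_k := C''\tau_i \sqrt{\widebar{W}_{t_k}^{\mathbf{b}^{\tau_i}_{t_k}}\log(2(i+1)^2 t_k|\F|/\delta)} + C''\tau_i B \log(2i^2 t_k|\F|/\delta)$. A triangle inequality in the filtered seminorm then yields $\sum_{\ell=1}^{t_k - 1}(g_k(x_\ell,a_\ell) - g_k'(x_\ell,a_\ell))^2 \mathbf{1}(\omega(x_\ell,a_\ell,\G_\ell') \in (\tau_i,2\tau_i]) \leq 4R_k$, and the points surviving this filter are precisely $z_1, \ldots, z_{k-1}$, so this reads $\sum_{j<k}(g_k(z_j) - g_k'(z_j))^2 \leq 4R_k$. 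Finally, since $\widebar{W}_t^{\mathbf{b}^{\tau_i}_t}$ is nondecreasing in $t$ and $t_k \leq T$, I can replace every $R_k$ by the uniform bound $\beta := C''\tau_i\sqrt{\widebar{W}_T^{\mathbf{b}^{\tau_i}_T}\log(2(i+1)^2 T|\F|/\delta)} + C''\tau_i B\log(2i^2 T|\F|/\delta)$.

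With the uniform energy bound $\sum_{j<k}(g_k(z_j) - g_k'(z_j))^2 \leq 4\beta$ in hand, the remainder is the standard combinatorial step. The first ingredient is that each $z_k$ can be $\tau_i$-dependent on at most $4\beta/\tau_i^2$ disjoint subsets of its band predecessors: applying the contrapositive of $\tau_i$-dependence to the witnesses (which satisfy $g_k(z_k) - g_k'(z_k) > \tau_i$) forces the witness energy on each such subset to exceed $\tau_i^2$, so disjointness and the energy bound give $m\tau_i^2 \leq 4\beta$ for any $m$ dependent subsets. The second ingredient is a bucketing procedure: process $z_1, \ldots, z_N$ in order, placing each $z_k$ into an existing bucket on whose current contents it is $\tau_i$-independent, and opening a new bucket otherwise. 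Each bucket is then an eluder chain and so has at most $\delud(\F,\tau_i)$ elements; and whenever a new bucket is opened its founding point is $\tau_i$-dependent on the disjoint current contents of all existing buckets, so by the first ingredient the number of buckets never exceeds $4\beta/\tau_i^2 + 1$. Multiplying gives $N \leq (4\beta/\tau_i^2 + 1)\delud(\F,\tau_i)$; expanding $\beta$, bounding $\log(2(i+1)^2 T|\F|/\delta)$ by a universal constant multiple of $\log(iT|\F|/\delta)$, and folding $C''$ and these constants into a single $\widetilde{C}$ produces the stated bound, uniformly over all $T$ and $i \in [q_T]$ on the event $\E'$.

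I expect the main obstacle to be bookkeeping rather than a genuine difficulty. One must verify that the band used for counting, $\omega(\cdot,\G'_t) \in (\tau_i, 2\tau_i]$, coincides exactly with the band defining the filter $\mathbf{b}^{\tau_i}$, so that the predecessors surviving the filter in the energy bound are precisely the earlier counted points $z_1, \ldots, z_{k-1}$ and the disjointness accounting in the bucketing step is clean. A secondary point of care is that the confidence radius is data-dependent, containing the random quantity $\widebar{W}_T^{\mathbf{b}^{\tau_i}_T}$; this causes no trouble because the whole combinatorial argument is carried out on $\E'$, on which the radius is a fixed upper bound, and the monotonicity of $\widebar{W}_t$ legitimizes replacing the per-round radii $R_k$ by the single bound $\beta$.
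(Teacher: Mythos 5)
Your proposal is correct and follows essentially the same route as the paper's proof: both derive the uniform energy bound $4\beta$ on the witness differences from the containment in Lemma~\ref{lemma::optimism_general_variance_upper_bound_variance} via the triangle inequality and the monotonicity of $\widebar{W}_t^{\mathbf{b}^{\tau_i}_t}$, and both then run the standard greedy bucketing/eluder-chain count to conclude $N \leq \delud(\F,\tau_i)\left(4\beta/\tau_i^2 + 1\right)$. The only difference is presentational (you bound the number of buckets directly, the paper fixes $N+1$ buckets and argues by contradiction), so nothing of substance separates the two.
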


The proof of Lemma~\ref{lemma::variance_aware_fundamental_eluder_unknown_var} can be found in Appendix~\ref{section::eluder_lemmas}.  Finally, we can combine the result above with an optimism argument to prove the following regret bound for Algorithm~\ref{alg:contextual_unknown_variance}.

\begin{restatable}{theorem}{mainunknownvartheorem}\label{theorem::main_unknown_variance_theorem}
    Let $T \in \mathbb{N}$, $\delta \in (0,1)$ and $q = \log(T)$. The regret of Algorithm~\ref{alg:contextual_unknown_variance} satisfies,
\begin{equation*}
  \regret(T) \leq \mathcal{O}\left(  \delud\left(\F, \frac{B}{T}\right)\sqrt{ \left(\sum_{t=1}^T \sigma_t^2 \right)\log(T)\log\left(  T|\F|/ \delta\right) } + B \delud\left(\F, \frac{B}{T}\right) \log(T)\log(T|\F|/\delta)  \right)
\end{equation*}
simultaneously for all $ T \in \mathbb{N}$ with probability at least $1-\delta$. 
\end{restatable}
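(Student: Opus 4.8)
The plan is to follow the optimism-plus-eluder template used for Theorems~\ref{theorem::main_theorem_optimistic_least_squares} and~\ref{theorem::main_theorem_simplified_known_var}, but to exploit the dyadic bucketing of uncertainty widths so that the variance-dependent counting bound of Lemma~\ref{lemma::variance_aware_fundamental_eluder_unknown_var} can be summed across scales. First I would condition on the event $\E'$ from Lemma~\ref{lemma::optimism_general_variance_upper_bound_variance}, which holds with probability at least $1-\delta$ and guarantees both $f_\star \in \G_t'$ and optimism $\max_{a}f_\star(x_t,a)\le U_t(x_t,a_t)$. Exactly as in the proof of Theorem~\ref{theorem::main_theorem_optimistic_least_squares}, optimism collapses the regret into a sum of uncertainty widths, $\regret(T)\le \sum_{t=1}^T \omega(x_t,a_t,\G_t')$.

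Next I would split this sum according to the dyadic scale of each width. Every played pair has width $\omega(x_t,a_t,\G_t')\le B=\tau_0$, so it either falls below the finest threshold $\tau_{q_T}$ or into exactly one bucket $(\tau_i,\tau_{i-1}]=(\tau_i,2\tau_i]$ for some $i\in[q_T]$. The sub-$\tau_{q_T}$ part contributes at most $T\cdot\tau_{q_T}=\mathcal{O}(B)$, since $\tau_{q_T}=B/2^{\lceil\log T\rceil}$ is of order $B/T$. For each bucket $i$, I bound every width by $2\tau_i$ and count the number of pairs in the bucket with Lemma~\ref{lemma::variance_aware_fundamental_eluder_unknown_var}. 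The crucial cancellation is that the leading terms of that lemma carry a factor $\delud(\F,\tau_i)/\tau_i$, so multiplying by the per-pair width $2\tau_i$ cancels the $\tau_i$, leaving
\[
\sum_{t:\,\omega\in(\tau_i,2\tau_i]}\!\!\omega(x_t,a_t,\G_t') \le \mathcal{O}\!\left( \delud(\F,\tau_i)\sqrt{\widebar{W}_T^{\mathbf{b}^{\tau_i}_T}\log(iT|\F|/\delta)} + B\,\delud(\F,\tau_i)\log(iT|\F|/\delta) + \tau_i\,\delud(\F,\tau_i)\right).
\]

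I would then sum over $i\in[q_T]$. Using monotonicity of the eluder dimension, $\delud(\F,\tau_i)\le\delud(\F,\tau_{q_T})\le\delud(\F,B/T)$, I pull the dimension out of every term. The $\tau_i\,\delud$ and $B\,\delud\log$ contributions are lower order: $\sum_i\tau_i\le B$ and $\sum_i \log(iT|\F|/\delta)=\mathcal{O}(\log T\cdot\log(T|\F|/\delta))$, which reproduces the additive $B\,\delud(\F,B/T)\log(T)\log(T|\F|/\delta)$ term. The main term is $\delud(\F,B/T)\sum_{i=1}^{q_T}\sqrt{\widebar{W}_T^{\mathbf{b}^{\tau_i}_T}\log(iT|\F|/\delta)}$. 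Here I use that the buckets are disjoint in $\ell$, since each historical point lands in at most one bucket according to $\omega(x_\ell,a_\ell,\G_\ell')$; hence $\sum_{i=1}^{q_T}\widebar{W}_T^{\mathbf{b}^{\tau_i}_T}\le\sum_{\ell}\sigma_\ell^2\le\sum_{t=1}^T\sigma_t^2$. A Cauchy--Schwarz step over the $q_T$ buckets then gives $\sum_i\sqrt{\widebar{W}_T^{\mathbf{b}^{\tau_i}_T}\log(iT|\F|/\delta)}\le\sqrt{(\sum_t\sigma_t^2)\cdot q_T\log(q_T T|\F|/\delta)}$, which is $\mathcal{O}(\sqrt{(\sum_t\sigma_t^2)\log(T)\log(T|\F|/\delta)})$ and yields the dominant term. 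Combining everything and recalling $\mathbb{P}(\E')\ge1-\delta$ finishes the proof.

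The step I expect to be most delicate is the aggregation of the variance contributions: the bound must use the disjointness of the dyadic buckets to replace $\sum_i\widebar{W}_T^{\mathbf{b}^{\tau_i}_T}$ by the true cumulative variance $\sum_t\sigma_t^2$ \emph{before} applying Cauchy--Schwarz, since naively bounding each bucket's variance by the total would move a factor of $q_T=\Theta(\log T)$ inside, rather than outside, the square root and degrade the dominant term. The other point requiring care is the identification $\tau_{q_T}\asymp B/T$ together with the monotonicity of $\delud(\F,\cdot)$, so that a single factor $\delud(\F,B/T)$ dominates all scales simultaneously.
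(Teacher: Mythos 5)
Your proposal is correct and follows essentially the same route as the paper's proof: optimism reduces regret to the sum of widths, the dyadic bucketing with a residual $T\cdot B/2^{q}=\mathcal{O}(B)$ term, the cancellation of $\tau_i$ against the $\delud(\F,\tau_i)/\tau_i$ factor from Lemma~\ref{lemma::variance_aware_fundamental_eluder_unknown_var}, and the disjointness-then-Cauchy--Schwarz aggregation $\sum_i\sqrt{\widebar{W}_T^{\mathbf{b}^{\tau_i}_T}}\le\sqrt{q\sum_t\sigma_t^2}$ are all exactly the steps the paper takes. The delicate point you flag about applying disjointness before Cauchy--Schwarz is indeed where the paper keeps the $\log T$ factor outside the square root.
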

\begin{proof}[Proof Sketch]
The proof of Theorem~\ref{theorem::main_unknown_variance_theorem} relies on observation that when $\E'$ holds, optimism implies
\begin{equation*}
    \regret(T) \leq \sum_{t=1}^T \omega(x_t, a_t, \G_t').  
\end{equation*}
The sum of widths can be upper bounded as,
\begin{align*}
    \sum_{t=1}^T \omega(x_t, a_t, \G_t') &\leq \frac{T \cdot B}{2^q} +2 \sum_{i=1}^q \tau_i\cdot \left(\sum_{t=1}^T \mathbf{1}\left( \omega(x_t, a_t, \G_t') \in  (\tau_i, 2\tau_i ]\right)    \right) .
\end{align*}
Finally Lemma~\ref{lemma::variance_aware_fundamental_eluder_unknown_var} can be used to finish the proof.
\end{proof}

\section{Conclusion}

In this work we have introduced second order bounds for contextual bandits with function approximation. These bounds improve on existing results such as~\cite{wang2024more} because they only require a realizability assumption on the mean reward values of each context-action pair. We introduce two types of algorithm, one that achieves what we believe is sharp dependence on the complexity of the underlying reward class measured by the eluder dimension when all the measurement noise variances are the same and unknown, and a second one that in the case of changing noise variances achieves a bound that scales with the square root of the sum of these variances but scales linearly in the eluder dimension. In a future version of this writeup we will strive to update our results to achieve a sharper dependence on the eluder dimension scaling with its square root. We hope the techniques we have developed in this manuscript can be easily used to develop second order algorithms with function approximation in other related learning models such as reinforcement learning. These techniques distill, simplify and present in a didactic manner many of the ideas developed for the variance aware literature in linear contextual bandit problems in works such as~~\cite{kirschner2018information,zhou2021nearly,kim2022improved,zhao2023variance,xu2024noise} and transports them to the setting of function approximation.  Although we did not cover this in our work, an interesting avenue of future research remains to understand when can we design second order bounds for algorithms based on the inverse gap weighting technique that forms the basis of the $\mathrm{SquareCB}$ algorithm from~\cite{foster2020beyond}.

\bibliographystyle{plainnat}
\bibliography{ref}

\appendix

 \tableofcontents

\section{Supporting Results}\label{section::supeporting_results}

Our results relies on the following variant of Bernstein inequality for martingales, or Freedman’s inequality \cite{freedman1975tail}, as stated in e.g., \cite{agarwal2014taming, beygelzimer2011contextual}.

\begin{lemma}[Simplified Freedman’s inequality]\label{lemma:super_simplified_freedman}
Let $Z_1, ..., Z_T$ be a bounded martingale difference sequence with $|Z_\ell| \le R$. For any $\delta' \in (0,1)$, and $\eta \in (0,1/R)$, with probability at least $1-\delta'$,
\begin{equation}
   \sum_{\ell=1}^{T} Z_\ell \leq    \eta \sum_{\ell=1}^{T}  \mathbb{E}_\ell[ Z_\ell^2] + \frac{\log(1/\delta')}{\eta}.
\end{equation}
where $\mathbb{E}_{\ell}[\cdot]$ is the conditional expectation\footnote{We will use this notation to denote conditional expectations throughout this work.} induced by conditioning on $Z_1, \cdots, Z_{\ell-1}$.
\end{lemma}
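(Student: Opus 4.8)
The plan is to apply the standard Chernoff / exponential-supermartingale method. Fix $\eta \in (0,1/R)$ and define the auxiliary process $M_0 = 1$ and, for $\ell \ge 1$, \[ M_\ell = \exp\!\left( \eta \sum_{k=1}^{\ell} Z_k - \eta^2 \sum_{k=1}^{\ell} \mathbb{E}_k[Z_k^2] \right). \] I would show that $\{M_\ell\}$ is a supermartingale with $\mathbb{E}[M_T] \le 1$, and then recognize the failure event in the lemma as a level-crossing event $\{M_T > 1/\delta'\}$ that Markov's inequality controls.

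The key step is the per-step moment generating function bound. Because $Z_\ell$ is a martingale difference, $\mathbb{E}_\ell[Z_\ell] = 0$; and because $|Z_\ell| \le R$ with $\eta < 1/R$, we have $\eta Z_\ell \le \eta R < 1$. Applying the elementary inequality $e^{x} \le 1 + x + x^2$, which holds for every $x \le 1$, pointwise at $x = \eta Z_\ell$ and taking the conditional expectation gives $\mathbb{E}_\ell[e^{\eta Z_\ell}] \le 1 + \eta\,\mathbb{E}_\ell[Z_\ell] + \eta^2\,\mathbb{E}_\ell[Z_\ell^2] = 1 + \eta^2\,\mathbb{E}_\ell[Z_\ell^2]$, and then $1 + u \le e^{u}$ upgrades this to $\mathbb{E}_\ell[e^{\eta Z_\ell}] \le \exp(\eta^2\,\mathbb{E}_\ell[Z_\ell^2])$. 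Since $M_{\ell-1}$ and $\mathbb{E}_\ell[Z_\ell^2]$ are measurable with respect to the conditioning $\sigma$-algebra, \[ \mathbb{E}_\ell[M_\ell] = M_{\ell-1}\,\exp(-\eta^2\,\mathbb{E}_\ell[Z_\ell^2])\,\mathbb{E}_\ell[e^{\eta Z_\ell}] \le M_{\ell-1}, \] so iterating the tower property yields $\mathbb{E}[M_T] \le M_0 = 1$.

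Finally I would translate the tail event. Multiplying the inequality in the lemma statement through by $\eta > 0$ and exponentiating shows that its negation is precisely $\{M_T > 1/\delta'\}$. Markov's inequality then gives $\mathbb{P}(M_T > 1/\delta') \le \delta'\,\mathbb{E}[M_T] \le \delta'$, and taking complements proves the claim.

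The only non-mechanical ingredient, and hence the main thing to get right, is the per-step bound: it relies on the range restriction $\eta R < 1$ to guarantee $\eta Z_\ell \le 1$ so that the quadratic upper bound on $e^{x}$ is valid, and on the martingale-difference property to annihilate the linear term. The rest is routine Chernoff bookkeeping. If an anytime, uniform-in-$T$ version were desired, one would replace the fixed-time Markov step by Ville's maximal inequality applied to the supermartingale $\{M_\ell\}$; for the stated fixed-horizon bound this refinement is unnecessary.
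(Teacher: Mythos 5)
Your proof is correct, and in fact the paper never proves this lemma at all---it imports it by citation (\cite{freedman1975tail}, as restated in \cite{agarwal2014taming,beygelzimer2011contextual})---so your exponential-supermartingale argument supplies exactly the standard proof from those references: the per-step bound $\mathbb{E}_\ell[e^{\eta Z_\ell}] \leq \exp\left(\eta^2 \mathbb{E}_\ell[Z_\ell^2]\right)$ via $e^x \leq 1 + x + x^2$ for $x \leq 1$ (valid here since $\eta R < 1$ and $\mathbb{E}_\ell[Z_\ell] = 0$ kills the linear term), followed by the tower property and Markov's inequality applied to $M_T$. The translation of the failure event into $\{M_T > 1/\delta'\}$ and the measurability of $\mathbb{E}_\ell[Z_\ell^2]$ with respect to the conditioning $\sigma$-algebra are both handled correctly, so there is nothing to fix.
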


\begin{lemma}[Anytime Freedman]\label{lemma::freedman_anytime_simplified}
Let $\{Z_t\}_{t=1}^\infty$ be a bounded martingale difference sequence with $|Z_t| \le R$ for all $t \in \mathbb{N}$. For any $\delta' \in (0,1)$, and $\eta \in (0,1/R)$, there exists a universal constant $C > 0$ such that for all $t \in \mathbb{N}$ simultaneously with probability at least $1-\delta'$,
\begin{equation}
   \sum_{\ell=1}^{t} Z_\ell \leq    \eta \sum_{\ell=1}^{t}  \mathbb{E}_\ell[ Z_\ell^2] + \frac{C\log(t/\delta')}{\eta}.
\end{equation}
 where $\mathbb{E}_{\ell}[\cdot]$ is the conditional expectation induced by conditioning on $Z_1, \cdots, Z_{\ell-1}$.\end{lemma}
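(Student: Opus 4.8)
The plan is to upgrade the fixed-horizon concentration of Lemma~\ref{lemma:super_simplified_freedman} into the anytime statement by a union bound over a summable sequence of confidence levels. The key observation that makes this clean is that for each fixed $t$ the truncated sequence $Z_1, \dots, Z_t$ is itself a bounded martingale difference sequence with the same uniform bound $R$ and the same conditional-expectation structure $\mathbb{E}_\ell[\cdot]$, so Lemma~\ref{lemma:super_simplified_freedman} applies verbatim (with horizon $T := t$) to the prefix sum $\sum_{\ell=1}^t Z_\ell$. Crucially, the admissibility condition $\eta \in (0,1/R)$ is independent of $t$, so I may use the single given $\eta$ at every time step, which is exactly what the target statement requires.

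First I would fix the summable weighting $\delta_t := \frac{\delta'}{2 t^2}$ and, for each $t \in \mathbb{N}$, invoke Lemma~\ref{lemma:super_simplified_freedman} on the prefix $Z_1,\dots,Z_t$ with confidence parameter $\delta_t$ and the given $\eta$. Let $B_t$ be the complementary (bad) event on which the bound
\[
\sum_{\ell=1}^t Z_\ell \le \eta \sum_{\ell=1}^t \mathbb{E}_\ell[Z_\ell^2] + \frac{\log(1/\delta_t)}{\eta}
\]
fails; the lemma gives $\mathbb{P}(B_t) \le \delta_t$. I would then union bound over all $t$: since $\sum_{t \ge 1} \delta_t = \frac{\delta'}{2}\sum_{t\ge1} t^{-2} = \frac{\pi^2}{12}\delta' < \delta'$, we get $\mathbb{P}\big(\bigcup_{t\ge1} B_t\big) < \delta'$, so with probability at least $1-\delta'$ the displayed inequality holds simultaneously for every $t \in \mathbb{N}$.

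It then remains to rewrite the additive term in the claimed form. We have $\log(1/\delta_t) = \log(2 t^2/\delta') = 2\log t + \log(2/\delta')$, which is bounded by $C\log(t/\delta')$ for a universal constant $C$, absorbing the factor two in front of $\log t$ and the additive $\log 2$ into $C$; this yields $\frac{\log(1/\delta_t)}{\eta} \le \frac{C\log(t/\delta')}{\eta}$ and completes the argument.

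The only genuine subtlety — and the step I would treat most carefully — is this final constant absorption: the $t^{-2}$ weighting produces a $\log(t^2/\delta')$ factor, which matches the target $\log(t/\delta')$ only up to a multiplicative constant, and one must check the boundary regime (e.g.\ $t=1$ with $\delta'$ close to $1$, where $\log(t/\delta')$ is small) so that a single universal $C$ genuinely suffices. This is routine but is precisely where the constant $C$ in the statement is consumed. I would note in passing that a sharper, genuinely $\log t$-free bound is obtainable via Ville's maximal inequality applied to the exponential supermartingale $\exp\big(\eta\sum_{\ell\le t} Z_\ell - \psi(\eta)\sum_{\ell\le t}\mathbb{E}_\ell[Z_\ell^2]\big)$ underlying Freedman's inequality; however, since the target already carries a $\log(t/\delta')$ factor and the fixed-time Lemma~\ref{lemma:super_simplified_freedman} is exactly the available building block, the summable union bound is the most direct and faithful route.
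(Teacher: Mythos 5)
Your proposal is correct and follows essentially the same route as the paper's own proof: apply the fixed-horizon Lemma~\ref{lemma:super_simplified_freedman} to each prefix with a summable confidence schedule (the paper uses $\delta_t = \delta'/(12t^2)$, you use $\delta'/(2t^2)$), union bound over $t$, and absorb $\log(t^2/\delta')$ into $C\log(t/\delta')$. Your explicit flagging of the boundary regime $t=1$, $\delta'$ near $1$ --- where $\log(t/\delta')$ vanishes while $\log(2t^2/\delta')$ does not --- is in fact more careful than the paper, which asserts $\log(12t^2/\delta') = \mathcal{O}(\log(t/\delta'))$ without comment.
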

\begin{proof}

This result follows from Lemma~\ref{lemma:super_simplified_freedman}. Fix a time-index $t$ and define $\delta_t = \frac{\delta'}{12t^2}$. Lemma~\ref{lemma:super_simplified_freedman} implies that with probability at least $1-\delta_t$,

\begin{equation*}
\sum_{\ell=1}^t Z_\ell \leq \eta \sum_{\ell=1}^t \mathbb{E}_\ell\left[ Z_\ell^2 \right] + \frac{\log(1/\delta_t)}{\eta}.
\end{equation*}

A union bound implies that with probability at least $1-\sum_{\ell=1}^t \delta_t \geq 1-\delta'$,
\begin{align*}
\sum_{\ell=1}^t Z_\ell &\leq \eta \sum_{\ell=1}^t \mathbb{E}_\ell\left[ Z_\ell^2 \right] + \frac{\log(12t^2/\delta')}{\eta}\\
&\stackrel{(i)}{\leq} \eta \sum_{\ell=1}^t \mathbb{E}_\ell\left[ Z_\ell^2 \right] + \frac{C\log(t/\delta')}{\eta}.
\end{align*}
holds for all $t \in \mathbb{N}$. Inequality $(i)$ holds because $\log(12t^2/\delta') = \mathcal{O}\left( \log(t\delta')\right)$.

\end{proof}

\begin{lemma}[Uniform empirical Bernstein bound]
\label{lem:uniform_emp_bernstein}
In the terminology of \citet{howard2018uniform}, let $S_t = \sum_{i=1}^t Y_i$ be a sub-$\psi_P$ process with parameter $c > 0$ and variance process $W_t$. Then with probability at least $1 - \widetilde{\delta}$ for all $t \in \mathbb{N}$
\begin{align*}
    S_t &\leq  1.44 \sqrt{\max(W_t , m) \left( 1.4 \ln \ln \left(2 \left(\max\left(\frac{W_t}{m} , 1 \right)\right)\right) + \ln \frac{5.2}{\widetilde{\delta}}\right)}\\
   & \qquad + 0.41 c  \left( 1.4 \ln \ln \left(2 \left(\max\left(\frac{W_t}{m} , 1\right)\right)\right) + \ln \frac{5.2}{\widetilde{\delta}}\right)
\end{align*}
where $m > 0$ is arbitrary but fixed.
\end{lemma}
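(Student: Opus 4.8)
The statement is the uniform empirical Bernstein bound of \citet{howard2018uniform}, so the goal is to recover the curved, variance-adaptive boundary from the defining sub-$\psi_P$ structure by combining a maximal inequality with a peeling (``stitching'') argument over the variance process. First I would exploit the sub-$\psi_P$ property directly: for each fixed tilting parameter $\lambda$ in the admissible range $[0, 1/c)$, the process $M_t(\lambda) = \exp\left( \lambda S_t - \psi_P(\lambda) W_t \right)$ is a nonnegative supermartingale with $M_0 = 1$, where $\psi_P$ is the associated Bernstein-type cumulant bound (of order $\lambda^2/\bigl(2(1-c\lambda)\bigr)$ for small $\lambda$). Applying Ville's maximal inequality to $M_t(\lambda)$ gives, for any $\alpha \in (0,1)$, a single \emph{line-crossing} guarantee: with probability at least $1-\alpha$, for all $t$ simultaneously,
\begin{equation*}
S_t \leq \frac{\psi_P(\lambda) W_t + \log(1/\alpha)}{\lambda},
\end{equation*}
a straight-line upper boundary in the $(W_t, S_t)$ plane whose slope and intercept are controlled by $\lambda$ and $\alpha$.

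The difficulty is that no single $\lambda$ is tight across all scales of the \emph{a priori} unknown realized variance $W_t$. To make the boundary adapt to $W_t$ I would use the stitching construction: partition the range of possible variance values into geometric epochs $[\eta^k m, \eta^{k+1} m)$ for $k \geq 0$ and some ratio $\eta > 1$, assign to the $k$-th epoch a tuned parameter $\lambda_k$ optimized for the scale $\eta^k m$, and allocate a failure budget $\alpha_k \propto (k+1)^{-s}\widetilde{\delta}$ chosen so that $\sum_{k\geq 0} \alpha_k \leq \widetilde{\delta}$. A union bound over epochs combines the per-epoch line-crossing events; on the event that $W_t$ falls in epoch $k$ the corresponding line bounds $S_t$, and taking the lower envelope of these lines yields the empirical-Bernstein curve. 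Since the active index obeys $k \approx \log_\eta(W_t/m)$, the polynomial budget $\alpha_k \propto (k+1)^{-s}$ contributes a $\log(k+1) \approx \log\log(W_t/m)$ term, producing exactly the iterated-logarithm factor $\ln\ln\bigl(2\max(W_t/m,1)\bigr)$ in the statement, while the floor $m$ and the $\max(W_t,m)$ handle the degenerate regime $W_t < m$ where no epoch is active.

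Finally I would tune the free parameters — the geometric ratio $\eta$, the polynomial exponent $s$, and the per-epoch $\lambda_k$ — to split the boundary into the $\sqrt{\max(W_t,m)\cdot(\text{log terms})}$ piece coming from the quadratic part of $\psi_P$ and the linear-in-$c$ piece coming from its $1-c\lambda$ correction, and then optimize these choices to pin down the explicit constants $1.44$, $0.41$, $1.4$ and $5.2$. I expect the main obstacle to be this stitching step: balancing the geometric spacing against the per-epoch confidence budget so that the resulting envelope is simultaneously tight at every scale of $W_t$ while keeping the constants as sharp as claimed. This is precisely the technical heart carried out in \citet{howard2018uniform}, so in practice the cleanest route is to instantiate their general line-crossing/stitching theorem for the sub-$\psi_P$ (Bernstein) case with the stated parameter choices rather than rederive the optimization from scratch.
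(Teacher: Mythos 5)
The paper offers no proof of this lemma at all: it is imported verbatim from \citet{howard2018uniform} and used as a black box to derive Lemma~\ref{lemma:super_simplified_freedman_variance}. Your outline accurately reconstructs the argument from that source — the sub-$\psi_P$ exponential supermartingale, Ville's maximal inequality giving a one-parameter family of line-crossing boundaries, and the stitching/peeling over geometric epochs of $W_t$ with a polynomially decaying confidence budget that produces the $\ln\ln\bigl(2\max(W_t/m,1)\bigr)$ factor and the $\max(W_t,m)$ floor. The one thing you do not actually do is verify the specific constants $1.44$, $0.41$, $1.4$, $5.2$, which in \citet{howard2018uniform} come from instantiating their stitched boundary with particular choices of the geometric ratio and polynomial exponent; you correctly flag this and defer to their theorem, which is exactly what the paper itself does. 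So your proposal is a faithful sketch of the intended (external) proof rather than a divergence from it, and the deferral of the constant optimization is the appropriate level of detail for a cited auxiliary result.
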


As a corollary of Lemma~\ref{lem:uniform_emp_bernstein} we can show the following,

\begin{lemma}[Freedman]\label{lemma:super_simplified_freedman_variance}
Suppose $\{ X_t \}_{t=1}^\infty$ is an adapted process with $| X_t | \leq b$. Let $V_t = \sum_{\ell=1}^t \mathrm{Var}_\ell$ where $\mathrm{Var}_\ell = \mathbb{E}_\ell[X_\ell^2] - \mathbb{E}^2_\ell[X_\ell]$. For any $\widetilde{\delta} \in (0,1)$, with probability at least $1-\widetilde{\delta}$,
\begin{equation*}
   \sum_{\ell=1}^t X_\ell - \mathbb{E}_\ell[X_\ell] \leq    4\sqrt{V_t \ln \frac{12\ln 2 t }{\widetilde{\delta}}  } + 6b \ln \frac{12\ln  2t }{\widetilde{\delta}}  .
\end{equation*}
for all $t \in \mathbb{N}$ simultaneously.
\end{lemma}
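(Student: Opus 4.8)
The plan is to apply the uniform empirical Bernstein bound (Lemma~\ref{lem:uniform_emp_bernstein}) to the centered increments and then collapse its iterated-logarithm constants into a single ordinary logarithm. First I would set $Y_\ell = X_\ell - \mathbb{E}_\ell[X_\ell]$, so that $\{Y_\ell\}$ is a martingale difference sequence with conditional second moment $\mathbb{E}_\ell[Y_\ell^2] = \mathbb{E}_\ell[X_\ell^2] - (\mathbb{E}_\ell[X_\ell])^2 = \mathrm{Var}_\ell$. Hence the partial sums $S_t = \sum_{\ell=1}^t Y_\ell = \sum_{\ell=1}^t (X_\ell - \mathbb{E}_\ell[X_\ell])$ form, in the terminology of \citet{howard2018uniform}, a sub-$\psi_P$ process whose variance process is exactly $W_t = \sum_{\ell=1}^t \mathrm{Var}_\ell = V_t$, and whose scale parameter may be taken as $c = 2b$ since $|X_\ell| \le b$ forces $Y_\ell \le b - \mathbb{E}_\ell[X_\ell] \le 2b$. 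This identification is the only structural step, and it is the point where the hypotheses $|X_\ell| \le b$ and conditional centering are used.

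With this setup I would invoke Lemma~\ref{lem:uniform_emp_bernstein} with the free parameter $m = b^2$. Because $\mathrm{Var}_\ell \le \mathbb{E}_\ell[X_\ell^2] \le b^2$, we have $V_t \le t b^2$, so $\max(W_t/m, 1) \le t$ and therefore the iterated logarithm is controlled by $\ln\ln(2\max(W_t/m,1)) \le \ln\ln(2t)$. The key arithmetic step is to verify the elementary inequality
\begin{equation*}
1.4\,\ln\ln(2t) + \ln\frac{5.2}{\widetilde{\delta}} \le \ln\frac{12\ln 2\, t}{\widetilde{\delta}} =: L_t \qquad \text{for all } t \ge 1,
\end{equation*}
which holds because $\ln\ln(2t)$ grows strictly slower than $\ln t$ and the residual constant $\ln(12\ln 2) - \ln 5.2 > 0$ absorbs the $t=1$ boundary case. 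This replaces both occurrences of the inner bracket in Lemma~\ref{lem:uniform_emp_bernstein} by the single logarithm $L_t$.

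It then remains to collect constants. For the square-root term I would use $\max(W_t, m) = \max(V_t, b^2) \le V_t + b^2$ together with subadditivity of the square root to write $1.44\sqrt{\max(W_t,m)\,L_t} \le 1.44\sqrt{V_t L_t} + 1.44\, b\sqrt{L_t}$; since $L_t \ge \ln(12\ln 2) > 1$ uniformly in $t$ and $\widetilde{\delta} \in (0,1)$, we may further bound $\sqrt{L_t} \le L_t$, folding the stray term into the linear-in-$L_t$ part. The linear term of Lemma~\ref{lem:uniform_emp_bernstein} contributes $0.41\,c\,L_t = 0.82\, b L_t$. Summing gives
\begin{equation*}
S_t \le 1.44\sqrt{V_t L_t} + (1.44 + 0.82)\, b L_t \le 4\sqrt{V_t L_t} + 6\, b L_t,
\end{equation*}
which is precisely the claimed bound, holding simultaneously for all $t \in \mathbb{N}$ thanks to the time-uniformity already built into Lemma~\ref{lem:uniform_emp_bernstein}.

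The main obstacle is not any hard probabilistic estimate — the heavy lifting is done by the time-uniform empirical Bernstein bound — but rather the careful bookkeeping around the free parameter $m$: one must choose $m$ (here $m = b^2$) so that $\max(W_t/m,1)$ is bounded by a quantity growing at most polynomially in $t$, ensuring the iterated logarithm collapses into an ordinary $\log t$, while simultaneously keeping $\max(W_t,m)$ small enough that the $\sqrt{m}$ contribution can be charged to the lower-order $bL_t$ term. Verifying the numerical inequality displayed above and confirming $L_t \ge 1$ uniformly are the remaining routine details.
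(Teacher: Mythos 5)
Your route is the same as the paper's proof: instantiate Lemma~\ref{lem:uniform_emp_bernstein} with $m=b^2$, use $V_t\le tb^2$ to control $\max(W_t/m,1)\le t$, and then collect constants into the final $4\sqrt{\cdot}+6b(\cdot)$ form. In one respect you are more careful than the paper: you explicitly center, taking $Y_\ell=X_\ell-\mathbb{E}_\ell[X_\ell]$ with scale $c=2b$, whereas the paper applies the bound to the uncentered sum with $c=b$; your instantiation is the correct sub-$\psi_P$ setup and your constant bookkeeping ($1.44+0.82\le 6$ after folding $1.44\,b\sqrt{L_t}\le 1.44\,bL_t$) goes through.

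There is, however, a genuine flaw in your key arithmetic step, stemming from a misparsing of the bound. The factor in the lemma is the iterated logarithm $\ln\frac{12\ln(2t)}{\widetilde{\delta}}$, not $\ln\frac{(12\ln 2)\,t}{\widetilde{\delta}}$; this is unambiguous from the paper's own derivation, which obtains $A_t\le\sqrt{2\ln\frac{12\ln(2t)}{\widetilde{\delta}}}$ from $\ln\ln\left(2\max\left(\frac{V_t}{b^2},1\right)\right)\le\ln\ln(2t)$. Your parenthetical ``the residual constant $\ln(12\ln 2)-\ln 5.2>0$'' shows you read it the second way. Under the intended reading, your displayed inequality
\begin{equation*}
1.4\,\ln\ln(2t)+\ln\frac{5.2}{\widetilde{\delta}}\;\le\;\ln\frac{12\ln(2t)}{\widetilde{\delta}}
\end{equation*}
is false for large $t$: the coefficient of $\ln\ln(2t)$ is $1.4$ on the left but only $1$ on the right, and since the $\ln(1/\widetilde{\delta})$ contributions cancel, it fails as soon as $0.4\,\ln\ln(2t)>\ln(12/5.2)$, i.e.\ for all $t\gtrsim 1600$, regardless of $\widetilde{\delta}$. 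The paper avoids this coefficient mismatch by first enlarging $1.4\to 2$ and $5.2\to 6$, setting $A_t^2=2\ln\ln(2t)+\ln(6/\widetilde{\delta})$, and then matching coefficients: $A_t^2\le 2\ln\frac{12\ln(2t)}{\widetilde{\delta}}$ holds for all $t$ since it reduces to $\ln 6\le 2\ln 12+\ln(1/\widetilde{\delta})$; the $\max(\sqrt{W_t},b)$ term is then split as $2\max(\sqrt{W_t},b)A_t+bA_t^2\le 2\sqrt{W_t}A_t+3bA_t^2$. As written, your argument establishes only the weaker variant with $\ln t$ in place of $\ln\ln(2t)$ --- which, to be fair, would still suffice for every downstream use (Proposition~\ref{proposition::variance_aware_general_least_squares_proposition} immediately relaxes this bound to $\mathcal{O}(\log(t|\F|/\widetilde{\delta}))$) --- but it is not the statement of Lemma~\ref{lemma:super_simplified_freedman_variance}. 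The repair is mechanical: keep the iterated logarithm and absorb the $1.4$ by doubling, exactly as above.
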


\begin{proof}
We are ready to use Lemma~\ref{lem:uniform_emp_bernstein} (with $c = b$). Let $S_t = \sum_{\ell=1}^t X_t$ and $W_t = \sum_{\ell=1}^t \mathrm{Var}_\ell(X_\ell)$. Let's set $m = b^2$. It follows that with probability $1-\widetilde{\delta}$ for all $t \in \mathbb{N}$

\begin{align*}
    S_t &\leq  1.44 \sqrt{\max(W_t , b^2) \left( 1.4 \ln \ln \left(2 \left(\max\left(\frac{W_t}{b^2} , 1 \right)\right)\right) + \ln \frac{5.2}{\widetilde{\delta}}\right)}\\
   & \qquad + 0.41 b  \left( 1.4 \ln \ln \left(2 \left(\max\left(\frac{W_t}{b} , 1\right)\right)\right) + \ln \frac{5.2}{\widetilde{\delta}}\right)\\
   &\leq 2 \sqrt{\max(W_t , b^2) \left( 2 \ln \ln \left(2 \left(\max\left(\frac{W_t}{b^2} , 1 \right)\right)\right) + \ln \frac{6}{\widetilde{\delta}}\right)}\\
   & \qquad + b  \left( 2 \ln \ln \left(2 \left(\max\left(\frac{W_t}{b^2} , 1\right)\right)\right) + \ln \frac{6}{\widetilde{\delta}}\right) \\
   &= 2\max(\sqrt{W_t}, b)A_t + bA_t^2\\
   &\leq 2\sqrt{W_t}A_t + 2bA_t + bA_t^2\\
   &\stackrel{(i)}{\leq} 2\sqrt{W_t}A_t + 3b A_t^2~,
\end{align*}
where $A_t = \sqrt{2 \ln \ln \left(2 \left(\max\left(\frac{W_t}{b^2} , 1\right)\right)\right) + \ln \frac{6}{\widetilde{\delta}}}$. Inequality $(i)$ follows because $A_t \geq 1$. By identifying $V_t = W_t$ we conclude that for any $\widetilde{\delta} \in (0,1)$ and $t \in \mathbb{N}$
\begin{equation*}
    \mathbb{P}\left(  \sum_{\ell=1}^t X_\ell >    2\sqrt{V_t}A_t + 3b A_t^2 \right) \leq \widetilde{\delta}
\end{equation*}
Where $A_t = \sqrt{2 \ln \ln \left(2 \left(\max\left(\frac{V_t}{b^2} , 1\right)\right)\right) + \ln \frac{6}{\widetilde{\delta}}}$. Since $V_t \leq tb^2$ with probability $1$,

\begin{equation*}
    \frac{V_t}{b^2} \leq t,
\end{equation*}

And therefore $2\ln \ln \left(2 \max(\frac{V_t}{b^2}, 1) \right) \leq 2 \ln \ln 2 t $ implying,
\begin{equation*}
    A_t \leq \sqrt{ 2 \ln \frac{12\ln t }{\widetilde{\delta}}   }
\end{equation*}
Thus
\begin{equation*}
    \mathbb{P}\left(  \sum_{\ell=1}^t X_\ell >    4\sqrt{V_t \ln \frac{12\ln 2t }{\widetilde{\delta}}  }A + 6b \ln \frac{12\ln 2t }{\widetilde{\delta}}  \right) \leq \widetilde{\delta}
\end{equation*}
Since $V_t \leq S_t$ the result follows. 
\end{proof}

\begin{proposition}\label{proposition::consequence_freedman_positive_rvs_notanytime}
    Let $\delta' \in (0,1)$, $\beta \in (0,1]$ and $\{Z_\ell\}_{\ell=1}^\infty$ be an adapted sequence satisfying $0 \leq Z_\ell \leq \tilde{B}$ for all $\ell \in \mathbb{N}$. It follows that,
\begin{equation*}
 (1-\beta) \sum_{\ell=1}^t \EE_\ell[Z_\ell] -  \frac{ 2 \tilde B \log(1/\delta')}{\beta}   \leq  \sum_{\ell=1}^t Z_\ell \leq (1+\beta) \sum_{\ell=1}^t \EE_\ell[Z_\ell] +  \frac{ 2 \tilde B \log(1/\delta')}{\beta}
\end{equation*}
with probability at least $1-2\delta'$.%
\end{proposition}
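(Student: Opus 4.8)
The plan is to apply the simplified Freedman inequality (Lemma~\ref{lemma:super_simplified_freedman}) twice to the centered process, once in each direction. Define $Y_\ell = Z_\ell - \EE_\ell[Z_\ell]$, where $\EE_\ell$ conditions on $Z_1,\dots,Z_{\ell-1}$. By construction $\EE_\ell[Y_\ell]=0$, so $\{Y_\ell\}$ is a martingale difference sequence, and since $0\le Z_\ell\le \tilde B$ (hence $0\le \EE_\ell[Z_\ell]\le \tilde B$) we have $|Y_\ell|\le \tilde B$. The one computation that makes the argument work is the conditional variance bound: because $Z_\ell$ is nonnegative and bounded by $\tilde B$ we have $Z_\ell^2\le \tilde B\, Z_\ell$, so $\EE_\ell[Y_\ell^2]=\Var_\ell(Z_\ell)\le \EE_\ell[Z_\ell^2]\le \tilde B\,\EE_\ell[Z_\ell]$. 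This is precisely what converts the quadratic ``variance'' term in Freedman's bound into a multiple of $\sum_\ell \EE_\ell[Z_\ell]$, producing the multiplicative $(1\pm\beta)$ structure rather than a purely additive error.

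For the upper bound I would apply Lemma~\ref{lemma:super_simplified_freedman} to $\{Y_\ell\}$ with $R=\tilde B$ and a parameter $\eta\in(0,1/\tilde B)$, obtaining with probability at least $1-\delta'$ that $\sum_{\ell=1}^t Y_\ell \le \eta\sum_{\ell=1}^t \EE_\ell[Y_\ell^2] + \log(1/\delta')/\eta \le \eta\tilde B\sum_{\ell=1}^t \EE_\ell[Z_\ell] + \log(1/\delta')/\eta$. Using $\sum_\ell Y_\ell = \sum_\ell Z_\ell - \sum_\ell \EE_\ell[Z_\ell]$ and choosing $\eta=\beta/\tilde B$ (so that $\eta\tilde B=\beta$ and $\log(1/\delta')/\eta = \tilde B\log(1/\delta')/\beta$) gives the upper inequality with leading constant $1$, comfortably below the stated constant $2$.

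For the lower bound, the same argument applied to $\{-Y_\ell\}$—also a martingale difference sequence with $|-Y_\ell|\le \tilde B$ and identical conditional second moment—yields, with probability at least $1-\delta'$, the bound $\sum_\ell \EE_\ell[Z_\ell] - \sum_\ell Z_\ell \le \eta\tilde B\sum_\ell \EE_\ell[Z_\ell] + \log(1/\delta')/\eta$, i.e. $\sum_\ell Z_\ell \ge (1-\beta)\sum_\ell \EE_\ell[Z_\ell] - \tilde B\log(1/\delta')/\beta$ for the same choice $\eta=\beta/\tilde B$. A union bound over the two one-sided events then gives the two-sided statement with probability at least $1-2\delta'$.

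The only point requiring a little care—and the mild obstacle—is the boundary case $\beta=1$, where $\eta=\beta/\tilde B = 1/\tilde B$ sits exactly at the endpoint of the admissible range $(0,1/\tilde B)$ for Freedman's inequality. This is harmless: one can either invoke the inequality for every $\beta'<1$ and pass to the limit, since the right-hand sides are continuous in $\beta$ while the left-hand side is independent of it, or simply absorb the discrepancy into the factor-$2$ slack already present in the stated constant of $2\tilde B\log(1/\delta')/\beta$, which was not needed to close the clean-constant argument above.
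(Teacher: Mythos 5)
Your proof is correct and follows essentially the same route as the paper's: center the sequence, apply the simplified Freedman inequality in each direction with the conditional variance bounded by a multiple of $\sum_\ell \EE_\ell[Z_\ell]$, choose $\eta$ proportional to $\beta/\tilde B$, and union bound. Your variance bound $\EE_\ell[Y_\ell^2]\le \EE_\ell[Z_\ell^2]\le \tilde B\,\EE_\ell[Z_\ell]$ is marginally sharper than the paper's $\EE_\ell[X_\ell^2]\le \tilde B\,\EE_\ell[|X_\ell|]\le 2\tilde B\,\EE_\ell[Z_\ell]$, which is also why the paper's choice $\eta=\beta/(2\tilde B)$ sidesteps the $\beta=1$ boundary issue you correctly flag.
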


\begin{proof}
Consider the martingale difference sequence $X_t = Z_t - \EE_t[Z_t]$. Notice that $|X_t | \leq \tilde B$. Using the inequality of Lemma~\ref{lemma:super_simplified_freedman} we obtain for all $\eta \in (0, 1/B^2)$. 
\begin{align*}
\sum_{\ell=1}^t X_\ell &\leq \eta \sum_{\ell=1}^t \EE_{\ell}[X_\ell^2] + \frac{\log(1/\delta')}{\eta} \\
& \stackrel{(i)}{\leq} 2\eta B^2 \sum_{\ell=1}^t \EE_\ell[Z_\ell] + \frac{\log(1/\delta')}{\eta} 
\end{align*}
with probability at least $1-\delta'$. Inequality $(i)$ holds because $\EE_t[X_t^2] \leq B^2\EE[ |X_t| ] \leq 2 B^2 \EE_t[ Z_t]$ for all $t \in \mathbb{N}$. Setting $\eta = \frac{\beta}{2B^2}$ and substituting $\sum_{\ell=1}^t X_\ell = \sum_{\ell=1}^t Z_\ell -\EE_\ell[Z_\ell]$,

\begin{equation}\label{equation::first_squared_estimation_consequence_freedman_not_anytime}
\sum_{\ell=1}^t Z_\ell \leq (1+\beta) \sum_{\ell=1}^t \EE_\ell[Z_\ell] +  \frac{2 B^2 \log(1/\delta')}{\beta}
\end{equation}%
with probability at least $1-\delta'$. Now consider the martingale difference sequence $X_t' = \EE[Z_t] - Z_t$ and notice that $|X_t'| \leq B^2$. Using the inequality of Lemma~\ref{lemma:super_simplified_freedman} we obtain for all $\eta \in (0, 1/B^2)$,
\begin{align*}
\sum_{\ell=1}^t X'_\ell &\leq \eta \sum_{\ell=1}^t \EE_{\ell}[(X'_\ell)^2] + \frac{\log(1/\delta')}{\eta} \\
& \leq 2\eta B^2 \sum_{\ell=1}^t \EE_\ell[Z_\ell] + \frac{\log(1/\delta')}{\eta} 
\end{align*}
Setting$\eta = \frac{\beta}{2B^2}$ and substituting $\sum_{\ell=1}^t X_\ell' = \sum_{\ell=1}^t \EE[Z_\ell] - Z_\ell $ we have,

\begin{equation}\label{equation::second_squared_estimation_consequence_freedman_not_anytime}
(1-\beta)\sum_{\ell=1}^t \EE[Z_\ell] \leq \sum_{\ell=1}^t Z_\ell + \frac{2B^2\log(1/\delta')}{\beta}
\end{equation}
with probability at least $1-\delta'$. Combining Equations~\ref{equation::first_squared_estimation_consequence_freedman_not_anytime} and~\ref{equation::second_squared_estimation_consequence_freedman_not_anytime} and using a union bound yields the desired result. 

\end{proof}

\section{Proofs of Section~\ref{section::optimistic_least_squares}}\label{section::proofs_contextual}

\traditionallsguarantee*

\begin{proof}
Substituting $r_\ell = f_\star(x_\ell, a_\ell) + \xi_\ell$ into the definition of $f_t$ we obtain the following inequalities,
\begin{align*}
\sum_{\ell=1}^{t-1} (f_t(x_\ell, a_\ell) - r_\ell)^2 &\leq \sum_{\ell=1}^{t-1} (f_\star(x_\ell, a_\ell) - r_\ell)^2 = \sum_{\ell=1}^{t-1} \xi_\ell^2
\end{align*}
substituting again the definition of $r_\ell$ on the left hand side of the inequality above and rearranging terms yields, 
\begin{equation}\label{equation::support_ineq_LS_one}
    \sum_{\ell=1}^{t-1} (f_t(x_\ell, a_\ell) - f_\star(x_\ell, a_\ell))^2 \leq 2\sum_{\ell=1}^{t-1} \xi_\ell \cdot \left(  f_\star(x_\ell, a_\ell) - f_t(x_\ell, a_\ell) \right)
\end{equation}
We now focus on bounding the RHS of equation~\ref{equation::support_ineq_LS_one}. 
For any $f \in \F$ let $Z^f_\ell = \xi_\ell \cdot \left( f_\star(x_\ell, a_\ell) - f(x_\ell, a_\ell) \right) $. The sequence $Z_t$ forms a martingale difference sequence such that $\EE_\ell\left[\left(Z^f_\ell\right)^2\right] = \sigma_\ell^2 \cdot (f(x_\ell, a_\ell) - f_\star(x_\ell, a_\ell))^2$ and Assumption~\ref{assumption::boundedness} implies $| Z_\ell^f | \leq B^2$ for all $\ell \in \mathbb{N}$.

We can use Freedman inequality (see for example Lemma~\ref{lemma::freedman_anytime_simplified} in Appendix~\ref{section::supeporting_results}) to bound this term and show that with probability at least $1-\delta'$ for all $t \in \mathbb{N}$, 
\begin{align*}
\sum_{\ell=1}^{t-1} \xi_\ell \cdot \left( f(x_\ell, a_\ell) - f_\star(x_\ell, a_\ell) \right) &\leq \eta \cdot \left(\sum_{\ell=1}^{t-1} \sigma_\ell^2 \cdot \left( f(x_\ell, a_\ell) - f_\star(x_\ell, a_\ell) \right)^2 \right)+ \frac{C\log(t/\delta')}{\eta} \\
&\stackrel{(i)}{\leq} \frac{1}{4} \sum_{\ell=1}^{t-1}   \left( f(x_\ell, a_\ell) - f_\star(x_\ell, a_\ell) \right)^2  + 4C B^2 \log(t/\delta').
\end{align*}
Where inequality $(i)$ follows from setting $\eta = \frac{1}{4B^2} $ and noting that $\sigma_\ell \leq B^2$ for all $\ell$. Finally, setting $\delta' = \frac{\delta}{|\F|}$ and considering a union bound over all $f \in \mathcal{F}$ we conclude that,
\begin{equation*}
    \sum_{\ell=1}^{t-1} \xi_\ell \cdot \left(  f_\star(x_\ell, a_\ell) - f_t(x_\ell, a_\ell) \right) \leq \frac{1}{4} \sum_{\ell=1}^{t-1}   \left( f_t(x_\ell, a_\ell) - f_\star(x_\ell, a_\ell) \right)^2  + 4C B^2 \log(t\cdot |\F|/\delta).
\end{equation*}
Plugging this inequality into equation~\ref{equation::support_ineq_LS_one} and rearranging terms yields,
\begin{equation*}
     \sum_{\ell=1}^{t-1} (f_t(x_\ell, a_\ell) - f_\star(x_\ell, a_\ell))^2 \leq 4C B^2 \log(t\cdot |\F|/\delta).
\end{equation*}

\end{proof}

\section{Proofs of Section~\ref{section::second_order_known_variance}}

\varianceawareleastsquares*

\begin{proof}

Given $f \in \F$ we consider a martingale difference sequence $Z_\ell^f $ for $\ell  \in \mathbb{N}$ defined as,
\begin{equation*}
    Z_\ell^f = \left( f(x_\ell, a_\ell) - f_\star(x_\ell, a_\ell)  \right) \cdot  \mathbf{1}( \omega(x_\ell, a_\ell, \G_\ell) \leq \tau ) \cdot \mathbf{1}( f \in \G_\ell) \cdot \xi_\ell 
\end{equation*}

First let's see that
\begin{equation*}
|Z_\ell^f| \leq \min(\tau B, B^2) \quad \forall \ell \in \mathbb{N}.
\end{equation*}
To see this we recognize two cases, first when $f \not\in\G_\ell$ in which case $Z_\ell^\tau = 0$. When $f \in \G_\ell$, we also recognize two cases. When $\mathbf{1}(   \omega(x_\ell, a_\ell, \G_\ell) \leq \tau   )  = 0$  the random variable $Z_\ell^f = 0$ . When  $f \in \G_\ell$, and $\omega(x_\ell, a_\ell, \G_\ell) \leq \tau$ , it follows that $| f(x_\ell, a_\ell) - f_\star(x_\ell, a_\ell)| \leq \tau$. Finally since $|\xi_\ell| \leq B$ we conclude $|Z_\ell^f| \leq \min(\tau B, B^2)$.

The conditional variance of the martingale difference sequence$\{ Z_\ell^f\}_\ell$ can be upper bounded as
\begin{align*}
\Var_\ell(Z_\ell^f) &= \mathbb{E}_\ell[(Z_\ell^f)^2] \\
&=\sigma_\ell^2(f(x_\ell, a_\ell)-f_\star(x_\ell, a_\ell))^2 \mathbf{1}( \omega(x_\ell, a_\ell, \G_\ell) \leq \tau ) \cdot \mathbf{1}( f \in \G_\ell)\\
&\stackrel{(i)}{\leq} \widetilde\sigma^2(f(x_\ell, a_\ell)-f_\star(x_\ell, a_\ell))^2 \mathbf{1}( \omega(x_\ell, a_\ell, \G_\ell) \leq \tau ) \cdot \mathbf{1}( f \in \G_\ell)
\end{align*}
where inequality $(i)$ follows because of $\sigma_\ell^2 \leq \widetilde{\sigma}^2$.

We now invoke Lemma~\ref{lemma:super_simplified_freedman} applied to the martingale difference sequence $\{ Z_\ell^f \}_{\ell=1}^{t-1}$. In this case $R = \tau B$ and we'll set $\eta = \frac{1}{\min(\tau B, B^2) + 4\widetilde\sigma^2} \leq \frac{1}{R}$. Thus,
\begin{align}
\sum_{\ell=1}^{t-1} Z_\ell^f &\leq \frac{1}{\min(\tau B, B^2) + 4 \sigma^2} \sum_{\ell=1}^{t-1} \widetilde\sigma^2(f(x_\ell, a_\ell)-f_\star(x_\ell, a_\ell))^2 \mathbf{1}( \omega(x_\ell, a_\ell, \G_\ell) \leq \tau ) \cdot \mathbf{1}( f \in \G_\ell) +\notag \\
&\quad  (\min(\tau B, B^2) + 4\widetilde\sigma^2) \log( | \F|/\widetilde\delta)\notag \\
&\leq \frac{1}{ 4 } \sum_{\ell=1}^{t-1} (f(x_\ell, a_\ell)-f_\star(x_\ell, a_\ell))^2 \mathbf{1}( \omega(x_\ell, a_\ell, \G_\ell) \leq \tau ) \cdot \mathbf{1}( f \in \G_\ell) + (\min(\tau B, B^2) + 4\widetilde\sigma^2)  \log( | \F|/\widetilde\delta) \label{equality::freedman_inequality_thresholded_regression}
\end{align}
with probability at least $1-\frac{\widetilde\delta}{|\F|}$. A union bound implies the same inequality holds for all $f \in \F$ simultaneously with probability at least $1-\widetilde\delta$. Let's call this event $\mathcal{B}$. We have just shown that $\mathbb{P}(\mathcal{B}) \geq 1-\tilde \delta$.  In particular when $\mathcal{B}$ holds, inequality~\ref{equality::freedman_inequality_thresholded_regression} is also satisfied for $f = f_t^\tau$. When $\E_t$ holds $f_t^\tau \in \G_{t-1}$ then $\mathbf{1}(f_t^\tau \in \G_\ell)=1~$ for all\footnote{This is where the definition of $G_{t-1}$ as an intersection of all previous confidence sets is important. The intersection ensures that for any $\tau$ the minimizer of the filtered least squares is achieved at an $f_t^{\tau}$ for which the inidicator $\mathbf{1}( f \in \G_\ell)=1$ is true for all $\ell \leq t-1$.}  $\ell \leq t-1$ and therefore,
\begin{equation}\label{equation::martingale_equals_rhs}
\sum_{\ell=1}^{t-1}  Z_\ell^{f_t^\tau} = \sum_{\ell=1}^{t-1} \left( f_t^\tau(x_\ell, a_\ell) - f_\star(x_\ell, a_\ell)  \right) \cdot \xi_\ell \cdot  \mathbf{1}( \omega(x_\ell, a_\ell, \G_\ell) \leq \tau ) .
\end{equation}

We proceed by subtituting the definition of $r_\ell = f_\star(x_\ell, a_\ell) + \xi_\ell$ in equation~\ref{equation::definition_f_t_tau} and noting that when $\E_t$ holds $f_\star \in \G_{t-1}$, so that  $f_t^\tau$, the minimizer of the uncertainty filtered least squares loss satisfies,
\begin{align*}
    \sum_{\ell=1}^{t-1} \left( f_t^\tau(x_\ell, a_\ell) - f_\star(x_\ell, a_\ell) - \xi_\ell   \right)^2 \mathbf{1}( \omega(x_\ell, a_\ell, \G_\ell) \leq \tau ) &\leq  \sum_{\ell=1}^{t-1} \xi_\ell^2 \mathbf{1}( \omega(x_\ell, a_\ell, \G_\ell) \leq \tau ) 
\end{align*}

expanding the left hand side of the inequality above and rearranging terms yields, 
\begin{align}
\sum_{\ell=1}^{t-1} \left( f_t^\tau(x_\ell, a_\ell) - f_\star(x_\ell, a_\ell)  \right)^2 \mathbf{1}( \omega(x_\ell, a_\ell, \G_\ell) \leq \tau ) \leq  2\sum_{\ell=1}^{t-1} \left( f_t^\tau(x_\ell, a_\ell) - f_\star(x_\ell, a_\ell)  \right) \cdot \xi_\ell \cdot  \mathbf{1}( \omega(x_\ell, a_\ell, \G_\ell) \leq \tau ) \label{equation::second_order_first_regression_ineq}
\end{align}
To bound the right hand side of the inequality above we plug inequality~\ref{equality::freedman_inequality_thresholded_regression} and equality~\ref{equation::martingale_equals_rhs} into equation~\ref{equation::second_order_first_regression_ineq} we conclude that when $\mathcal{B} \cap \mathcal{E}_t$ holds,
\begin{align*}    \sum_{\ell=1}^{t-1} \left( f_t^\tau(x_\ell, a_\ell) - f_\star(x_\ell, a_\ell)  \right)^2 \mathbf{1}( \omega(x_\ell, a_\ell, \G_\ell) \leq \tau ) \leq 2\sum_{\ell=1}^{t-1} \left( f_t^\tau(x_\ell, a_\ell) - f_\star(x_\ell, a_\ell)  \right) \cdot \xi_\ell \cdot  \mathbf{1}( \omega(x_\ell, a_\ell, \G_\ell) \leq \tau ) \leq ~ \qquad  \qquad  \qquad  \qquad  \qquad  \\
  \frac{1}{2} \sum_{\ell=1}^{t-1} \left( f_t^\tau(x_\ell, a_\ell) - f_\star(x_\ell, a_\ell)  \right)^2 \mathbf{1}( \omega(x_\ell, a_\ell, \G_\ell) \leq \tau )  + (2\min(\tau B, B^2) + 8\widetilde\sigma^2)  \log( | \F|/\widetilde\delta) \qquad  \qquad  \qquad  \qquad  \qquad \qquad  
\end{align*}
rearranging terms we conclude that when $\mathcal{B} \cap \mathcal{E}_t$ is satisfied,
\begin{equation*}
    \sum_{\ell=1}^{t-1} \left( f_t^\tau(x_\ell, a_\ell) - f_\star(x_\ell, a_\ell)  \right)^2 \mathbf{1}( \omega(x_\ell, a_\ell, \G_\ell) \leq \tau ) \leq (4\min(\tau B, B^2)+ 16\widetilde\sigma^2)  \log( | \F|/\widetilde\delta)
\end{equation*}
with probability at least $\mathbb{P}( \mathcal{B} \cap \E_t) \geq \mathbb{P}(\E_t) - \widetilde\delta$. This finalizes the result. %

\end{proof}

\begin{proposition}[Intersection Result]\label{propposition::set_theory_result}
Let $A,B,C$ be three sets such that,
\begin{equation*}
\mathbb{P}(A \cap B) \geq \mathbb{P}(A) - \delta_1, \quad \mathbb{P}(A \cap C) \geq \mathbb{P}(A) - \delta_2
\end{equation*}
then $\mathbb{P}(A \cap B \cap C)\geq \mathbb{P}(A) -\delta_1 - \delta_2$.
\end{proposition}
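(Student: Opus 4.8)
The plan is to pass to complements relative to $A$ and apply a union bound. First I would rewrite each hypothesis as a bound on the part of $A$ on which the other event fails. Since $\mathbb{P}(A) = \mathbb{P}(A \cap B) + \mathbb{P}(A \cap B^c)$, the assumption $\mathbb{P}(A \cap B) \geq \mathbb{P}(A) - \delta_1$ is equivalent to $\mathbb{P}(A \cap B^c) \leq \delta_1$, and likewise $\mathbb{P}(A \cap C) \geq \mathbb{P}(A) - \delta_2$ is equivalent to $\mathbb{P}(A \cap C^c) \leq \delta_2$. This reformulation is the natural one because the conclusion we want, $\mathbb{P}(A \cap B \cap C) \geq \mathbb{P}(A) - \delta_1 - \delta_2$, is itself equivalent to $\mathbb{P}(A \cap (B \cap C)^c) \leq \delta_1 + \delta_2$.

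The key step is the set identity $A \cap (B \cap C)^c = (A \cap B^c) \cup (A \cap C^c)$, which follows from De Morgan's law $(B \cap C)^c = B^c \cup C^c$ together with distributivity of intersection over union. Applying subadditivity of the measure to this union gives
\begin{equation*}
\mathbb{P}\bigl(A \cap (B \cap C)^c\bigr) \leq \mathbb{P}(A \cap B^c) + \mathbb{P}(A \cap C^c) \leq \delta_1 + \delta_2.
\end{equation*}

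Finally, since $A \cap B \cap C$ and $A \cap (B \cap C)^c$ partition $A$, I would write $\mathbb{P}(A \cap B \cap C) = \mathbb{P}(A) - \mathbb{P}(A \cap (B \cap C)^c) \geq \mathbb{P}(A) - \delta_1 - \delta_2$, which is exactly the claim. There is no real obstacle here: the statement is elementary and the only points requiring minor care are the correct handling of the De Morgan/distributivity identity and the fact that the two ``bad'' sets are taken relative to $A$ (so that their probabilities are controlled by the hypotheses rather than by unconditional bounds on $B^c$ and $C^c$). This lemma is precisely the tool one needs to combine Proposition~\ref{proposition::variance_aware_least_squares_proposition}-type guarantees, each of which holds on an event of the form $A \cap (\cdot)$ up to a small failure probability, into a single guarantee on $A \cap B \cap C$.
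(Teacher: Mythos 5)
Your proof is correct and follows essentially the same route as the paper's: both convert each hypothesis into a bound on $\mathbb{P}(A\setminus B)=\mathbb{P}(A\cap B^c)$ and $\mathbb{P}(A\setminus C)=\mathbb{P}(A\cap C^c)$, apply subadditivity to $A\setminus(B\cap C)=(A\setminus B)\cup(A\setminus C)$, and convert back. Your write-up is just slightly more explicit about the De Morgan/distributivity identity; there is nothing to correct.
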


\begin{proof}
Notice that $\mathbb{P}( A \cap B) \geq \mathbb{P}(A) - \delta_1$ is equivalent to $\mathbb{P}( A \backslash B ) \leq \delta_1$. Similarly $\mathbb{P}( A \cap C) \geq \mathbb{P}(A) - \delta_2$ is equivalent to $\mathbb{P}( A \backslash C) \leq \delta_2$. Therefore, 
\begin{equation*}
    \mathbb{P}(  A \backslash [ B \cap C] ) \leq \mathbb{P}( A\backslash B ) + \mathbb{P}( A \backslash C)  \leq \delta_1 + \delta_2.
\end{equation*}
Finally, this is equivalent to the statement $\mathbb{P}( A \cap B \cap C  ) \geq \mathbb{P}( A ) - \delta_1 - \delta_2 $.

\end{proof}

\confidencesetsoptimism*

\begin{proof}
Applying the results of Proposition~\ref{proposition::variance_aware_least_squares_proposition} setting $\widetilde \delta = \frac{\delta}{4(i+1)^2t^2}$, $\tau = \tau_i$ for $i \in \{ 1, \cdots, \lceil \log(t) \rceil\}$ and $\widetilde{\sigma} = \sigma$ we conclude that when $\E_t$ holds,
  \begin{equation*}
 f_\star \in \left\{ f\in \F:  \sum_{\ell=1}^{t-1} \left( f_t^\tau(x_\ell, a_\ell) - f(x_\ell, a_\ell)  \right)^2 \mathbf{1}( \omega(x_\ell, a_\ell, \G_\ell) \leq \tau_i ) \leq \beta_t\left( \tau_i, \frac{\delta}{4(i+1)^2t^2}, \sigma^2 \right) \right\}
\end{equation*}
with probability at least $\mathbb{P}( \E_t) -\frac{\delta}{2(i+1)^2t^2}$. Recall that
  \begin{equation*}
 \G_t(\tau_i) = \left\{ f\in \F:  \sum_{\ell=1}^{t-1} \left( f_t^\tau(x_\ell, a_\ell) - f(x_\ell, a_\ell)  \right)^2 \mathbf{1}( \omega(x_\ell, a_\ell, \G_\ell) \leq \tau_i ) \leq \beta_t\left( \tau_i, \frac{\delta}{4(i+1)^2t^2}, \sigma^2 \right) \right\} \cap \G_{t-1}(\tau_i)
\end{equation*}

Thus, we conclude that $f_\star \in \G_t(\tau_i)$ with probability at least  $\mathbb{P}(\E_t)-\frac{\delta}{4(i+1)^2t^2}$. Proposition~\ref{propposition::set_theory_result} and a union bound over all $i \in \{0\} \cup [q_t]$ we conclude that $f_\star \in \G_t$ for all $t \in \mathbb{N}$ with probability at least $\mathbb{P}(\E_t) - \frac{\delta}{2t^2}$. And therefore $\mathbb{P}(\E_{t+1}) \geq \mathbb{P}(\E_t) -\frac{\delta}{2t2}$. Finally, since $\mathbb{P}(\E_1) =1 $ we conclude that $\mathbb{P}(  \cap_{t=1}^\infty \E_t ) \geq 1-\delta$.

Optimism is an immediate consequence of the previous result. When $f_\star \in \G_t$, it follows that $f_\star(x_t,a) \leq \max_{f \in \G_t, a'\in\A} f(x_t, a') = U_t(x_t,a_t)$ for any $a\in \A$. And therefore $\max_{a \in \A} f_\star(x_t,a) \leq U_t(x_t, a_t)$.  

\end{proof}

\section{Proofs of Section~\ref{section::unknown_variance_contextual}}

In this section we list the proofs of Section~\ref{section::unknown_variance_contextual}. These are split in two subsections. In Section~\ref{section::proofs_section_estimating_variance} we present the proofs of Section~\ref{section::estimating_variance}.  In Section~\ref{section::proofs_unknown_variance_bounds} we present the proofs of Section~\ref{section::unknown_variance_bounds}.

\subsection{Proofs of Section~\ref{section::estimating_variance_contextual} }\label{section::proofs_section_estimating_variance}

\filteredleastsquares*

\begin{proof}
Substituting $y_\ell = f_\star(x_\ell, a_\ell) + \xi_\ell$ into the definition of $f_t$ we obtain the following inequalities,
\begin{align*}
\sum_{\ell=1}^{t-1} b_\ell \cdot (f^{\mathbf{b}_t}_t(x_\ell, a_\ell) - y_\ell)^2 &\leq \sum_{\ell=1}^{t-1} b_\ell \cdot (f_\star(x_\ell, a_\ell) - y_\ell)^2 = \sum_{\ell=1}^{t-1} b_\ell \cdot \xi_\ell^2
\end{align*}
substituting again the definition of $y_\ell$ on the left hand side of the inequality above and rearranging terms yields, 
\begin{equation}\label{equation::support_ineq_LS_one_filtered}
    \sum_{\ell=1}^{t-1} b_\ell \cdot (f_t^{\mathbf{b}_t}(x_\ell, a_\ell) - f_\star(x_\ell, a_\ell))^2 \leq 2\sum_{\ell=1}^{t-1} \xi_\ell \cdot b_\ell \cdot \left(  f_\star(x_\ell, a_\ell) - f^{\mathbf{b}_t}_t(x_\ell, a_\ell) \right)
\end{equation}
We now focus on bounding the RHS of equation~\ref{equation::support_ineq_LS_one_filtered}. 
For any $f \in \F$ let $Z^f_\ell = \xi_\ell \cdot b_\ell \cdot \left( f_\star(x_\ell, a_\ell) - f(x_\ell, a_\ell) \right) $. The sequence $Z_t$ forms a martingale difference sequence such that $\EE_\ell\left[\left(Z^f_\ell\right)^2\right] = \sigma_\ell^2 \cdot b_\ell \cdot (f(x_\ell, a_\ell) - f_\star(x_\ell, a_\ell))^2$ and Assumption~\ref{assumption::boundedness} implies $| Z_\ell^f | \leq B^2$ for all $\ell \in \mathbb{N}$.

We can use a two sided version of Freedman inequality (see for example Lemma~\ref{lemma:super_simplified_freedman} in Appendix~\ref{section::supeporting_results}) to bound this term and show that with probability at least $1-\delta'$ , 
\begin{align*}
\left| \sum_{\ell=1}^{t-1} \xi_\ell\cdot b_\ell \cdot \left( f(x_\ell, a_\ell) - f_\star(x_\ell, a_\ell) \right) \right| &\leq \eta \cdot \left(\sum_{\ell=1}^{t-1} \sigma_\ell^2 \cdot b_\ell  \cdot \left( f(x_\ell, a_\ell) - f_\star(x_\ell, a_\ell) \right)^2 \right)+ \frac{\log(2/\delta')}{\eta} \\
&\stackrel{(i)}{\leq} \frac{1}{4} \sum_{\ell=1}^{t-1}  b_\ell \cdot \left( f(x_\ell, a_\ell) - f_\star(x_\ell, a_\ell) \right)^2  + 4 B^2 \log(2/\delta').
\end{align*}
Where inequality $(i)$ follows from setting $\eta = \frac{1}{4B^2} $ and noting that $\sigma_\ell \leq B^2$ for all $\ell$. Finally, setting $\delta' = \frac{\tilde\delta}{|\F|}$ and considering a union bound over all $f \in \mathcal{F}$ we conclude that,
\begin{equation}\label{equation::martingale_bound_companion}
  \left|   \sum_{\ell=1}^{t-1} \xi_\ell \cdot b_\ell\cdot \left(  f_\star(x_\ell, a_\ell) - f_t^{\mathbf{b}_t}(x_\ell, a_\ell) \right) \right| \leq \frac{1}{4} \sum_{\ell=1}^{t-1} b_\ell \cdot  \left( f_t^{\mathbf{b}_t}(x_\ell, a_\ell) - f_\star(x_\ell, a_\ell) \right)^2  + 4B^2 \log( 2|\F|/\tilde{\delta}).
\end{equation}
Plugging this inequality into equation~\ref{equation::support_ineq_LS_one} and rearranging terms yields,
\begin{equation}\label{equation::least_squares_resulting_bound}
     \sum_{\ell=1}^{t-1} b_\ell \cdot \left(f_t^{\mathbf{b}_t}(x_\ell, a_\ell) - f_\star(x_\ell, a_\ell)\right)^2 \leq 8 B^2 \log( 2|\F|/\tilde{\delta}).
\end{equation}
Moreover, combining equations~\ref{equation::martingale_bound_companion} and~\ref{equation::least_squares_resulting_bound},
\begin{equation*}
\left| \sum_{\ell=1}^{t-1} \xi_\ell \cdot b_\ell\cdot \left(  f_\star(x_\ell, a_\ell) - f_t^{\mathbf{b}_t}(x_\ell, a_\ell) \right) \right|\leq 6B^2 \log( 2|\F|/\tilde{\delta}).
\end{equation*}
\end{proof}

\smallerrorvarianceestimator*

\begin{proof}
This result follows the same template of the proof of Lemma~\ref{lemma::new_least_squares_estimator} which we reproduce here. 
Substituting $r_\ell = f_\star(x_\ell, a_\ell) + \xi_\ell$ in the definition of $W_t^{\mathbf{b}_t}$, 
\begin{align*}
    W_t^{\mathbf{b}_t} &= \sum_{\ell=1}^{t-1} b_\ell \cdot (f_\star(x_\ell, a_\ell) + \xi_t - f_t^{\mathbf{b}_t}(x_\ell, a_\ell))^2  \\
    &= \sum_{\ell=1}^{t-1} b_\ell\cdot (f_\star(x_\ell, a_\ell) -f_t^{\mathbf{b}_t}(x_\ell, a_\ell))^2 + 2 \sum_{\ell=1}^{t-1} b_\ell \cdot \xi_\ell \cdot (f_\star(x_\ell, a_\ell) - f_t^{\mathbf{b}_t}(x_\ell, a_\ell)) + \sum_{\ell=1}^{t-1}b_\ell \cdot \xi_\ell^2.
\end{align*}

Applying Lemma~\ref{lemma::filtered_least_squares_guarantee} we conclude that,
\begin{equation}\label{equation::sandwich_W_t_b_t_sum_b_ell_xi}
W_t^{\mathbf{b}_t} - 12B^2 \log( 4|\F|/\tilde{\delta}) \leq   \sum_{\ell=1}^{t-1}b_\ell \cdot \xi_\ell^2 \leq W_t^{\mathbf{b}_t} + 20 B^2 \log( 4|\F|/\tilde{\delta}) .
\end{equation}
with probability at least $1-\tilde \delta/2$. Using Proposition~\ref{proposition::consequence_freedman_positive_rvs_notanytime} setting $\beta = 1/2$ and $\tilde{B} = B^2$ and $Z_\ell = b_\ell \cdot \xi_\ell^2$,
\begin{equation}\label{equation::sum_b_ell_xi_sandwich}
\frac{1}{2} \sum_{\ell=1}^{t-1} b_\ell \cdot \sigma_\ell^2 - 4 B^2 \log(2/\tilde \delta) \leq \sum_{\ell=1}^{t-1}b_\ell \cdot \xi_\ell^2 \leq \frac{3}{2}\sum_{\ell=1}^{t-1} b_\ell \cdot \sigma_\ell^2 + 4 B^2 \log(2/\tilde\delta)
\end{equation}
with probability at least $1-\tilde \delta /2$. Combining equations~\ref{equation::sandwich_W_t_b_t_sum_b_ell_xi} and~\ref{equation::sum_b_ell_xi_sandwich} with a union bound we conclude,
\begin{equation*}
  \frac{2}{3} \cdot W_t^{\mathbf{b}_t} - 8B^2 \log( 4|\F|/\tilde{\delta}) - \frac{8}{3}\cdot B^2 \log(2/\tilde\delta) \leq \sum_{\ell=1}^{t-1} b_\ell \cdot \sigma_\ell^2 \leq 2W_t^{\mathbf{b}_t} + 40 B^2 \log( 4|\F|/\tilde{\delta}) + 8 B^2 \log(2/\tilde \delta)
\end{equation*}
with probability at least $1-\tilde \delta$. Thus,
\begin{equation*}
    \frac{2}{3} \cdot W_t^{\mathbf{b}_t} - 11B^2 \log( 4|\F|/\tilde{\delta}) \leq \sum_{\ell=1}^{t-1} b_\ell \cdot \sigma_\ell^2 \leq 2W_t^{\mathbf{b}_t} + 48 B^2 \log( 4|\F|/\tilde{\delta}) 
\end{equation*}
with probability at least $1-\tilde \delta$.

\end{proof}

\subsection{Proofs of Section~\ref{section::unknownvarguaranteesalgoknownvar}}\label{section::proofs_uknown_var_version_known}

\theoremunknownvarknownvar*

\begin{proof}
The analysis of the regret of Algorithm~\ref{alg:contextual_known_variance} follows the typical analysis for optimistic algorithms. When $\bar{\E}$ holds optimism implies,
\begin{align*}
    \regret(T) &\leq \sum_{t=1}^T \omega(x_t, a_t, \G_t) 
\end{align*}
In order to bound the right hand side of the inequality above, we split it in $\lceil \log(T) \rceil$ epochs $\T_j$ such that $\T_j = [2^{j-1}+1, \cdots, \min(2^j, T)]$. 
\begin{align*}
   \sum_{t=1}^T \omega(x_t, a_t, \G_t)  \leq \sum_{j=1}^{\lceil \log(T) \rceil} \sum_{t \in \T_j}  \omega(x_t, a_t, \G_t) 
\end{align*}
We proceed to bound the sums $\sum_{t \in \T_j}  \omega(x_t, a_t, \G_t) $ for all epochs $\T_i$. When $\bar{\E}$ holds, and $t \in \T_j$ for some $j \in \lceil \log(T) \rceil$, it follows that if $f, f' \in G_t$ then for each threshold level $i \in \{0\} \cup [q_t]$, 
\begin{align*}
\sum_{\ell \in \T_j \cap [t-1]} \left(f(x_\ell, a_\ell) -f'(x_\ell, a_\ell) \right)^2 \mathbf{1}(  \omega(x_\ell, a_\ell, \G_\ell) \leq \tau_i    )  &\leq    \sum_{\ell=1}^{t-1} \left(f(x_\ell, a_\ell) -f'(x_\ell, a_\ell) \right)^2 \mathbf{1}(  \omega(x_\ell, a_\ell, \G_\ell) \leq \tau_i    )  \\
&\leq \beta_t\left( \tau_i, \frac{\delta}{2(i+1)^2} , \widehat{\sigma}_t^2\right)\\
&\leq \beta_t\left( \tau_i, \frac{\delta}{2(i+1)^2} , \widehat{\sigma}_{\min(\T_j)}^2\right)\\
&=    (4\min(\tau_i B, B^2) + 16\widehat{\sigma}_{\min(\T_j)}^2)  \log(t | \F|/\tilde\delta)\\
&\leq \mathcal{O}\left( \min(\tau_i B, B^2) + \sigma^2 + \frac{B^2 \log(t|\F|/\delta)\cdot\log(t | \F|/\tilde\delta) }{t}\right)
\end{align*}
where the last inequality follows from noting that equation~\ref{equation::sandwich_sigma_hat_t} implies $\widehat{\sigma}_{\min(\T_j)}^2 = \mathcal{O}\left( \sigma^2 + \frac{B^2 \log(t |\F|/\delta)}{t}\right)$. 

Thus the same argument as in the proof of Lemma~\ref{lemma::variance_aware_fundamental_eluder} implies that when $\bar{\E}$ holds, for $\tau \geq \tau_{\max(\T_j)}$, 
\begin{equation*}
    \sum_{t\in \T_j} \mathbf{1}( \omega(x_t, a_t, \G_t) > \tau ) \leq   \mathcal{O}\left(\delud(\F, \tau)\left(  \frac{ B\log(T|\F|/\delta)}{\tau} +  \frac{\sigma^2\log(T|\F|/\delta)}{\tau^2} + \frac{B^2 \log^2( T |\F|/\delta)}{\tau^2\max(\T_j)}  + 1\right)  \right).
\end{equation*}
in particular, for each $t \in \T_j$,

\begin{align}\label{equation::sum_widths_indicators_helper_final}
    \sum_{\ell = \min(\T_j)}^t \mathbf{1}( \omega(x_\ell, a_\ell, \G_\ell) > \tau ) &\leq   \mathcal{O}\left(\delud(\F, \tau)\left(  \frac{ B\log(T|\F|/\delta)}{\tau} +  \frac{\sigma^2\log(T|\F|/\delta)}{\tau^2} + \frac{B^2 \log^2( T |\F|/\delta)}{\tau^2 (t-\min(\T_j)+1)}   \right) \right) 
\end{align}

For the remainder of the argument we will mimic the proof of Lemma~\ref{lemma::bounding_sum_widths}. We'll use the notation $d = \delud(\F, \frac{B}{T})$ and $\omega_t = \omega(x_t, a_t, \G_t)$. We will first order the sequence $\{ w_t\}_{t=1}^T$ in descending order, as $w_{i_1}, \cdots, w_{i_T}$. We have,
\begin{align*}
    \sum_{t=\min(\T_j)}^{\max(\T_j)} \omega_{t} &= \sum_{t=\min(\T_j)}^{\max(\T_j)} \omega_{i_t} = \sum_{t=^{\min(\T_j)}}^{\max(\T_j)} w_{i_t} \mathbf{1}(  w_{i_t} > \frac{B}{T}   ) + \sum_{t=\min(\T_j)}^{\max(\T_j)} w_{i_t} \mathbf{1}(  w_{i_t} \leq \frac{B}{T}   ) \\
    &\leq \frac{B | \T_j|}{T} + \sum_{t=\min(\T_j)}^{\max(\T_j)} \omega_{i_t} \mathbf{1}( w_{i_t} > \frac{B}{T} ) 
\end{align*}
Applying inequality~\ref{equation::sum_widths_indicators_helper_final} setting $\tau = \omega_{i_t} > \frac{B}{T}$ we have that,
\begin{align*}
    t - \min(\T_j) + 1 &\leq   \sum_{\ell=\min(\T_j)}^{t} \mathbf{1}( \omega(x_\ell, a_\ell, \G_\ell) > \omega_{i_t} ) \\
    &\leq  \mathcal{O}\left(\delud(\F, \tau)\left(  \frac{ B\log(T|\F|/\delta)}{\omega_{i_t}} +  \frac{\sigma^2\log(T|\F|/\delta)}{\omega_{i_t}^2} + \frac{B^2 \log^2( T |\F|/\delta)}{\omega_{i_t}^2 t-\min(\T_j)+1)}   \right)   \right)
\end{align*} 
Therefore, 
\begin{align*}
 t - \min(\T_j) + 1 &\leq  \mathcal{O}\left( \delud(\F, B/T) \log(T|\F|/\delta) \cdot \max\left( \frac{B}{\omega_{i_t}} ,   \frac{ \sigma^2}{\omega^2_{i_t}} , \frac{B}{\sqrt{\delud(\F, B/T) } \omega_{i_t}}\right)\right)\\
&\leq \mathcal{O}\left( \delud(\F, B/T) \log(T|\F|/\delta) \cdot \max\left( \frac{B}{\omega_{i_t}} ,   \frac{ \sigma^2}{\omega^2_{i_t}} \right)\right). 
\end{align*}
Thus, for all $t \in \T_j$ it follows that,
\begin{equation*}
\omega_{i_t} \leq  \mathcal{O}\left( \delud(\F, B/T) \log(T|\F|/\delta) \cdot \frac{B}{t - \min(\T_j) + 1} + \sigma \sqrt{ \frac{  \delud(\F, B/T) \log(T|\F|/\delta)}{t - \min(\T_j) + 1} } \right)
\end{equation*}
Finally, we can use this formula to sum over all $t \in \T_j$ for any given $j$,
\begin{equation*}
    \sum_{t\in \T_j} \omega_t \leq \mathcal{O}\left(   B \delud(\F, B/T) \log(|\T_j|)\log(T|\F|/\delta)  + \sigma \sqrt{    \delud(\F, B/T) \log(T|\F|/\delta) |\T_j |}    \right)
\end{equation*}
finally, summing over all $j \in \lceil \log(T) \rceil$ we conclude,
\begin{align*}
    \sum_{t = 1}^T \omega_t &\leq \mathcal{O}\left(  \sum_{j=1}^{\lceil \log(T) \rceil} B \delud(\F, B/T) \log(|\T_j|)\log(T|\F|/\delta)  + \sigma \sqrt{  \delud(\F, B/T) \log(T|\F|/\delta) |\T_j |}    \right)\\
    &\leq \mathcal{O}\left(   B \delud(\F, B/T) \log^2(T)\log(T|\F|/\delta)  + \sigma \sqrt{  \delud(\F, B/T) \log(T|\F|/\delta) T}    \right) 
\end{align*}
\end{proof}

\subsection{Proofs of Section~\ref{section::unknown_variance_bounds}} \label{section::proofs_unknown_variance_bounds}

\varianceawaregeneralleastsquares*

\begin{proof}

Given $f \in \F$ we consider a martingale difference sequence $Z_\ell^f $ for $\ell  \in \mathbb{N}$ defined as,
\begin{equation*}
    Z_\ell^f = \left( f(x_\ell, a_\ell) - f_\star(x_\ell, a_\ell)  \right) \cdot  \mathbf{1}( \omega(x_\ell, a_\ell, \G'_\ell) \in (\tau, 2\tau] ) \cdot \mathbf{1}( f \in \G'_\ell) \cdot \xi_\ell 
\end{equation*}

First let's see that
\begin{equation*}
|Z_\ell^f| \leq \min(2\tau B, B^2) \quad \forall \ell \in \mathbb{N}.
\end{equation*}
To see this we recognize two cases, first when $f \not\in\G_\ell'$ in which case $Z_\ell^\tau = 0$. When $f \in \G_\ell'$, we also recognize two cases. When $\mathbf{1}(   \omega(x_\ell, a_\ell, \G_\ell') \in (\tau, 2\tau] )  = 0$  the random variable $Z_\ell^f = 0$. When  $f \in \G_\ell'$, and $\omega(x_\ell, a_\ell, \G_\ell') \leq 2\tau$, it follows that $| f(x_\ell, a_\ell) - f_\star(x_\ell, a_\ell)| \leq 2\tau$. Finally since $|\xi_\ell| \leq B$ we conclude $|Z_\ell^f| \leq \min(2\tau B, B^2)$.

The conditional variance of the martingale difference sequence $\{ Z_\ell^f\}_\ell$ is upper bounded as
\begin{align*}
\Var_\ell(Z_\ell^f) &= \mathbb{E}_\ell[(Z_\ell^f)^2] \\
&=\sigma_\ell^2(f(x_\ell, a_\ell)-f_\star(x_\ell, a_\ell))^2 \cdot \mathbf{1}( \omega(x_\ell, a_\ell, \G'_\ell) \in (\tau, 2\tau]) \cdot \mathbf{1}( f \in \G'_\ell)\\
&\stackrel{(i)}{\leq} 4\tau^2 \sigma_\ell^2 \mathbf{1}( \omega(x_\ell, a_\ell, \G'_\ell) \in (\tau, 2\tau]) \cdot \mathbf{1}( f \in \G'_\ell)
\end{align*}
where inequality $(i)$ holds because $(f(x_\ell, a_\ell)-f_\star(x_\ell, a_\ell))^2 \cdot \mathbf{1}( \omega(x_\ell, a_\ell, \G'_\ell) \in (\tau, 2\tau]) \cdot \mathbf{1}( f \in \G'_\ell)  \leq 4\tau^2$. Let $\tilde\delta_t = \frac{\tilde\delta}{4t^2}$. We now invoke Lemma~\ref{lemma:super_simplified_freedman_variance} applied to the martingale difference sequence $\{ Z_\ell^f\}$. We conclude that,
\begin{equation}\label{equality::freedman_inequality_thresholded_regression_unknown}
\sum_{\ell=1}^{t'-1} Z_\ell^f \leq 8 \tau \sqrt{  \sum_{\ell=1}^{t'-1} \sigma_\ell^2 \cdot \mathbf{1}\left( \omega(x_\ell, a_\ell, \G'_\ell) \in [\tau, 2\tau)\right) \cdot \mathbf{1}( f \in \G'_\ell) \cdot \ln \frac{12 |\F| \ln 2t'}{\tilde\delta_t} } + 12 \tau B \ln \frac{12 |\F| \ln 2t'}{\tilde \delta}
\end{equation}
with probability at least $1-\frac{\widetilde\delta_t}{|\F|}$ for all $t' \in \mathbb{N}$.
A union bound implies the same inequality holds for $t' = t$ and for all $f \in \F$ simultaneously with probability at least $1-\widetilde\delta_t$. Let's call this event $\mathcal{B}_t$ so that $\mathbb{P}\left( \mathcal{B}_t\right) \geq 1-\tilde \delta$. We have just shown that $\mathbb{P}(\mathcal{B}_t) \geq 1-\tilde \delta_t$.  In particular when $\mathcal{B}_t$ holds, inequality~\ref{equality::freedman_inequality_thresholded_regression_unknown} is also satisfied for $t'= t$ and $f = f_t^{(\tau, 2\tau]}$. When $\E_t'$ holds we have $f_t^{(\tau, 2\tau]} \in \G_{t-1}'$ then $\mathbf{1}(f_t^{(\tau, 2\tau]} \in \G_\ell')=1~$ for all\footnote{This is where the definition of $\G_t$ as an intersection of all confidence sets becomes important. The intersection ensures that for any $\tau$ the minimizer of the filtered least squares is achieved at an $f_t^{\tau}$ for which the inidicator $\mathbf{1}( f \in \G_\ell)=1$ is true for all $\ell \leq t-1$.}  $\ell \leq t-1$ and therefore,
\begin{equation}\label{equation::martingale_equals_rhs_unknown}
\sum_{\ell=1}^{t-1}  Z_\ell^{f_t^{(\tau, 2\tau]}} = \sum_{\ell=1}^{t-1} \left( f_t^{(\tau, 2\tau]}(x_\ell, a_\ell) - f_\star(x_\ell, a_\ell)  \right) \cdot \xi_\ell \cdot  \mathbf{1}( \omega(x_\ell, a_\ell, \G_\ell') \in (\tau, 2\tau]  ) .
\end{equation}
We proceed by subtituting the definition of $r_\ell = f_\star(x_\ell, a_\ell) + \xi_\ell$ in equation~\ref{equation::definition_f_t_tau} and noting that when $\widetilde{\E}_t'$ holds $f_\star \in \G_{t-1}'$, so that  $f_t^{(\tau, 2\tau]}$, the minimizer of the uncertainty-filtered empirical least squares loss satisfies,
\begin{align*}
    \sum_{\ell=1}^{t-1} \left( f_t^{(\tau, 2\tau]}(x_\ell, a_\ell) - f_\star(x_\ell, a_\ell) - \xi_\ell   \right)^2 \mathbf{1}( \omega(x_\ell, a_\ell, \G_\ell') \in (\tau, 2\tau]  ) &\leq  \sum_{\ell=1}^{t-1} \xi_\ell^2 \mathbf{1}( \omega(x_\ell, a_\ell, \G_\ell') \in (\tau, 2\tau]  ) 
\end{align*}

expanding the left hand side of the inequality above and rearranging terms yields, 
\begin{align}
\sum_{\ell=1}^{t-1} \left( f_t^{(\tau, 2\tau]}(x_\ell, a_\ell) - f_\star(x_\ell, a_\ell)  \right)^2 \mathbf{1}( \omega(x_\ell, a_\ell, \G_\ell') \in (\tau, 2\tau] ) \qquad \qquad \qquad \qquad \qquad \qquad  \notag\\
\leq 2\sum_{\ell=1}^{t-1} \left( f_t^{(\tau, 2\tau]}(x_\ell, a_\ell) - f_\star(x_\ell, a_\ell)  \right) \cdot \xi_\ell \cdot  \mathbf{1}( \omega(x_\ell, a_\ell, \G_\ell') \in (\tau, 2\tau]  ) \label{equation::second_order_first_regression_ineq_unknown}
\end{align}

To bound the right hand side of the inequality above we plug inequality~\ref{equality::freedman_inequality_thresholded_regression_unknown} (setting $t' = t$) into equation~\ref{equation::second_order_first_regression_ineq_unknown} to conclude that when $\mathcal{B}_t \cap \widetilde{\E}_t'$ holds,
\begin{small}
\begin{align*}    \sum_{\ell=1}^{t-1} \left( f_t^{(\tau, 2\tau]}(x_\ell, a_\ell) - f_\star(x_\ell, a_\ell)  \right)^2 \mathbf{1}( \omega(x_\ell, a_\ell, \G'_\ell) \in (\tau, 2\tau] ) &\leq 2\sum_{\ell=1}^{t-1} \left( f_t^{(\tau, 2\tau]}(x_\ell, a_\ell) - f_\star(x_\ell, a_\ell)  \right) \cdot \xi_\ell \cdot  \mathbf{1}( \omega(x_\ell, a_\ell, \G'_\ell) \in (\tau, 2\tau] )   \notag\\
& \leq 16 \tau \sqrt{  \sum_{\ell=1}^{t-1} \sigma_\ell^2 \cdot \mathbf{1}\left( \omega(x_\ell, a_\ell, \G_\ell') \in (\tau, 2\tau]\right) \cdot \ln \frac{12 |\F| \ln 2t}{\tilde\delta_t} } +\\
&\quad24 \tau B \ln \frac{12 |\F| \ln 2t}{\tilde \delta_t}
\end{align*}
\end{small}
We will call Corollary~\ref{corollary::variance_cumulative_estimation_coarse} setting $\delta' = \tilde \delta_t$ and the filtering indicator variables equal to $b^\tau_\ell = \mathbf{1}\left( \omega(x_\ell, a_\ell, \G_\ell') \in (\tau, 2\tau]\right)$. 
Let's call $\mathcal{C}_t$ denote the event that 
\begin{equation}\label{equation::sandwich_filtered_variances}
    \frac{2}{3} \cdot W_{t'}^{\mathbf{b}^\tau_{t'}} - C \cdot B^2 \log( t'|\F|/\tilde \delta_t) \leq  \widebar{W}_{t'}^{\mathbf{b}^\tau_{t'}} := \sum_{\ell=1}^{t'-1} \sigma_\ell^2 \cdot \mathbf{1}\left( \omega(x_\ell, a_\ell, \G_\ell') \in (\tau, 2\tau]\right) \leq 2W_{t'}^{\mathbf{b}^\tau_{t'}} + C \cdot B^2 \log( t'|\F|/\tilde \delta_t) 
\end{equation}
 for all $t' \in \mathbb{N}$ (and in particular true for $t' = t$) so that $\mathbb{P}\left( \mathcal{C}_t\right) \geq 1-\tilde \delta_t$. We conclude that for any $t \in \mathbb{N}$ when $\mathcal{B}_t \cap \mathcal{C}_t \cap \widetilde{\E}_t'$, 
 \begin{align*}
      \sum_{\ell=1}^{t-1} \left( f_t^\tau(x_\ell, a_\ell) - f_\star(x_\ell, a_\ell)  \right)^2 \mathbf{1}( \omega(x_\ell, a_\ell, \G_\ell') \in (\tau, 2\tau] )
& \leq 16 \tau \sqrt{  \left( 2W_t^{\mathbf{b}^\tau_t} + C \cdot B^2 \log( t|\F|/\tilde \delta)  \right) \cdot \ln \frac{12 |\F| \ln 2t}{\tilde\delta_t} } +\\
&\quad24 \tau B \ln \frac{12 |\F| \ln 2t}{\tilde \delta_t} \\
&= \mathcal{O}\left(\tau \sqrt{ W_t^{\mathbf{b}^\tau_t} \log\left( t|\F|/\tilde \delta\right) }  + \tau B \log\left( t|\F|/\tilde \delta_t\right) \right) \\
&= C'\tau \sqrt{ W_t^{\mathbf{b}^\tau_t} \log\left( t|\F|/\tilde \delta\right) }  + C' \tau B \log\left( t|\F|/\tilde \delta_t\right)
 \end{align*}
 for some universal constant $C' > 0$ (independent of $t$). Similarly, as a consequence of the LHS of equation~\ref{equation::sandwich_filtered_variances}, for all $t$ when $\mathcal{B}_t \cap \mathcal{C}_t \cap \widetilde{\E}_t'$ holds,
\begin{equation*}
    C'\tau \sqrt{ W_t^{\mathbf{b}^\tau_t} \log\left( t|\F|/\tilde \delta\right) }  + C' \tau B \log\left( t|\F|/\tilde \delta\right) \leq  C^{''}\tau \sqrt{ \widebar{W}_t^{\mathbf{b}^\tau_t} \log\left( t|\F|/\tilde \delta\right) }  + C^{''} \tau B \log\left( t|\F|/\tilde \delta\right)
\end{equation*}
for some universal constant $C^{''} > 0$ (independent of $t$). We finalize the result by noting that $ \mathcal{B}_t \cap \mathcal{C}_t \cap \widetilde{\E}_t' \subseteq \mathcal{W}_t(\tau) \cap \widetilde{\E}_t' $ and that $\mathbb{P}( \mathcal{B}_t \cap \mathcal{C}_t \cap \widetilde{\E}_t') \geq \mathbb{P}(\widetilde{\E}_t') - 2\widetilde\delta_t = \mathbb{P}(\widetilde{\E}_t') - \frac{\tilde\delta}{2t^2} $ and therefore that $\mathcal{P}(\mathcal{W}_t(\tau) \cap \widetilde{\E}_t') \geq \mathbb{P}( \mathcal{B}_t \cap \mathcal{C}_t \cap \widetilde{\E}_t') \geq \mathbb{P}(\widetilde{\E}_t') - 2\widetilde\delta_t = \mathbb{P}(\widetilde{\E}_t') - \frac{\tilde\delta}{2t^2} $. Finally, this implies $\mathbb{P}(\widetilde{\E}_t' \cap \left( \mathcal{W}_t(\tau) \right)^c  ) \leq \frac{\tilde \delta}{2t^2}$. 
\end{proof}

\optimismgeneralvarianceupperboundvar*

\begin{proof}
Applying Proposition~\ref{proposition::variance_aware_general_least_squares_proposition} with $\widetilde{\E}_t' = \E_t$, $\tau= \tau_i$ and $\tilde \delta = \frac{\delta}{2i^2}$ we conclude that for any $i \in [q_t] $, the event $\mathcal{W}_t(\tau_i)$ satisfies $\mathbb{P}(\left(\mathcal{W}_t(\tau_i) \right)^c \cap \E_t'  ) \leq \frac{\delta}{4i^2t^2}  $. A union bound over all $i \in [q_t]$, we conclude that events $\{ \mathcal{W}_t(\tau_i)\}_{i \in [q_t]}$ satisfy, 

\begin{equation}\label{equation::residual_event_inequality}
\mathbb{P} \left(  \left(\cap_{i \in  [q_t]} \mathcal{W}_t(\tau_i) \right)^c \cap \E_t' \right) \leq \sum_{i \in  [q_t]} \mathbb{P}\left(\left(\mathcal{W}_t(\tau_i) \right)^c \cap \E_t' \right) \leq \frac{\delta}{2t^2}
\end{equation}

Notice that when $\mathcal{W}_t(\tau_i) \cap \E_t'$ holds, $f_\star \in \G_t'(\tau_i)$ for $i \in [q_t]$. And therefore when $\left(\cap_{i \in [q_t]} \mathcal{W}_t(\tau_i) \right) \cap \E_t' $ holds, $f_\star \in \G_t'$. Thus, $\left(\cap_{i \in [q_t]} \mathcal{W}_t(\tau_i) \right) \cap \E_t'  \subseteq \E_{t+1}' $.

Define a sequence of events $\V_t$ as $\V_0=\Omega$ (the whole sample space such that $\mathbb{P}(\V_0) = 1$), $\V_t =  \cap_{\ell=1}^t \left(\cap_{i \in  [q_t]} \W_\ell(\tau_i)\right)$. Notice that by definition $\V_1 \supseteq \V_2 \supseteq \cdots $.

Notice that $f_\star \in \G'_0 = \F$ so that $\mathbb{P}\left(\E_1' \right) = 1$ . This combined with $\left(\cap_{i \in  [q_t]} \mathcal{W}_t(\tau_i) \right) \cap \E_t'  \subseteq \E_{t+1}' $ implies $\cap_{\ell=1}^t  \left(\cap_{i \in  [q_t]} \mathcal{W}_\ell(\tau_i) \right)  \subseteq \E_{t+1}' $ so that $\V_t \subseteq \E_{t+1}$ for all $t$.

Applying Proposition~\ref{proposition::variance_aware_general_least_squares_proposition} with $\widetilde{\E}_t' = \V_{t-1}$, $\tau= \tau_i$ and $\tilde \delta = \frac{\delta}{2i^2}$ we conclude that for any $i \in  [q_t] $, the event $\mathcal{W}_t(\tau_i)$ satisfies $\mathbb{P}(\left(\mathcal{W}_t(\tau_i) \right)^c \cap \V_{t-1}  ) \leq \frac{\delta}{4i^2t^2}  $. A union bound over all $i \in  [q_t]$, we conclude that events $\{ \mathcal{W}_t(\tau_i)\}_{i \in   [q_t]}$ satisfy, %

\begin{equation}\label{equation::residual_event_inequality_Vs}
\mathbb{P} \left(  \left(\cap_{i \in  [q_t]} \mathcal{W}_t(\tau_i) \right)^c \cap \V_{t-1} \right) \leq \sum_{i \in  [q_t]} \mathbb{P}\left(\left(\mathcal{W}_t(\tau_i) \right)^c \cap \V_{t-1} \right) \leq \frac{\delta}{2t^2}
\end{equation}

Since $\V_t' = \left[ \left(\cap_{i \in [q_t]} \mathcal{W}_t(\tau_i) \right) \cap \V_{t-1} \right] \cup \left[ \left(\cap_{i \in [q_t]} \mathcal{W}_t(\tau_i) \right)^c \cap \V_{t-1} \right]$ we conclude that \begin{equation}\label{equation::lower_bound_V_tp1}
    \mathbb{P}\left( \V_{t} \right)  \geq \mathbb{P}\left( \left(\cap_{i \in  [q_t]} \mathcal{W}_t(\tau_i) \right) \cap \V_{t-1}  \right) = \mathbb{P}(\V_{t-1} ) - \mathbb{P} \left(  \left(\cap_{i \in  [q_t]} \mathcal{W}_t(\tau_i) \right)^c \cap \V_{t-1} \right) \geq \mathbb{P}(\V_{t-1} ) -  \frac{\delta}{2t^2}.
\end{equation}

Since $\mathbb{P}(\V_0) = 1$, unrolling inequality~\ref{equation::lower_bound_V_tp1} implies that for any $m \in \mathbb{N}$,

\begin{equation*}
\mathbb{P}\left( \V_{m}\right) = \mathbb{P}\left(   \cap_{m' \leq {m}} \V_{m'}   \right)  \geq 1-\sum_{\ell=1}^{m} \frac{\delta}{2\ell^2}.
\end{equation*}
Thus, taking the limit we conclude that $ \cap_{t=1}^\infty \V_t$ holds with probability at least $\lim_{t\rightarrow \infty} \left(1-\sum_{\ell=1}^{t} \frac{\delta}{2\ell^2} \right) \geq 1-\delta$. %

Since $\cap_{t=1}^\infty \E_t' \supseteq \cap_{t=1}^\infty \V_t$, we also conclude that $\cap_{t=1}^\infty \E_t'$ holds with probability at least $1-\delta$ (when $\cap_{t=1}^\infty \V_t$ holds). 

Thus we conclude that for all $i \in  [q_t]$,
\begin{align}
 C'\tau_i \sqrt{ W_t^{\mathbf{b}^{\tau_i}_t} \log\left( 2i^2t|\F|/ \delta\right) }  + C' \tau_i B \log\left( 2i^2 t|\F|/ \delta\right)  \qquad \qquad \qquad \qquad \qquad \notag\\
\leq C^{''}\tau_i \sqrt{ \widebar{W}_t^{\mathbf{b}^{\tau_i}_t} \log\left( 2i^2 t|\F|/ \delta\right) }  + C^{''} \tau_i B \log\left( 2i^2t|\F|/ \delta\right).\label{equation::filtered_least_squares_union_bound_q_sandwich}
\end{align}

and $f_\star \in \G_{t}'$ is satisfied for all $t \in \mathbb{N}$ simultaneously with probability at least $1-\delta$. Optimism holds because when $f_\star \in \G_t'$, $f_\star(x_t, a) \leq \max_{f \in \G_t'} f(x_t, a) \leq U_t(x_t, a_t)$ for all $a \in\A$ and therefore $\max_{a \in \A} f_\star(x_t, a) \leq U_t(x_t, a_t)$.  Moreover when inequality~\ref{equation::filtered_least_squares_union_bound_q_sandwich} is satisfied, 

\begin{align*}
\G_t'(\tau_i)= \Big\{ f \in \F \text{ s.t. }   \sum_{\ell=1}^{t-1} \left( f_t^{[\tau_i, 2\tau_i)}(x_\ell, a_\ell) - f(x_\ell, a_\ell)  \right)^2 \mathbf{1}( \omega(x_\ell, a_\ell, \G_\ell) \in (\tau_i, 2\tau_i] ) \leq \qquad \qquad \qquad \qquad \qquad \qquad \qquad \qquad \\    C'\tau_i \sqrt{ W_t^{\mathbf{b}^{\tau_i}_t} \log\left( 2i^2t|\F|/ \delta\right) }  + C' \tau_i B \log\left( 2i^2 t|\F|/ \delta\right) \Big\}  \qquad \qquad \qquad \qquad \qquad \\
\subseteq \Big\{ f \in \F \text{ s.t. }   \sum_{\ell=1}^{t-1} \left( f_t^{(\tau_i, 2\tau_i]}(x_\ell, a_\ell) - f(x_\ell, a_\ell)  \right)^2 \mathbf{1}( \omega(x_\ell, a_\ell, \G_\ell) \in (\tau_i, 2\tau_i] ) \leq \qquad \qquad \qquad \qquad \qquad \qquad \qquad \qquad \\    C^{''}\tau_i \sqrt{ \widebar{W}_t^{\mathbf{b}^{\tau_i}_t} \log\left( 2i^2 t|\F|/ \delta\right) }  + C^{''} \tau_i B \log\left( 2i^2t|\F|/ \delta\right) \Big\}.\qquad \qquad\qquad 
   \end{align*}
This finalizes the proof. 
\end{proof}

\mainunknownvartheorem*

\begin{proof}
Let's condition on $\E'$. When this event holds, optimism ensures the pseudo-regret can be upper bounded by the sum of the widths,
\begin{align*}
    \regret(T)
    &\leq \sum_{t=1}^T \omega(x_t, a_t, \G_t'). 
\end{align*}
This is the same argument as in the proof of Theorem~\ref{theorem::main_theorem_simplified_known_var}.

 Recall that $\tau_i = \frac{B}{2^i}$. In order to bound the RHS of this inequality, we use the fact that $\omega(x_t, a_t, \F) \leq B$ for all $t \in \mathbb{N}$,

\begin{align*}
    \sum_{t=1}^T \omega(x_t, a_t, \G_t') &\leq \frac{T \cdot B}{2^q} + \sum_{i=1}^q \left(\sum_{t=1}^T \mathbf{1}\left( \omega(x_t, a_t, \G_t') \in  \left(\frac{B}{2^{i}}, \frac{B}{2^{i-1}} \right]\right)  \cdot \frac{B}{2^{i-1}} \right)\\
    &=\frac{T \cdot B}{2^q} + \sum_{i=1}^q \left(\sum_{t=1}^T \mathbf{1}\left( \omega(x_t, a_t, \G_t') \in  (\tau_i, 2\tau_i ]\right)  \cdot 2\tau_i \right) \\
    & =\frac{T \cdot B}{2^q} +2 \sum_{i=1}^q \tau_i\cdot \left(\sum_{t=1}^T \mathbf{1}\left( \omega(x_t, a_t, \G_t') \in  (\tau_i, 2\tau_i ]\right)    \right) 
\end{align*}

Lemma~\ref{lemma::variance_aware_fundamental_eluder_unknown_var} implies that when $\E'$ holds, 

\begin{align}
     \sum_{t=1}^T \omega(x_t, a_t, \G_t')  &\leq \frac{T \cdot B}{2^q} +2 \sum_{i=1}^q \tau_i\cdot \left(\sum_{t=1}^T \mathbf{1}\left( \omega(x_t, a_t, \G_t') \in  (\tau_i, 2\tau_i ]\right)    \right) \notag \\
     &\leq \frac{T \cdot B}{2^q} +2 \sum_{i=1}^q \tau_i\cdot \Big( \frac{\widetilde{C} \cdot \delud(\F, \tau_i)}{\tau_i} \sqrt{ \widebar{W}_{T}^{\mathbf{b}^{\tau_i}_{T}} \log\left( (i+1) T|\F|/ \delta\right) }  + \notag \\ 
     &\quad \frac{\widetilde{C}\cdot B\cdot \delud(\F, \tau_i) }{\tau_i}  \log\left( (i+1)T|\F|/ \delta\right)   + \widetilde{C}\cdot \delud(\F, \tau_i)  \Big) \notag \\
     &=\frac{T \cdot B}{2^q} + 2 \widetilde{C} \sum_{i=1}^q \Big(  \delud(\F, \tau_i) \sqrt{ \widebar{W}_{T}^{\mathbf{b}^{\tau_i}_{T}} \log\left( (i+1) T|\F|/ \delta\right) }  + \notag \\ 
     &\quad B\cdot \delud(\F, \tau_i)   \log\left( (i+1)T|\F|/ \delta\right)   + \tau_i \delud(\F, \tau_i)  \Big) \notag \\
     &\leq \frac{T \cdot B}{2^q}  + 2 \widetilde{C} \delud\left(\F, \frac{B}{2^q}\right) \Big[ \sum_{i=1}^q  \sqrt{ \widebar{W}_{T}^{\mathbf{b}^{\tau_i}_{T}} \log\left( (q+1) T|\F|/ \delta\right) } +  \notag\\
     &\quad B   \log\left( (q+1)T|\F|/ \delta\right)  + \tau_i    \Big ] \label{equation::upper_bound_omegas_sum_partial}
\end{align}
Notice that,
\begin{align}
    \sum_{i=1}^q  \sqrt{ \widebar{W}_{T}^{\mathbf{b}^{\tau_i}_{T}} \log\left( (q+1) T|\F|/ \delta\right) } &= \sqrt{\log\left( (q+1) T|\F|/ \delta\right) } \left( \sum_{i=1}^q \sqrt{ \widebar{W}_{T}^{\mathbf{b}^{\tau_i}_{T}} }\right) \notag\\
    &\stackrel{(i)}{\leq} \sqrt{q\log\left( (q+1) T|\F|/ \delta\right) } \sqrt{  \sum_{i=1}^q  \widebar{W}_{T}^{\mathbf{b}^{\tau_i}_{T}} } \notag\\
    &\leq  \sqrt{ \left(\sum_{t=1}^T \sigma_t^2 \right)q\log\left( (q+1) T|\F|/ \delta\right) } \label{equation::upper_bound_sum_sqrt_Wfilter}
\end{align}
where inequality $(i)$ holds because $\sum_{i=1}^q \sqrt{z_i} \leq \sqrt{q \sum_{i=1}^q z_i} $ for $z_1, \cdots, z_q \geq 0$. Plugging inequality~\ref{equation::upper_bound_sum_sqrt_Wfilter} into~\ref{equation::upper_bound_omegas_sum_partial} and using the fact that $q = \log(T)$ we conclude that

\begin{align*}
    \sum_{t=1}^T \omega(x_t, a_t, \G_t')   &\leq B + 2 \widetilde{C} \delud\left(\F, \frac{B}{T}\right)\sqrt{ \left(\sum_{t=1}^T \sigma_t^2 \right)\log(T)\log\left( (\log(T)+1) T|\F|/ \delta\right) } +\\
    &\quad 2 \widetilde{C}B \delud\left(\F, \frac{B}{T}\right) \log(T)\log((\log(T) + 1)T|\F|/\delta) + 2\widetilde{C}B \delud\left(\F, \frac{B}{T}\right)\\
    &=\mathcal{O}\left(  \delud\left(\F, \frac{B}{T}\right)\sqrt{ \left(\sum_{t=1}^T \sigma_t^2 \right)\log(T)\log\left(  T|\F|/ \delta\right) } + B \delud\left(\F, \frac{B}{T}\right) \log(T)\log(T|\F|/\delta)  \right)
\end{align*}
Using the fact that $\mathbb{P}(\E')\geq 1-\delta $ finalizes the proof.

\end{proof}

\section{Eluder Lemmas}\label{section::eluder_lemmas}

In this section we have compiled all Lemmas that deal with eluder dimension arguments.

\mainlemmavarianceeludertwidthssum*

\begin{proof}
For simplicity we'll use the notation $d = \delud(\F, \tau_i) $ .

Define $\mathcal{I}_T^{(\tau_i, 2\tau_i]} = \left\{ \ell \leq T \text{ s.t. } \omega(x_\ell, a_\ell, \G_\ell ) \in (\tau_i, 2\tau_i]\right\}$. And in order to refer to each of its component indices let's write $\mathcal{I}_T^{(\tau_i, 2\tau_i]} = \left\{\ell_1, \cdots, \ell_{|\mathcal{I}_T^{(\tau_i, 2\tau_i]}|}\right\}$ with $\ell_1 < \ell_2 \cdots < \ell_{|\mathcal{I}_T^{(\tau_i, 2\tau_i]}|}$. 

Let $N \in \mathbb{N} \cup \{0\}$. We'll start by showing that if $| \mathcal{I}_T^{(\tau_i, 2\tau_i]} | >  d(N+1)$ then there exists an index $m \in \left[|\mathcal{I}_T^{(\tau_i, 2\tau_i]} |\right]$ and a pair of functions $f_{\ell_m}^{(1)}, f_{\ell_m}^{(2)} \in \G_{\ell_m}$ such that $f_{\ell_m}^{(1)}(x_{\ell_m}, a_{\ell_m}) - f_{\ell_m}^{(2)}(x_{\ell_m}, a_{\ell_m}) \in (\tau_i, 2\tau_i]$ and $N+1$ non-empty disjoint subsets $\widetilde{S}_1, \cdots, \widetilde{S}_{N+1} \subseteq \{\ell_j \}_{j \leq m-1}$ such that,
\begin{equation*}
\| f_{\ell_m}^{(1)} - f_{\ell_m}^{(2)} \|_{\widetilde S_j}^2 := \sum_{\ell \in \widetilde{S}_j} \left(f_{\ell_m}^{(1)}(x_\ell, a_\ell) - f_{\ell_m}^{(2)}(x_\ell, a_\ell)  \right)^2> \tau_i^2.
\end{equation*}
To prove this result, let's start building the sequence $\widetilde{S}_1, \cdots, \widetilde{S}_{N+1}$ by setting $\widetilde{S}_j = \ell_j$ for all $j =1, \cdots, N+1$. Let's look at $m = N+2, \cdots, | \mathcal{I}_T^{(\tau_i, 2\tau_i]}|$. 
Consider a pair of functions $f_{\ell_m}^{(1)}, f_{\ell_m}^{(2)} \in \G_{\ell_m}$ such that $f_{\ell_m}^{(1)}( x_m, a_m) - f_{\ell_m}^{(2)}(x_{\ell_m}, a_{\ell_m})  > \tau_i $. If $\| f_{\ell_m}^{(1)} - f_{\ell_m}^{(2)}\|^2_{\widetilde{S}_j}  > \tau_i^2$ for all $j=1,\cdots, N+1$ the result follows. Otherwise there exists at least one $j$ such that $\| f_{\ell_m}^{(1)} - f_{\ell_m}^{(2)}\|^2_{\widetilde{S}_j} \leq \tau_i^2$. Thus, we can add $\ell_m$ to $\widetilde{S}_j$. 

Finally, by construction, the $\widetilde{S}_j$ sets satisfy the definition of eluder $\tau_i$-independence and therefore they must satisfy $| \widetilde{S}_j| \leq d$ for all $j$. 

This means the process of expanding the sets $\{\widetilde{S}_j\}_{j=1}^{N+1}$ must `fail' at most after all $\widetilde{S}_j$ have $d$ elements. Thus $| \mathcal{I}_T^{(\tau_i, 2\tau_i]} | > d(N+1)$ guarantees this will occur.

As we have shown, if $|\mathcal{I}_T^{(\tau_i, 2\tau_i]}| > d(N+1)$, there exists an index $m \in [ |\mathcal{I}_T^{(\tau_i, 2\tau_i]}| ]$, disjoint subsets $\widetilde S_1, \cdots, \widetilde S_{N+1} \subseteq \{ \ell_j \}_{j \leq m-1}$ and $f_{\ell_m}^{(1)}, f_{\ell_m}^{(2)} \in \G_{\ell_m}$ such that,
\begin{equation}\label{equation::supporting_lower_bound_ineq_dif_prime}
    (N+1)\tau_i^2 < \sum_{j \in [N+1]} \| f_{\ell_m}^{(1)} - f_{\ell_m}^{(2)}\|^2_{\widetilde{S}_j} \leq \| f_{\ell_m}^{(1)} - f_{\ell_m}^{(2)}\|^2_{\{\ell_j\}_{j=1}^{m-1}}.
\end{equation}

On the other hand, since  $f_{\ell_m}^{(1)}, f_{\ell_m}^{(2)} \in \G_{\ell_m}' \subseteq \G_{\ell_m}'(\tau_{i})$, we have that if $\E'$ holds Lemma~\ref{lemma::optimism_general_variance_upper_bound_variance} implies, 
\begin{align*}
\| f_{\ell_m}^{(j)} - f_{\ell_m}^{(\tau_i, 2\tau_i]}\|^2_{ \{\ell_j\}_{j=1}^{m-1}} 
&= \sum_{\ell=1}^{\ell_{m-1}} \left( f_{\ell_m}^{[\tau_i, 2\tau_i)}(x_\ell, a_\ell) - f_{\ell_m}^{(j)}(x_\ell, a_\ell)  \right)^2 \mathbf{1}( \omega(x_\ell, a_\ell, \G_\ell') \in [\tau_i, 2\tau_i) ) \\
&\leq  C^{''}\tau_i \sqrt{ \widebar{W}_{\ell_m}^{\mathbf{b}^{\tau_i}_{\ell_m}} \log\left( 2i^2 \ell_m|\F|/ \delta\right) }  + C^{''} \tau_i B \log\left( 2i^2\ell_m|\F|/ \delta\right)  \\
&\stackrel{(i)}{\leq} C^{''}\tau_i \sqrt{ \widebar{W}_{T}^{\mathbf{b}^{\tau_i}_{T}} \log\left( 2i^2 T|\F|/ \delta\right) }  + C^{''} \tau_i B \log\left( 2i^2T|\F|/ \delta\right)
\end{align*}
for $j \in \{1,2\}$. Where inequality $(i)$ holds because $\ell_m \leq T$ and because $\widebar{W}_t^{b_t^{\tau_i}}$ is monotonic w.r.t. $t$ for all $i \in \{0\} \cup [q]$ (recall $\widebar{W}_t^{\mathbf{b}^{\tau_i}_t} := \sum_{\ell=1}^{t-1} \sigma_\ell^2 \cdot \mathbf{1}\left( \omega(x_\ell, a_\ell, \G'_\ell) \in [\tau_i, 2\tau_i)\right)$). Therefore when $\E'$ holds,
\begin{align}
    \| f_{\ell_m}^{(1)} - f_{\ell_m}^{(2)}\|^2_{ \{\ell_j\}_{j=1}^{m-1}} &\leq  2\| f_{\ell_i}^{(1)} - f_{\star}\|^2_{ \{\ell_j\}_{j=1}^{m-1}}+ 2 \| f_{\star} - f_{\ell_i}^{(2)}\|^2_{ \{\ell_j\}_{j=1}^{m-1}} \notag\\
    & \leq 4\cdot\left( C^{''}\tau_i \sqrt{ \widebar{W}_{T}^{\mathbf{b}^{\tau_i}_{T}} \log\left( 2i^2 T|\F|/ \delta\right) }  + C^{''} \tau_i B \log\left( 2i^2T|\F|/ \delta\right) \right) \label{equation::supporting_lower_bound_ineq_dif_two_prime}
\end{align}

For simplicity within the context of this proof let's use the notation $$\widetilde{\beta}_{T}(\tau_{i}, \delta) :=  C^{''}\tau_i \sqrt{ \widebar{W}_{T}^{\mathbf{b}^{\tau_i}_{T}} \log\left( 2i^2 T|\F|/ \delta\right) }  + C^{''} \tau_i B \log\left( 2i^2T|\F|/ \delta\right) .$$  Thus combining inequalities~\ref{equation::supporting_lower_bound_ineq_dif_prime} and~\ref{equation::supporting_lower_bound_ineq_dif_two_prime} we conclude that if $N \geq \max\left(\frac{4\widetilde{\beta}_{T}(\tau_{i}, \delta)}{\tau_i^2}-1, 0\right)$ then $\tau_i^2 (N+1) \geq 4\widetilde{\beta}_{T}(\tau_{i}, \delta)$ and therefore we would incur in a contradiction because 
\begin{align*}
    4\widetilde{\beta}_{T}(\tau_{i}, \delta)\leq (N+1) \tau^2 <  \| f_{\ell_i}^{(1)} - f_{\ell_i}^{(2)}\|^2_{\{\ell_j\}_{j=1}^{m-1}} 
    \leq 4\widetilde{\beta}_{T}(\tau_{i}, \delta).
\end{align*}
This implies 
\begin{align}
    | \mathcal{I}_T^{(\tau_i, 2\tau_i]} | &\leq d\left(\max\left(\frac{4\widetilde{\beta}_{T}(\tau_{i}, \delta)}{\tau_i^2}, 0\right) +1 \right)\notag\\
    &\leq d\left(\frac{4\widetilde{\beta}_{T}(\tau_{i}, \delta)}{\tau_i^2}+1 \right) \notag\\
    &= \frac{4dC^{''}}{\tau_i} \sqrt{ \widebar{W}_{T}^{\mathbf{b}^{\tau_i}_{T}} \log\left( 2i^2 T|\F|/ \delta\right) }  + \frac{4dC^{''}B }{\tau_i}  \log\left( 2i^2T|\F|/ \delta\right)   + d \notag\\
    &= \mathcal{O}\left( \frac{\delud(\F, \tau_i)}{\tau_i} \sqrt{ \widebar{W}_{T}^{\mathbf{b}^{\tau_i}_{T}} \log\left( i T|\F|/ \delta\right) }  + \frac{B\cdot \delud(\F, \tau_i) }{\tau_i}  \log\left( iT|\F|/ \delta\right)   + \delud(\F, \tau_i)   \right) \notag %
\end{align}

Since we have shown this result for an arbitrary $T \in \mathbb{N}$ and to prove the result we have  only used that $T \in \mathbb{N}$ the result follows.

\end{proof}

\varianceknownawarehelpereluder*

\begin{proof}
For simplicity we'll use the notation $d = \delud(\F, \tau) $. 
We'll start by showing the following bound,

\begin{equation*}
    \sum_{t=1}^T \mathbf{1}( \omega(x_t, a_t, \G_t) \in (\tau, 2\tau] ) \leq    d\left(  \frac{64C B\log(T|\F|/\delta)}{\tau} +  \frac{64C \sigma^2\log(T|\F|/\delta)}{\tau^2} + 1  \right) 
\end{equation*}

Define $\mathcal{I}_T^{(\tau, 2\tau]} = \left\{ \ell \leq T \text{ s.t. } \omega(x_\ell, a_\ell, \G_\ell ) \in (\tau, 2\tau]\right\}$. And in order to refer to each of its component indices let's write $\mathcal{I}_T^{(\tau, 2\tau]} = \left\{\ell_1, \cdots, \ell_{|\mathcal{I}_T^{(\tau, 2\tau]}|}\right\}$ with $\ell_1 < \ell_2 \cdots < \ell_{|\mathcal{I}_T^{(\tau, 2\tau]}|}$. 

Let $N \in \mathbb{N} \cup \{0\}$. We'll start by showing that if $| \mathcal{I}_T^{(\tau, 2\tau]} | >  d(N+1)$ then there exists an index $i \in \left[|\mathcal{I}_T^{(\tau, 2\tau]} |\right]$ and a pair of functions $f_{\ell_i}^{(1)}, f_{\ell_i}^{(2)} \in \G_{\ell_i}$ such that $f_{\ell_i}^{(1)}(x_{\ell_i}, a_{\ell_i}) - f_{\ell_i}^{(2)}(x_{\ell_i}, a_{\ell_i}) \in (\tau, 2\tau]$ and $N+1$ non-empty disjoint subsets $\widetilde{S}_1, \cdots, \widetilde{S}_{N+1} \subseteq \{\ell_j \}_{j \leq i-1}$ such that,
\begin{equation*}
\| f_{\ell_i}^{(1)} - f_{\ell_i}^{(2)} \|_{\widetilde S_j}^2 := \sum_{\ell \in \widetilde{S}_j} \left(f_{\ell_i}^{(1)}(x_\ell, a_\ell) - f_{\ell_i}^{(2)}(x_\ell, a_\ell)  \right)^2> \tau^2.
\end{equation*}
To prove this result, let's start building the sequence $\widetilde{S}_1, \cdots, \widetilde{S}_{N+1}$ by setting $\widetilde{S}_j = \ell_j$ for all $j =1, \cdots, N+1$. Let's look at $m = N+2, \cdots, | \mathcal{I}_T^{(\tau, 2\tau]}|$. 
Consider a pair of functions $f_{\ell_m}^{(1)}, f_{\ell_m}^{(2)} \in \G_{\ell_m}$ such that $f_{\ell_m}^{(1)}( x_m, a_m) - f_{\ell_m}^{(2)}(x_{\ell_m}, a_{\ell_m})  > \tau $. If $\| f_{\ell_m}^{(1)} - f_{\ell_m}^{(2)}\|^2_{\widetilde{S}_j}  > \tau^2$ for all $j=1,\cdots, N+1$ the result follows. Otherwise there exists at least one $j$ such that $\| f_{\ell_m}^{(1)} - f_{\ell_m}^{(2)}\|^2_{\widetilde{S}_j} \leq \tau^2$. Thus, we can add $\ell_m$ to $\widetilde{S}_j$. 

Finally, by construction, the $\widetilde{S}_j$ sets satisfy the definition of eluder $\tau$-independence and therefore they must satisfy $| \widetilde{S}_j| \leq d$ for all $j$. 

This means the process of expanding the sets $\{\widetilde{S}_j\}_{j=1}^{N+1}$ must `fail' at most after all $\widetilde{S}_j$ have $d$ elements. Thus when $| \mathcal{I}_T^{(\tau, 2\tau]} | > d(N+1)$ guarantees this will occur.  
Let $\widehat{i} = \argmin \{ j \text{ s.t. } \tau_j \geq 2\tau\}$ be the smallest index in the input thresholds for Algorithm~\ref{alg:contextual_known_variance} such that $\tau_{\widehat{i}} \geq 2\tau$. Notice that $4\tau \geq \tau_{\widehat{i}}$.  

As we have shown, if $|\mathcal{I}_T^{(\tau, 2\tau]}| > d(N+1)$, there exists an index $i \in [ |\mathcal{I}_T^{(\tau, 2\tau]}| ]$ and $f_{\ell_i}^{(1)}, f_{\ell_i}^{(2)} \in \G_{\ell_i}$ such that,
\begin{equation}\label{equation::supporting_lower_bound_ineq_dif}
    (N+1)\tau^2 < \sum_{j \in [N+1]} \| f_{\ell_i}^{(1)} - f_{\ell_i}^{(2)}\|^2_{\widetilde{S}_j} \leq \| f_{\ell_i}^{(1)} - f_{\ell_i}^{(2)}\|^2_{\mathcal{I}_T^{(\tau, 2\tau]}\cap \{\ell_j\}_{j=1}^{i-1}}.
\end{equation}
On the other hand, since $f_{\ell_i}^{(1)}, f_{\ell_i}^{(2)} \in \G_{\ell_i} \subseteq \G_{\ell_i}(\tau_{\widehat{i}})$, we have that %
\begin{equation*}
\mathcal{I}_{T}^{(\tau_{\widehat{i}})} = \left\{  \ell \leq T \text{ s.t. } \omega(x_\ell, a_\ell, \G_\ell) \leq \tau_{\widehat{i}} \right\} 
\end{equation*}
satisfies  $\mathcal{I}_T^{(\tau, 2\tau]} \subseteq \mathcal{I}_T^{(\tau_{\widehat{i}})}$ and therefore,
\begin{equation}\label{equation::supporting_lower_bound_ineq_dif_two}
\| f_{\ell_i}^{(1)} - f_{\ell_i}^{(2)}\|^2_{\mathcal{I}_T^{(\tau, 2\tau]}\cap \{\ell_j\}_{j=1}^{i-1}} \leq \| f_{\ell_i}^{(1)} - f_{\ell_i}^{(2)}\|^2_{\mathcal{I}_T^{(\tau_{\widehat{i}})}\cap \{\ell_j\}_{j=1}^{i-1}}
\end{equation}
Proposition~\ref{proposition::variance_aware_least_squares_proposition} implies %
\begin{align*}
    \| f_{\ell_i}^{(1)} - f_{\ell_i}^{(2)}\|^2_{\mathcal{I}_T^{(\tau_{\widehat{i}})}\cap \{\ell_j\}_{j=1}^{i-1}} &\leq  2\| f_{\ell_i}^{(1)} - f_t^{(\tau_{\hat i})}\|^2_{\mathcal{I}_T^{(\tau_{\widehat{i}})}\cap \{\ell_j\}_{j=1}^{i-1}}+ 2 \| f_t^{(\tau_{\hat i})} - f_{\ell_i}^{(2)}\|^2_{\mathcal{I}_T^{(\tau_{\widehat{i}})}\cap \{\ell_j\}_{j=1}^{i-1}}\\
    & \leq 4\beta_{\ell_i}(\tau_{\widehat{i}}, \delta) \\
    &\leq 4\beta_{T}(\tau_{\widehat{i}}, \delta) \\
    &=  4 \cdot (4\min(\tau_{\widehat{i}} B, B^2) + 16\sigma^2)  \log(T | \F|/\delta) \\
    &\leq  4\cdot (4\tau_{\widehat{i}} B + 16\sigma^2)  \log(T | \F|/\delta) \\
    &\leq 64  \cdot( \tau B + \sigma^2) \log(T|\F|/\delta)
\end{align*}
when $\E$ holds. %

Thus combining inequalities~\ref{equation::supporting_lower_bound_ineq_dif} and~\ref{equation::supporting_lower_bound_ineq_dif_two} we conclude that if $N \geq \max\left(\frac{4\beta_{T}(\tau_{\widehat{i}}, \delta)}{\tau^2}-1, 0\right)$ then $\tau^2 (N+1) \geq 4\beta_{T}(\tau_{\widehat{i}}, \delta)$ and therefore we would incur in a contradiction because 
\begin{equation*}
    4\beta_{T}(\tau_{\widehat{i}}, \delta) \leq (N+1) \tau^2 <  \| f_{\ell_i}^{(1)} - f_{\ell_i}^{(2)}\|^2_{\mathcal{I}_T^{(\tau_{\widehat{i}})}\cap \{\ell_j\}_{j=1}^{i-1}}  \leq 4\beta_{T}(\tau_{\widehat{i}}, \delta).
\end{equation*}
This implies 
\begin{align}
    | \mathcal{I}_T^{(\tau, 2\tau]} | &\leq d\left(\max\left(\frac{4\beta_{T}(\tau_{\widehat{i}}, \delta)}{\tau^2}, 0\right) +1 \right)\notag\\
    &\leq d\left(\frac{4\beta_{T}(\tau_{\widehat{i}}, \delta)}{\tau^2}+1 \right) \notag\\
    &\leq d\left(  \frac{64 B\log(T|\F|/\delta)}{\tau} +  \frac{64 \sigma^2\log(T|\F|/\delta)}{\tau^2} + 1  \right) \label{equation::upper_bound_insandwich}
\end{align}

Recall that $\tau_j = \tau_0 \cdot 2^{-j}$. Observe that $I_T^{(\tau, \tau_0]} \leq I_T^{(\tau,2\tau]} + \sum_{j=1}^{\widehat{i}+1}   I_T^{(\tau_j,\tau_{j-1}]}$. We will apply the result in equation~\ref{equation::upper_bound_insandwich} to each of these quantities and focus on bounding $\sum_{j=1}^{\widehat{i}+1}   I_T^{(\tau_j,\tau_{j-1}]}$. Equation~\ref{equation::upper_bound_insandwich} along with the inequality $  \delud(\F, \tau_{j}) \leq d = \delud(\F, \tau)$ for all $j \leq \widehat{i} +1$ implies we can focus on controlling $\sum_{j=1}^{\widehat{i}+ 1} \frac{1}{\tau_j} $ and  $\sum_{j=1}^{\widehat{i}+ 1} \frac{1}{\tau^2_j}$. We proceed to bound these terms. 
\begin{equation*}
\sum_{j=1}^{\widehat{i}+1} \frac{1}{\tau_j} = \frac{1}{\tau_0} \sum_{j=1}^{\widehat{i}+1} 2^j  = \frac{2}{\tau_0} \sum_{j=0}^{\widehat{i}} 2^j = \frac{2}{\tau_0} (2^{\widehat{i}+1}-1)  \leq \frac{2}{\tau_{\widehat{i}+1}}  
\end{equation*}
similarly 
\begin{equation*}
\sum_{j=1}^{\widehat{i}+1} \frac{1}{\tau^2_j} = \frac{1}{\tau^2_0} \sum_{j=1}^{\widehat{i}+1} 2^{2j}  = \frac{4}{\tau^2_0} \sum_{j=0}^{\widehat{i}} 2^{2j} = \frac{4}{\tau^2_0} \cdot \left(\frac{2^{2(\widehat{i}+1)}-1}{3}\right)  \leq \frac{4}{3\tau_0^2} \cdot 2^{2(\widehat{i}+1)} \leq \frac{2}{\tau^2_{\widehat{i}+1}}
\end{equation*}
Finally, since $\tau \leq \tau_{\widehat{i}+1}$, we have $\frac{2}{\tau_{\widehat{i}+1}}   \leq \frac{2}{\tau}  $ and $\frac{2}{\tau^2_{\widehat{i}+1}} \leq \frac{2}{\tau^2} $. 

Combining these results we conclude that,
\begin{equation*}
\left|\mathcal{I}^{(\tau, \tau_0]}_T \right| \leq   3\cdot d\left(  \frac{64 B\log(T|\F|/\delta)}{\tau} +  \frac{64 \sigma^2\log(T|\F|/\delta)}{\tau^2} + 1  \right) 
\end{equation*}
the result follows 

\end{proof}

\boundingsumwidthsknownvar*

\begin{proof}
For simplicity we'll use the notation $d = \delud(\F, \frac{B}{T})$ and $\omega_t = \omega(x_t, a_t, \G_t)$. We will first order the sequence $\{ w_t\}_{t=1}^T$ in descending order, as $w_{i_1}, \cdots, w_{i_T}$. We have,
\begin{align*}
    \sum_{t=1}^T \omega_{t} = \sum_{t=1}^T \omega_{i_t} = \sum_{t=1}^T w_{i_t} \mathbf{1}(  w_{i_t} > \frac{B}{T}   ) + \sum_{t=1}^T w_{i_t} \mathbf{1}(  w_{i_t} \leq \frac{B}{T}   )  \leq B + \sum_{t=1}^T \omega_{i_t} \mathbf{1}( w_{i_t} > \frac{B}{T} ) 
\end{align*}
Applying Lemma~\ref{lemma::variance_aware_fundamental_eluder} by setting $\tau = \omega_{i_t} > \frac{B}{T}$ we have that,
\begin{align*}
    t &\leq  \sum_{\ell=1}^T \mathbf{1}( \omega(x_\ell, a_\ell, \G_\ell) > \omega_{i_t} ) \\
    &\leq  3\cdot \delud(\F, \omega_{i_t})\left(  \frac{64C B\log(T|\F|/\delta)}{\omega_{i_t}} +  \frac{64C \sigma^2\log(T|\F|/\delta)}{\omega^2_{i_t}} + 1  \right) \\
    &\leq  3\cdot \delud(\F, B/T)\left(  \frac{64C B\log(T|\F|/\delta)}{\omega_{i_t}} +  \frac{64C \sigma^2\log(T|\F|/\delta)}{\omega^2_{i_t}} + 1  \right) \\
    &\stackrel{(i)}{\leq} 6\cdot \delud(\F, B/T)\left(  \frac{64C B\log(T|\F|/\delta)}{\omega_{i_t}} +  \frac{64C \sigma^2\log(T|\F|/\delta)}{\omega^2_{i_t}}   \right)
\end{align*}
where the removal of the $+1$ term in step $(i)$ follows because $\omega_{i_t}  > B/T$.  Therefore, 
\begin{equation*}
t \leq 768 C\cdot \delud(\F, B/T) \log(T|\F|/\delta) \cdot \max\left( \frac{B}{\omega_{i_t}} ,   \frac{ \sigma^2}{\omega^2_{i_t}} \right). 
\end{equation*}
This inequality can be used to produce a bound for $\omega_{i_t}$ when $\omega_{i_t} > B/T$, 
\begin{equation*}
\omega_{i_t} \leq  768 C\cdot \delud(\F, B/T) \log(T|\F|/\delta) \cdot \frac{B}{t} + \sigma \sqrt{ \frac{ 768 C\cdot \delud(\F, B/T) \log(T|\F|/\delta)}{t} }
\end{equation*}
Since $\sum_{t=1}^T \frac{1}{t} \leq 2\log(T)+1$ and $\frac{1}{\sqrt{t}} \leq 2\sqrt{T}$ we have that %
\begin{align*}
    \sum_{t=1}^T \omega_{i_t} \mathbf{1}( w_{i_t} > \frac{B}{T} )  \leq  \qquad \qquad \qquad \qquad \qquad \qquad \qquad \qquad \qquad \qquad \qquad \qquad \qquad \qquad \qquad \qquad \qquad  \\
\qquad \qquad     1536 C B \delud(\F, B/T) \log(T)\log(T|\F|/\delta)  + \sigma \sqrt{  1536 C\cdot \delud(\F, B/T) \log(T|\F|/\delta) T }
\end{align*}

\end{proof}

\end{document}